\documentclass[11pt,letterpaper]{article}
\PassOptionsToPackage{table}{xcolor}
\usepackage{labreport}
\reportlab{}
\reporttype{}
\reportnumber{}
\reportdate{}
\reportshorttitle{ROAD: Modern Adversarial Training for One-Step Distillation}
\reportcolors{183153}{167D8D}
\reportlinkcolor{2455A4}
\usepackage{etoolbox}
\patchcmd{\maketitle}{\vspace{14pt}}{\vspace{11pt}}{}{\PackageError{ROAD}{Title spacing patch failed}{}}
\patchcmd{\maketitle}{\vspace{18pt}}{\vspace{14pt}}{}{\PackageError{ROAD}{Title spacing patch failed}{}}
\pretocmd{\maketitle}{\vspace*{16pt}}{}{\PackageError{ROAD}{Title balance patch failed}{}}
\patchcmd{\abstract}{\fontsize{10.5}{13.5}}{\fontsize{11}{14.5}}{}{\PackageError{ROAD}{Abstract typography patch failed}{}}

\def\plaineqref#1{\ref{#1}}

\def\1{\bm{1}}

\DeclareMathAlphabet{\mathsfit}{\encodingdefault}{\sfdefault}{m}{sl}
\SetMathAlphabet{\mathsfit}{bold}{\encodingdefault}{\sfdefault}{bx}{n}

\usepackage{makecell,placeins,pifont}

\definecolor{roadhl}{RGB}{255,243,205}
\definecolor{revtext}{RGB}{0,0,0}
\definecolor{hlblue}{RGB}{0,0,0}
\definecolor{scopegreen}{RGB}{0,0,0}
\definecolor{darkF7E0D5}{RGB}{209,154,128}
\newcommand{\scopechange}[1]{#1}

\newcommand{\rownumber}[1]{\textcolor{darkF7E0D5}{#1}}
\newcommand{\cmark}{\ding{51}}
\newcommand{\xmark}{\ding{55}}
\newcommand{\mcite}[2]{\hyperlink{cite.#2}{#1}\nocite{#2}}

\title{ROAD: Modern Adversarial Training for Simple One-Step Distillation at Scale}
\author{Xiaoou Li\textsuperscript{1}\qquad
  Shan You\textsuperscript{2}\qquad
  Tao Huang\textsuperscript{1}}
\reportaffiliations{\textsuperscript{1}Shanghai Jiao Tong University\quad(\href{https://sjtu.edu.cn}{sjtu.edu.cn})\\[3pt]
  \textsuperscript{2}SenseTime\quad(\href{https://sensetime.com}{sensetime.com})}
\reportauthornote{Code: \url{https://github.com/Lixiaoou729/ROAD}}

\AtBeginDocument{\hypersetup{pdfauthor={Xiaoou Li, Shan You, Tao Huang}}}

\begin{document}

\maketitle

\begin{abstract}
	One-step distillation turns pretrained diffusion models into fast generators,
	but strong results often require substantial training machinery and compute.
	{\color{hlblue}Pure adversarial distillation offers a simpler route, yet has struggled at
	modern text-to-image scale. To address this limitation, we propose ROAD (Representation-grounded
	One-step Adversarial Distillation), a one-stage framework that distills
	text-to-image models with one adversarial objective and a lightweight
	representation-space critic. First, we replace the classical convolutional
	head with a Lipschitz-controlled Transformer head
	with token-level text conditioning, which matches the semantic tokens of modern visual encoders. Second, we observed a fixed
	random feature mixing leads to long-horizon degradation. We attribute it to
	generator--critic co-adaptation around one realization-specific view, and
	periodically re-draw the mixing to keep the randomization effective.
	Third, for our dense and vector-valued logits, we introduce Jacobian R1 which penalizes the full input–logit Jacobian, overcoming pooled R1's limitation of neglecting most of logit sensitivities.} A single
	adversarial objective then suffices: in one step, ROAD reaches 11.03 FID on
	SDXL and 13.73 on SD3.5-M on COCO-10K, surpassing the multi-step teachers at a
	fraction of distribution matching's training compute.
\end{abstract}

\begingroup
\fontsize{10.5}{12.7}\selectfont
\setlength{\parskip}{0pt}
\reportcontents
\endgroup

\noindent
\begin{minipage}{\linewidth}
\captionsetup{type=figure,hypcap=false}%
\centerline{\resizebox{\linewidth}{!}{%
  \includegraphics[height=2.02in]{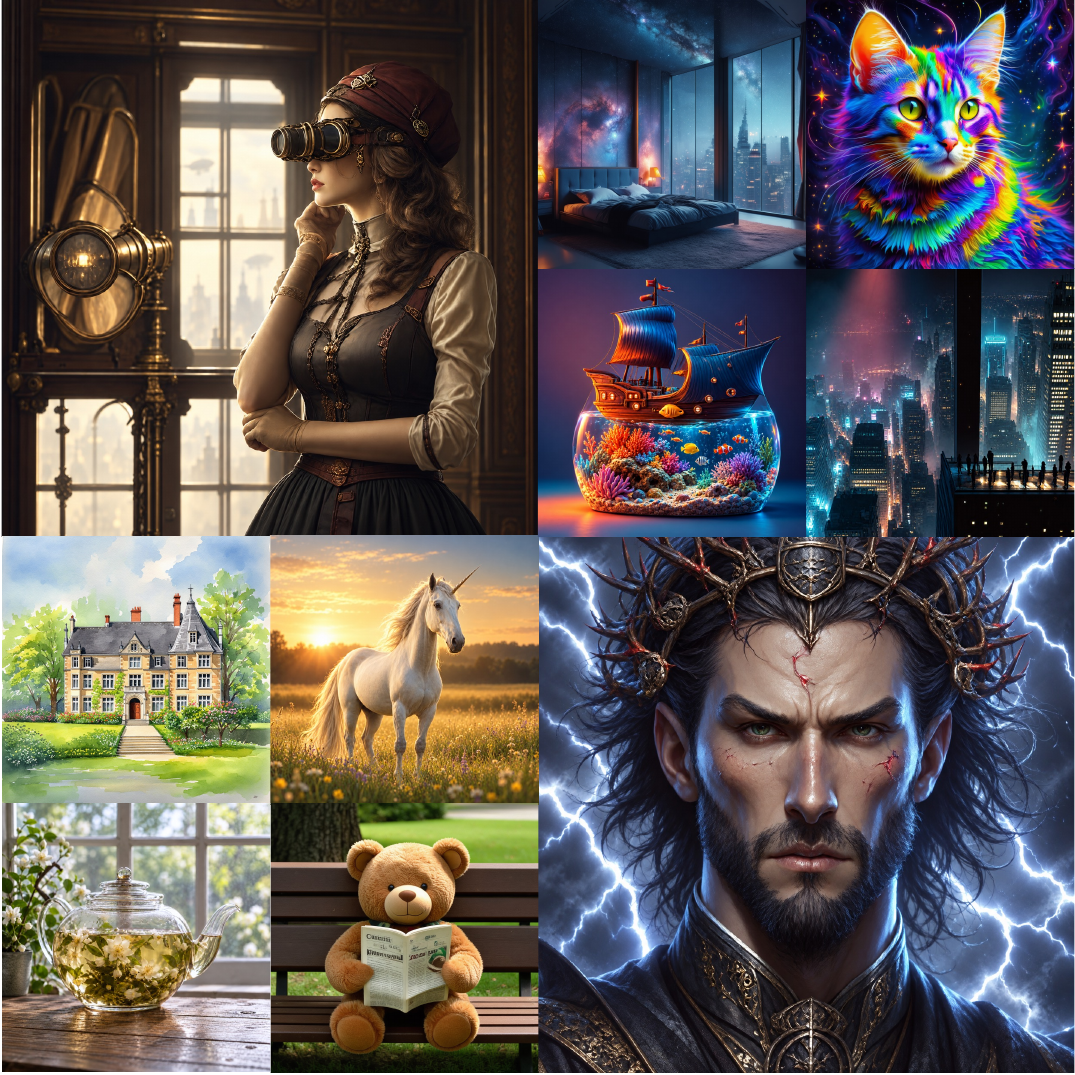}%
  \hspace{0.10in}%
  \includegraphics[height=2.02in]{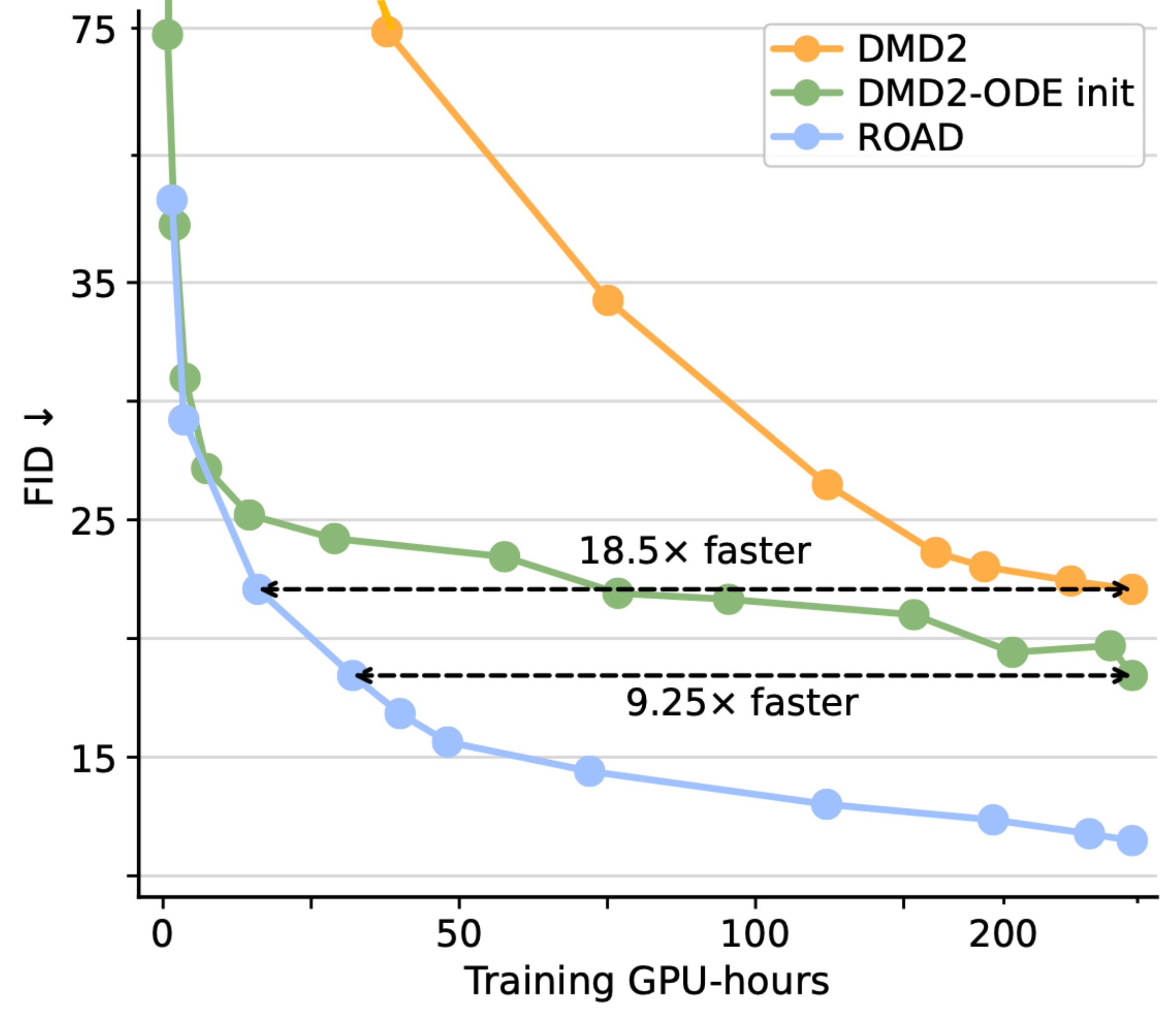}%
}}\par
\vspace{2pt}
\caption{\textbf{A single adversarial loss suffices for one-step distillation.}
Left: samples from ROAD. Right: one-step SDXL FID vs.\ training GPU-hours
(our runs). DMD2 starts from the base model; \mbox{DMD2-ODE}~init
\citep[released recipe;][]{yin2024dmd2} starts with a regression stage on
teacher ODE pairs. ROAD needs no such stage, beats both baselines' best FID
early, and keeps improving.}
\label{fig:teaser}
\end{minipage}

\section{Introduction}
Diffusion and flow models have achieved remarkable quality in text-to-image generation, but their iterative sampling remains expensive at inference time. Step distillation therefore aims to compress a many-step generator into one or a few evaluations. Paradoxically, however, the strongest one-step methods often simplify inference by making training substantially more complicated: distribution matching introduces an online score network at base-model scale~\citep{luo2023diffinstruct,yin2024dmd,nguyen2024swiftbrush,zhou2024sid}, while competitive systems commonly rely on additional objectives, auxiliary models, or multiple stages (Tab.~\ref{tab:sdxl}, Sec.~\ref{app:related}). Thus, \emph{one-step inference has become simple, but one-step training has not}.

Adversarial distillation offers a strikingly simpler alternative. Rather than matching scores or trajectories, it can train a one-step generator directly against real data with a single adversarial objective, without an online score network or repeated teacher queries. Yet this simplicity has not carried over to modern text-to-image models. Pure adversarial training at this scale has struggled with stability and prompt alignment~\citep{sauer2024add,yin2024dmd2}. As a result, adversarial losses are typically used only to assist a heavier distillation objective~\citep{sauer2024add,zhou2024sida}. When adversarial training becomes the main objective, existing methods often recover performance by introducing other sources of complexity, such as teacher-generated supervision, previously distilled generators, or large trainable critics~\citep{sauer2024ladd,chen2025nitrofusion,lin2025apt}. This leaves open a more fundamental question: \emph{is pure adversarial distillation itself unable to scale, or is the critic used to train it the actual bottleneck?}

We investigate the latter possibility. The critic used in adversarial distillation is largely inherited from projected GANs~\citep{sauer2021projected,sauer2023stylegant}, while the representation space it operates on has changed substantially. Modern visual encoders such as DINOv3 and Perception Encoder already map images into contextual, semantically organized token representations~\citep{simeoni2025dinov3,bolya2025pe}. Yet these representations are still read through lightweight convolutional heads; the random feature transform is sampled once and held fixed throughout training; and gradient regularization acts only on a scalar obtained by pooling the dense critic outputs. We find that these seemingly innocuous design choices become increasingly restrictive at modern model scale. They limit how the critic interprets semantic representations, allow its judgments to become increasingly tied to a single feature view during prolonged optimization, and leave individual logit sensitivities insufficiently controlled. This diagnosis leads us to revisit pure adversarial distillation with a modern critic tailored to today's representations and optimization dynamics, making its simple training recipe effective at text-to-image scale. We instantiate this idea in ROAD, \textbf{R}epresentation-grounded \textbf{O}ne-step \textbf{A}dversarial \textbf{D}istillation. 

\textbf{Reading modern representations with modern ViT heads.}  The inherited convolutional readout builds a visual pyramid over fixed neighborhoods, whereas
modern token representations call for a head that selects and relates semantic
evidence by content. We therefore use ViT heads made viable for adversarial
training through Lipschitz controls, while cross-attention to individual prompt tokens improves text alignment. The resulting
critic clearly outperforms similar-sized convolutional heads in fidelity and
alignment (Sec.~\ref{sec:head}).

\textbf{Keeping the random mixing random.} Projected critics read the frozen
features through a random mixing, drawn once and held fixed so that the critic
does not rely on a few dominant feature directions
\citep{sauer2021projected}. This protection, however, does not last over long
horizons: training under a fixed draw converges quickly, then oscillates and
slowly deteriorates. We attribute this to
generator--critic co-adaptation around one persistent view of the
representation: directions measured weakly by that view can remain poorly
corrected. ROAD periodically redraws the mixing while carrying forward the
generator, heads, and optimizer states. This changes the feature view without
restarting learning, limits prolonged co-adaptation, and sustains improvement
where fixed-mixing runs deteriorate (Sec.~\ref{sec:refresh}).

\textbf{Adapting R1 to dense, vector-valued critics.} The classical R1 penalty
regularizes a scalar critic through its input gradient on real images and is a
staple of stable GAN training
\citep{mescheder2018r1,karras2020stylegan2,kang2024diffusion2gan}. ROAD's
critic, however, is dense and vector-valued, and the adversarial loss acts on
every logit, whereas the customary extension pools the logits into a scalar
before applying R1 \citep{zheng2025d2o,suvorov2022lama,kang2023gigagan}.
{\color{hlblue}Pooling keeps only one direction of the critic's output space,
so the input--logit Jacobian's sensitivities in every other direction, however
large, never enter the penalty.}
The proposed Jacobian R1 (JR1) instead penalizes every logit's input gradient,
i.e., the squared Frobenius norm of the input--logit Jacobian. A single unbiased Hutchinson probe makes it practical with one vector–Jacobian product (Sec.~\ref{sec:jr1}).

ROAD proves to be a simple, low-cost, and strong one-step distillation method:
with one loss and one stage, it reaches 11.03 FID on SDXL and 13.73 on SD3.5-M
on COCO-10K. On SDXL, it reaches the best FID of ODE-initialized DMD2 using
1/9.25 of the training GPU-hours (Fig.~\ref{fig:teaser}).

\section{Preliminaries}
\label{sec:prelim}

\noindent\textbf{Pure adversarial one-step distillation.}
{\color{revtext}
	Given a pretrained text-to-image diffusion model, we initialize a one-step
	student $G_\theta$ from its architecture and weights. The student maps Gaussian
	noise $\epsilon\sim\mathcal N(0,I)$ and a prompt $c\sim p_c$ directly to an
	image at a fixed conditioning timestep, without querying the teacher thereafter.
	Following Projected GAN, the critic judges images
	through a frozen pretrained encoder $F$, whose features are tapped at $L=4$
	depths, $f_\ell(x)$ for $\ell=1,\dots,L$. The conditional head at level $\ell$
	emits dense $N_\ell$ per-position logits. Concatenating the logits across all levels
	gives a vector $D_\phi(x,c)=[D_1(x,c),\dots,D_n(x,c)]^\top\in\mathbb R^n$, where
	$n=\sum_\ell N_\ell$. For real image--prompt pairs $(x,c)\sim p_r$, the critic and generator
	minimize, respectively,
	\begin{equation}
		\begin{aligned}
			\mathcal L_D
			&=\mathbb E_{(x,c)\sim p_r}\Big[\sum\nolimits_i\omega_i\,
			\zeta\big(-D_i(x,c)\big)\Big]
			+\mathbb E_{\epsilon,c}\Big[\sum\nolimits_i\omega_i\,
			\zeta\big(D_i(G_\theta(\epsilon,c),c)\big)\Big],\\
			\mathcal L_G
			&=\mathbb E_{\epsilon,c}\Big[\sum\nolimits_i\omega_i\,
			\zeta\big(-D_i(G_\theta(\epsilon,c),c)\big)\Big], 
		\end{aligned}
		\label{eq:dense}
	\end{equation}
	where 	$\omega_i=1/n$ and $\zeta$ is the softplus function.
	These are the logistic critic and non-saturating generator losses
	\citep{goodfellow2014gan}, with the nonlinear loss applied to each logit before
	uniform averaging. We alternate minimizing $\mathcal L_G$ over $\theta$ and
	$\mathcal L_D+R_D$ over $\phi$, where $R_D$ is the critic gradient penalty
	defined below and is zero in unregularized configurations. 
}

\noindent\textbf{Random mixing in projected critics.}
Projected GAN feeds the frozen features
through \textit{non-learned random affine maps} (i.e., CCM and CSM) before the trainable multi-scale heads. CCM reorganizes channels within each feature
level, and CSM further combines information from different
levels along a top-down path. We denote the collection of mixing
parameters by $M$, the critic's measurement basis (subscripts index feature
levels; parenthesized superscripts index draws). Schematically,
\begin{equation}
\tilde f_\ell=M^{\mathrm{cc}}_\ell f_\ell,
\qquad
\hat f_\ell=M^{\mathrm{cs}}_\ell\big(\hat f_{\ell+1}+\tilde f_\ell\big),
\qquad
\hat f_L=M^{\mathrm{cs}}_L\tilde f_L,
\label{eq:mixing}
\end{equation}
 where the top-down recursion applies for $\ell=L-1,\cdots,1$.
In the classical design, these maps are sampled at initialization
and then held fixed. By expressing pretrained features in mixed
coordinates, the heads base their decisions on combinations of
cues across channels and depths. This prevents the discriminator
from relying solely on a few dominant feature directions and
encourages broader feature use \citep{sauer2021projected}.

{\color{hlblue}
\noindent\textbf{Pooled R1 regularization.}
\label{sec:gp}
	For the vector-valued critic of Eq.~(\plaineqref{eq:dense}), the pooled R1 first
	averages the $n$ logits and applies the classical R1 penalty to the resulting
	scalar:
	\begin{equation}
		R_1^{\mathrm{pool}}
		=\frac{\gamma}{2}\,
		\mathbb E_{(x,c)\sim p_r}
		\big\lVert
		\nabla_x\big( \tfrac1n\sum\nolimits_i D_i(x,c)\big)
		\big\rVert_2^2 .
		\label{eq:r1}
	\end{equation}
	 This pooled form controls the input sensitivity of the averaged logit. The common practice is to apply this regularization  lazily to reduce cost \citep{karras2020stylegan2,kang2023gigagan}. 
}

\section{Scaling Pure Adversarial Distillation with a Representation-Grounded Critic}
\label{sec:method}

 ROAD trains the one-step student  with the dense
adversarial losses  and the JR1
penalty against a projected critic assembled from
three parts: the frozen encoder, the random mixings, which
ROAD periodically re-draws, and trainable Transformer heads that
attend to the prompt's text tokens and emit dense per-position
logits (Fig.~\ref{fig:arch}). The teacher appears only as the
student's initialization.

\subsection{A Transformer Head for Semantic Token Representations}
\label{sec:head}

 A natural starting point is to retain the convolutional discriminator design of Projected GAN~\citep{sauer2021projected}. In our one-step distillation setting, this convolutional (Conv) readout  remains trainable but underperforms in both image quality and prompt adherence, and simply increasing its capacity (Deep Conv) further degrades performance, as shown in rows \rownumber{1} and \rownumber{2} of Tab.~\ref{tab:head-ablation}, respectively.

We attribute this to a change in what the trainable head is for. A modern ViT
backbone such as DINOv3 already maps an image into a contextual, semantic
representation space, and every tap returns the same number of tokens. The head
need not rebuild the visual pyramid that projected GAN's convolutional
heads construct; it has only to decide which tokens matter and how they relate. Convolution aggregates over predetermined neighborhoods with
fixed, translation-shared weights, an inductive bias suited to building a
hierarchy from raw pixels rather than to reading backbone features: two tokens
are combined because they are adjacent, not because their content is related.
Self-attention decides instead from the tokens themselves which of them are
combined and how strongly, which makes a Transformer head the more promising
choice.

\begin{figure}[t]
\centering
\includegraphics[
  width=\linewidth
]{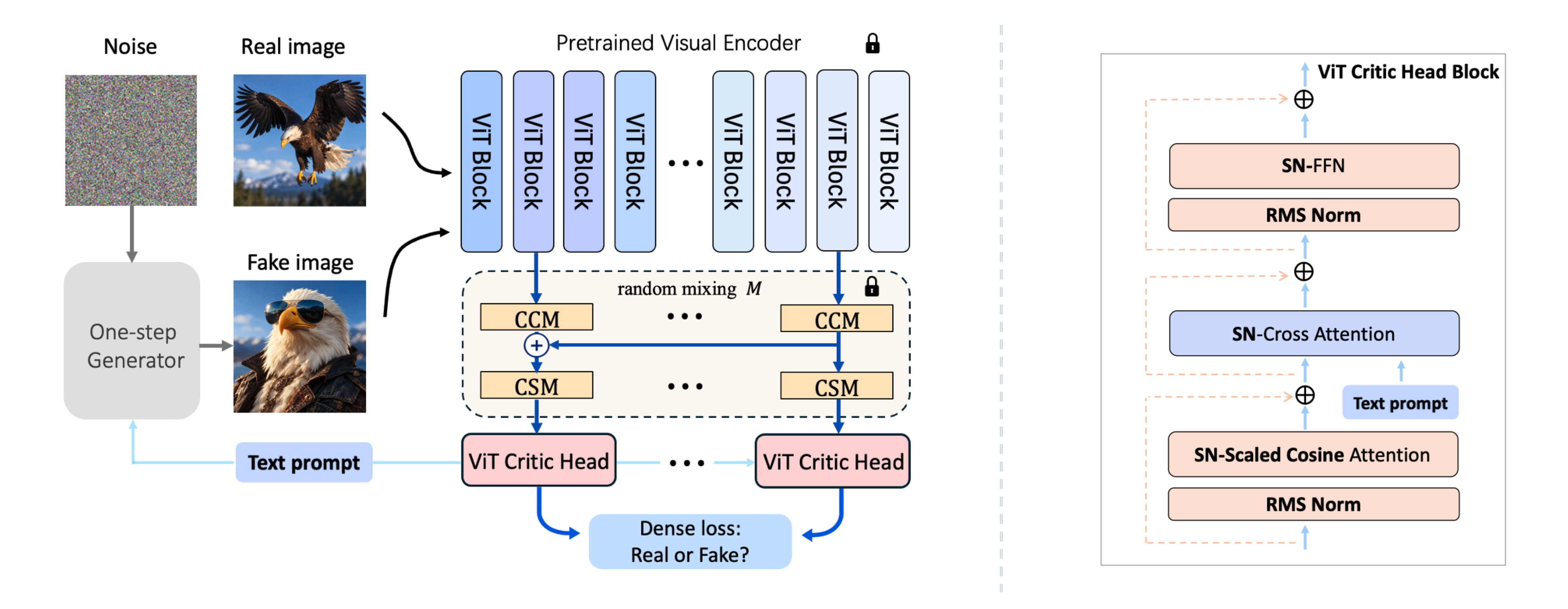}
\caption{\textbf{Overall architecture and critic head's ViT block of ROAD.}
Left: generated and real images are encoded into multi-level semantic
features, transformed by periodically re-drawn random mixings $M$, and
scored by Transformer critic heads. Right: a Lipschitz-controlled
ViT block uses spectral-normalized scaled-cosine self-attention, token-level
cross-attention, and an FFN, together with RMSNorm and learnable residual
scales.}
\label{fig:arch}

\end{figure}

\begin{table}[t]
\caption{\scopechange{\textbf{Critic head design  and text conditioning.} Fixed mixing and pooled R1 are used with 2B generator (SDXL). \textbf{Left:} Conv follows the convolutional head  of Projected GAN. Raising its capacity does not improve performance, whereas our  ViT head substantially improves both fidelity and alignment; the naive Transformer head collapses. \textbf{Right:} Pooled-text projection provides only a minor gain over the unconditional critic, whereas token-level cross-attention yields a  stronger improvement in text--image alignment.}}
\label{tab:head-ablation}
\centering
\scriptsize
\setlength{\tabcolsep}{3pt}
\renewcommand{\arraystretch}{1.05}
\begin{minipage}[t]{0.57\linewidth}
\vspace{0pt}
\centering
\begin{tabular*}{\linewidth}{@{\extracolsep{\fill}}clccc@{}}
\toprule
 & Critic head & Param. & FID $\downarrow$ & CLIP $\uparrow$ \\
\midrule
\rownumber{1} & Conv & 24.8M & 19.02 & 0.317 \\
\rownumber{2} & Deep Conv & 54.8M & 21.52 & 0.311 \\
\rownumber{3} & Naive Transformer & 54.4M & Collapse & -- \\
\rownumber{4} & ViTGAN & 54.3M & 15.40 & 0.319 \\
\rownumber{5} & ViT (ours) & 54.3M & 14.54 & 0.331 \\
\bottomrule
\end{tabular*}
\end{minipage}\hfill
\begin{minipage}[t]{0.40\linewidth}
\vspace{0pt}
\centering
\begin{tabular*}{\linewidth}{@{\extracolsep{\fill}}clc@{}}
\toprule
 & \scopechange{Text conditioning} & CLIP $\uparrow$ \\
\midrule
\rownumber{6} & Unconditional & 0.320 \\
\rownumber{7} & \makecell[l]{Miyato-style\\pooled-text projection} & 0.322 \\
\rownumber{8} & Token-level cross-attention & 0.331 \\
\bottomrule
\end{tabular*}
\end{minipage}
\end{table}

Lipschitz control, however, is necessary to make this replacement
viable in the adversarial game~\citep{zhou2019lipschitz,miyato2018sn}. It limits how strongly the critic can
amplify changes in its input and thereby constrains abrupt variation
in the signal returned to the generator. A naive Transformer head
collapses in our setting (Tab.~\ref{tab:head-ablation}, row \rownumber{3}). This
failure is consistent with a known difficulty: standard dot-product
softmax attention is not globally Lipschitz in its inputs, even when
the learned linear maps have bounded spectral norm
\citep{kim2021lipschitz}.

\textbf{A Lipschitz-controlled ViT head} is therefore introduced, which
constrains the gain of every trainable component through three complementary
mechanisms. First,
\textit{spectral normalization} \citep{miyato2018sn} is applied to
every learned linear map in attention, the FFN, and the logit readout,
so that its spectral norm is no more than 1. Second, \textit{cosine attention
	with a clamped temperature} normalizes every query
and key before their comparison  under
a learnable per-head temperature clamped at $s_{\max}$ \citep{liu2022swinv2}. This  removes
their raw activation magnitudes from the attention logits and bounds
the remaining scale explicitly. Third, \textit{each residual branch uses a
	learnable per-channel scale} initialized to $1/B_\ell$ for a
level-$\ell$ head of depth $B_\ell$, preventing the cumulative residual gain from growing excessively with the
head depth. On any convex region where queries and keys keep a norm floor before
$\ell_2$-normalization, these controls make the head's map from mixed tokens to
logits Lipschitz with the explicit constant $\Lambda_\ell$ of
Eq.~(\plaineqref{eq:lip}), proved as Lemma~\ref{lem:lip} in
Sec.~\ref{app:lipschitz}.

With the controls in place, the ViT head trains successfully and substantially improves the quality of generated images (Tab.~\ref{tab:head-ablation}, row \rownumber{5}). ViTGAN~\citep{lee2022vitgan} provides an alternative ViT discriminator design, but remains inferior to our head under the same protocol (Tab.~\ref{tab:head-ablation}, row \rownumber{4}).
The weak prompt adherence long associated with adversarial
distillation, however, persists \citep{sauer2024add,yin2024dmd2}.
Comparing text-conditioning schemes, we attribute this shortfall to
the pooled prompt vector: a single pooled embedding cannot express
fine-grained linguistic structure.

\noindent\textbf{Token-level text conditioning via cross-attention.}
The Transformer head admits token-level cross-attention naturally,
and cross-attention provides a direct mechanism for image tokens at
different spatial positions to interact differently with individual
text tokens. We insert cosine cross-attention into the first block of
every head, with subsequent blocks refining the conditioned
representation through self-attention and FFN layer. Each level
keeps its own text key/value projections, allowing different semantic
depths to read the same prompt differently. Under the unified protocol, Miyato-style pooled-text projection provides only a minor improvement over the unconditional critic, whereas token-level cross-attention yields a much stronger gain in CLIP score (Tab.~\ref{tab:head-ablation}, rows \rownumber{6}-\rownumber{8}).

\noindent\textbf{\color{hlblue}Multi-branch random mixing.}
\label{sec:random-mixing}
{\color{hlblue}
	Projected GAN's random mixing is tailored to a convolutional pyramid, where
	CCM is a $1{\times}1$ convolution within a level and CSM adds
	$3{\times}3$ convolutions between levels of differing resolution. A ViT backbone has no such pyramid, since
	all taps share one uniform patch-token grid, so both maps reduce to a plain
	$1{\times}1$ convolution that mixes channels alone. We therefore replace
	each single-kernel mixing by an average of three size-preserving random
	convolutions over the patch tokens, with kernels $1{\times}1$,
	$3{\times}3$, and $5{\times}5$; the summary token is transformed separately
	by a random linear map. The wider branches let a token's mixed value draw on
	its spatial neighbors, so evidence is spread across positions as well as
	channels and depths. Sec.~\ref{app:projection} shows the advantage of this
	design. These frozen multi-branch maps constitute the mixing
	$M$ that Sec.~\ref{sec:refresh} re-draws, and Sec.~\ref{app:mixing-op}
	gives its matrix form and analyzes its anisotropy.}

\subsection{Periodic Mixing Refresh Prevents Critic Co-adaptation}
\label{sec:refresh}

{\color{hlblue}With the Transformer head of Sec.~\ref{sec:head}, training converges quickly
to good performance, then begins to oscillate and slowly deteriorate
(Fig.~\ref{fig:fixed-diagnosis}a). The symptom resembles discriminator
overfitting, yet standard augmentation remedies such as
DiffAugment~\citep{zhao2020diffaugment} and ADA~\citep{karras2020ada}
do not materially alleviate this late-stage degradation under fixed mixing
(Sec.~\ref{app:augmentation}). We therefore examine other possible causes and finally focus on the random mixing, held fixed throughout training in the
classical design. We argue that its anisotropy persistently biases
long-horizon optimization toward the same feature directions. The generator and the critic
then form a \emph{realization-specific co-adaptation} around this single view,
which eventually erodes generation quality.}

\noindent\textbf{A persistent critic state.}
We diagnose the critic with two measurements,
Pre-projection Feature Utilization Coverage (PFUC) and the logit margin (Sec.~\ref{app:diagnostics}). \textit{PFUC} is the
first-order sensitivity of the critic's mean logit to each backbone channel
before mixing, expressed in backbone coordinates. It settles after an early
phase of training and barely moves thereafter
(Fig.~\ref{fig:fixed-diagnosis}b): the TV between consecutive profiles decays
to a floor (Fig.~\ref{fig:fixed-diagnosis}c). The \textit{logit margin} is the
batch-mean real logit minus the batch-mean generated logit, measuring how
confidently the critic separates the two populations. After the first few
thousand steps, the margin stays large and nearly stationary
(Fig.~\ref{fig:fixed-diagnosis}d), even though FID has already begun its usual
oscillation and deterioration. Because the generator is still changing, a
critic decision that has stopped moving is not by itself a healthy sign.
Together, the panels show a persistent critic state, i.e., hardened per-channel
sensitivity profile and stationary logit margin.

\noindent\textbf{Coupling to the fixed mixing.}
Write the critic as $D_\phi\big(M^{(0)}f(x)\big)$, where $M^{(0)}$ is the
training draw. Across training, $D_\phi$ supplies the generator with an
optimization signal shaped by this one realization. Gradually $M^{(0)}$ is absorbed into the critic's
parameters and becomes the fixed geometry within which the adversarial game
continues to be optimized. Finally, the way the signal uses the features
itself hardens, producing the phenomena of Fig.~\ref{fig:fixed-diagnosis}b--d.
\emph{We test this account with two post-hoc experiments: fresh-draw transfer and a
linear probe}. First, we hold one checkpoint fixed and replace only $M^{(0)}$ by
independently drawn mixings. Across  six checkpoints from $5$k to $30$k
steps, the logit margin is about $5.0$ under $M^{(0)}$ but about $0$ under a fresh draw, indicating the critic's real--generated separation  does not transfer to a fresh measurement basis. Second, at each level, we fit a
separate linear classifier to the frozen mixed features to distinguish real
from generated samples. From $2.5$k steps on, the two populations are harder to
separate under $M^{(0)}$ than under a fresh draw, particularly at the shallow
levels. This suggests that what still distinguishes generated from real images
is least visible under $M^{(0)}$, the view through which the generator was trained.
The two diagnostics together show that the critic's
separation is specific to the pairing $D_\phi\circ M^{(0)}$.

\begin{figure}[t]
	\centering
	\includegraphics[width=\linewidth]{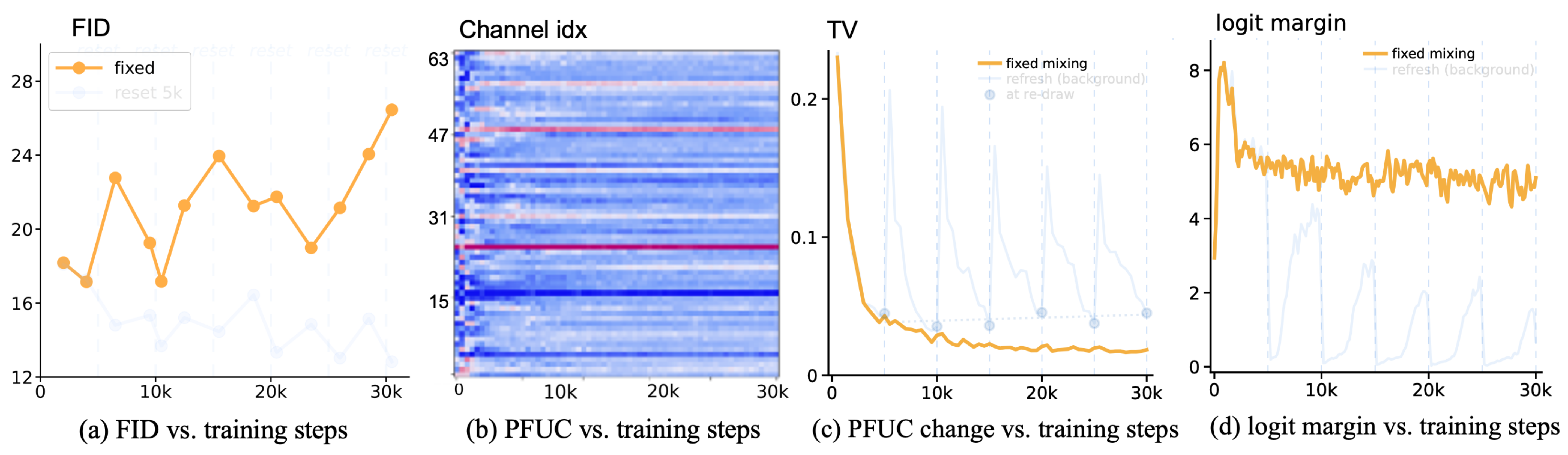}
	\caption{\textbf{Diagnosing co-adaptation under fixed random mixing.}
		(a)~FID oscillates and eventually
		deteriorates. (b)~PFUC settles into a persistent per-channel pattern.
		(c)~The change between consecutive PFUC profiles, measured by total
		variation (TV), decays toward a floor. (d)~The logit margin stays high and
		nearly stationary even as FID degrades. The twin run refreshed every $5$k
		steps is faded for reference. Sec.~\ref{app:diagnostics} describes
		the measurements in detail.}
	\label{fig:fixed-diagnosis}
\end{figure}

\noindent\textbf{Anisotropy-based co-adaptation ends in degradation.}
Stack the frozen features as $f$ and write the affine mixing as $\hat f=Mf+b$ (Sec.~\ref{app:mixing-op}).
As a random $M$ is anisotropic (Sec.~\ref{app:mixing-op}),  the gain $\lambda_M(\eta)=\lVert M\eta\rVert$ that
it applies to a unit feature direction $\eta$ varies widely from one direction
to another. This gain decides both how much the critic's logits change when the
features move along a direction and how much of the generator's correction
reaches it.  
Specifically, the directional derivatives of the critic and the generator along
$\eta$ are bounded as
$|\langle\eta,\nabla_f D_i\rangle|\!\le\!\lambda_M(\eta)\Lambda_\ell$ and
$|\langle\eta,\nabla_f\mathcal L_G\rangle|\!\le\!\lambda_M(\eta)\lVert\nabla_{\hat f}\mathcal L_G\rVert$,
where $\Lambda_\ell$ is the Lipschitz bound of Lemma~\ref{lem:lip} for the
level-$\ell$ head that produces $D_i$.
  So a small-gain direction moves the logits little and is slowly corrected in the generator, however
the critic behaves; we call it \emph{weakly measured}. 
The minimal measured-gap surrogate in Prop.~\ref{prop:gain-correction}(i)  shows the feature-space residual between generated and real data decays along a right-singular direction at a rate equal to the square of that direction's gain.  In light of the proposition, high-gain directions are corrected early, whereas weakly measured directions persist, concentrating the surviving residual in them. 
We thus hypothesize that an unchanging gain landscape sustains
realization-specific co-adaptation.

\noindent\textbf{Periodic mixing refresh suppresses co-adaptation-induced degradation.}
{\color{revtext}
	Given a refresh interval $\tau$, at the start of the $k$-th period we re-draw
	the mixing $M^{(k)}$ from its initialization law. The generator, the critic
	heads, and the optimizer states remain unchanged, and no learnable parameter
	is reset or added. Each re-draw replaces the draw-specific gain landscape, so
	a residual direction that is weakly measured under $M^{(k)}$ may receive a
	higher gain under $M^{(k+1)}$. Under the right-isotropy assumption, independent
	re-draws prevent any fixed direction from retaining a privileged slow
	correction rate (Prop.~\ref{prop:gain-correction}(ii)). After a
	refresh, the critic heads adapted to the preceding mixing are therefore
	temporarily mismatched with the new measurement geometry. Accordingly, PFUC
	reorganizes at each re-draw (Fig.~\ref{fig:refresh-mechanism}b,c), while the
	logit margin resets toward zero and then re-forms as the heads adapt
	(Fig.~\ref{fig:refresh-mechanism}d). By shortening how long the
	generator--critic pair can co-adapt to one fixed view, periodic refresh
	interrupts realization-specific co-adaptation and can re-expose weak residual
	directions to adversarial correction. Consistent with this mechanism, the
	refreshed run re-enters an improving trajectory after re-draws and maintains
	a more stable and lower FID over long-horizon training
	(Fig.~\ref{fig:refresh-mechanism}a).
}

\begin{figure}[t]
	\centering
	\includegraphics[width=\linewidth]{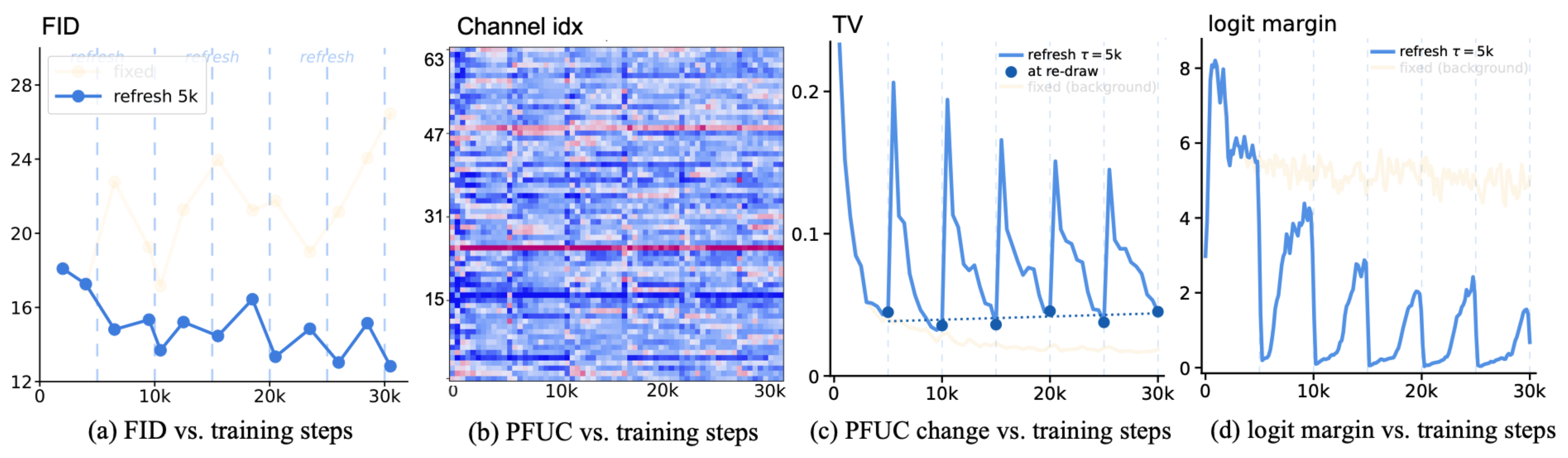}
	\caption{\textbf{Periodic mixing refresh breaks the persistent
			critic state.} The refreshed run (blue, every $5$k steps) is
		foregrounded, with fixed mixing faded for reference. (a)~FID improves
		after re-draws and avoids the late deterioration of fixed mixing.
		(b)~PFUC reorganizes under each new view. (c)~The PFUC change spikes after each re-draw and then decays; filled dots
		mark the value at each re-draw. (d)~The logit margin resets toward zero at each
		re-draw and re-forms as the heads adapt.}
	\label{fig:refresh-mechanism}
\end{figure}

\subsection{Jacobian R1 Regularization for Dense Critics}
\label{sec:jr1}

The objective in Eq.~(\plaineqref{eq:dense}) defines per-logit nonlinear loss across  positions and levels. So its regularizer
should  control the sensitivities of these individual
decisions, not only their aggregate. We abbreviate the vector-valued logit $D_\phi(x,c)$ as $D(x)$ as both $\phi$ and $c$ are fixed for gradient regularization. Let
$J(x)\!=\!\partial D(x)/\partial x\in\mathbb R^{n\times d_x}$  where $d_x$ is input dimension, with the  
$i$-th row being $g_i^\top\!=\!(\nabla_xD_i)^\top$.

\noindent\textbf{Pooled R1 misses logit sensitivities.}
To facilitate analysis, we write the pooled R1 as
\begin{equation}
	\textstyle
	R_1^{\mathrm{pool}}
	=\frac{\gamma}{2}\,\mathbb E_{x\sim p_r}
	\frac1{n^2}\lVert J(x)^\top\mathbf1\rVert_2^2
	=\frac{\gamma}{2}\,\mathbb E_{x\sim p_r}
	\bigl\lVert\tfrac1n\textstyle\sum_{i=1}^n g_i(x)\bigr\rVert_2^2, 
	\label{eq:jr1-pool}
\end{equation}
which penalizes the squared norm of the averaged gradient. The per-image energy
$q_{\mathrm{pool}}(x)\!=\!\tfrac1{n^2}\lVert J(x)^\top\mathbf1\rVert_2^2=\tfrac1{n^2}\mathrm{tr}(J^\top\mathbf1\mathbf1^\top J)$ shows that  pooled R1  
only regularizes the direction spanned by the all-ones vector in the
$n$-dimensional output space, while leaving the remaining $n\!-\!1$
directions  unpenalized. Particularly, for $J^\top\mathbf1=0$ the averaged logit is insensitive to every input perturbation to first order,
while the individual logits need not be. In this case, pooled R1 does not
impose any regularization at all, even though the dense loss of Eq.~(\plaineqref{eq:dense}) is
driven by those individual logits.

\noindent\textbf{JR1 exploits full logit sensitivities.}
{\color{hlblue}Motivated by input--output Jacobian sensitivity
\citep{novak2018sensitivity} and logit-level gradient penalties
\citep{varga2018gradient},} we penalize
the full input--logit Jacobian:
\begin{equation}
	\textstyle
	R_1^{\mathrm J}
	=\frac{\gamma_{\mathrm J}}{2}\,\mathbb E_{x\sim p_r}
	\frac1n\lVert J(x)\rVert_F^2
	=\frac{\gamma_{\mathrm J}}{2}\,\mathbb E_{x\sim p_r}
	\tfrac1n\textstyle\sum_{i=1}^n\lVert g_i(x)\rVert_2^2.
	\label{eq:jr1}
\end{equation}
So JR1 penalizes the mean of the individual
squared norms rather than the squared norm of the mean. In light of the per-image energy
$q_{\mathrm{Jac}}(x)=\tfrac1n\lVert J(x)\rVert_F^2$, all $n$ directions of output space are regularized, including the $n\!-\!1$
that pooled R1 neglects. Particularly, when $J^\top\mathbf1=0$, pooled R1 imposes no regularization while JR1 retains every individual
sensitivity. \emph{Prop.~\ref{prop:pooling} provides detailed theoretical comparison between pooled R1 and JR1.}

\noindent\textbf{Efficient estimation with Hutchinson probes.}
Exact JR1 requires backpropagating each of the $n$ logits separately, i.e., one vector--Jacobian product (VJP), which is computationally expensive.
We instead use the classical Hutchinson trace estimator \citep{hutchinson1989},
following its use for Jacobian regularization
\citep{varga2018gradient,hoffman2019jacobian}. Let $v\in\{-1,+1\}^{n}$ have
independent, equiprobable signs, so that $\mathbb{E}[vv^{\top}]=I_n$.
As
$\mathbb{E}_v\|J^{\top}v\|_2^{2}
=\operatorname{tr}(J^{\top}\mathbb{E}[vv^{\top}]J)=\|J\|_F^{2}$, we have
\begin{equation}
	R^{\mathrm J}_1
	=\frac{\gamma_{\mathrm J}}{2n}\,\mathbb{E}_{x\sim p_r}\,\mathbb{E}_v
	\bigl\|J(x)^{\top}v\bigr\|_2^{2}.
	\label{eq:jr1-hutchinson}
\end{equation}
In practice we 
approximate $\mathbb{E}_v$ by a Monte-Carlo average over $P$ independent probes, and use $P{=}1$ per real image, drawn afresh on the lazy
schedule of JR1 critic updates. \emph{Prop.~\ref{prop:jr1} shows that the estimate is unbiased and provides its variance.}

\subsection{The complete recipe.}
\label{sec:recipe}

 So far, the three components accomplish modernization of projected GAN, addressing three degrees of freedom of a representation-space
critic, what judges (Sec.~\ref{sec:head}), through what measurement
(Sec.~\ref{sec:refresh}), and how sharply (Sec.~\ref{sec:jr1}). Together
they make the minimal objective of Eq.~(\plaineqref{eq:dense})
sufficient at modern text-to-image scale.

\section{Experiments}
\label{sec:experiments}

\noindent\textbf{Setup.}
We train one-step generators for SDXL~\citep{podell2024sdxl}, a diffusion model with a UNet architecture, and SD3.5-M, a flow-matching model with an MMDiT architecture. Across the paper, DINOv3 ViT-B is chosen as the backbone for feature extraction unless otherwise specified. We use a dataset of 500K images from LAION-Aesthetic 5.5+~\citep{schuhmann2022laion5b} built by DMD2. Following~\citep{yin2024dmd2,sauer2024add,zhou2024sidlsg}, we evaluate on a random set of 10K COCO-2014 validation prompts. For both models we report FID, CLIP score, and aesthetic score. We additionally report human-preference metrics: PickScore~\citep{kirstain2023pickscore} and HPSv2~\citep{wu2023hpsv2} for SDXL, as in~\citep{lu2025adm,ren2024hypersd}, and PickScore, HPSv2, ImageReward~\citep{xu2023imagereward}, and GenEval~\citep{ghosh2023geneval} for SD3.5-M, following~\citep{swd2025,chen2025nitrofusion}. Complete descriptions of the dataset and evaluation metrics, training hyperparameters, and additional experiments are provided in Secs.~\ref{app:protocol}, \ref{app:hparams}, and~\ref{app:projection}, respectively.

\subsection{Main Results}
\label{sec:exp-main}

\begin{table}[t]
\centering
\caption{\textbf{Evaluation of one-step SDXL.} All results are reproduced under our protocol with official checkpoints,
	except DMDX, whose values are from its paper. Unlike prior methods that require multiple objectives and multi-stage training (Sec.~\ref{app:related}), ROAD uses a single stage with a pure GAN loss.
	}
\label{tab:sdxl}
% Table 3 renders at 9.9pt after its existing resizebox.
\fontsize{9.9}{12.1}\selectfont
\setlength{\tabcolsep}{4pt}
\renewcommand{\arraystretch}{1.12}
\begin{tabular*}{\linewidth}{@{\extracolsep{\fill}}llcccccc@{}}
\toprule
\multicolumn{1}{c}{\multirow{2}{*}[-2.5pt]{Method}} & \multicolumn{2}{c}{Training} & \multirow{2}{*}[-2.5pt]{FID $\downarrow$} & \multirow{2}{*}[-2.5pt]{CLIP $\uparrow$} & \multirow{2}{*}[-2.5pt]{HPSv2 $\uparrow$} & \multirow{2}{*}[-2.5pt]{Pick $\uparrow$} & \multirow{2}{*}[-2.5pt]{Aes $\uparrow$} \\
\cmidrule(lr){2-3}
 & \multicolumn{1}{c}{Tech.} & Stage & & & & & \\
\midrule
\mcite{SDXL base}{podell2024sdxl} (cfg 6) & -- & -- & 19.29 & 0.331 & 28.33 & 22.11 & 6.19 \\
\midrule
\mcite{Hyper-SDXL}{ren2024hypersd} & \makecell[l]{Traj.+Reward+\\ScoreD+GAN} & 6 & 33.66 & 0.328 & \textbf{29.97} & \textbf{22.71} & 6.59 \\
\mcite{DMD2}{yin2024dmd2} & ScoreD+GAN & 2 & 19.01 & 0.336 & 28.36 & 22.09 & 6.09 \\
\mcite{DMDX}{lu2025adm} & Adv.\ ScoreD & 2 & -- & \textbf{0.352} & 27.70 & 22.27 & -- \\
\mcite{SiDa}{zhou2024sida} & ScoreD+GAN & 1 & 15.23 & 0.326 & 27.80 & 21.72 & 6.16 \\
\mcite{SDXL-Turbo}{sauer2024add} & GAN+ScoreD & 1 & 24.56 & 0.337 & 28.53 & 22.40 & 6.08 \\
\mcite{SDXL-Lightning}{lin2024sdxllightning} & Traj.+GAN & 5 & 23.93 & 0.316 & 28.13 & 21.92 & 6.19 \\
\mcite{NitroSD-Realism}{chen2025nitrofusion} & ScoreD+GAN & 3 & 20.80 & 0.335 & 28.84 & 22.37 & 6.31 \\
ROAD (ours) & \textbf{GAN} & \textbf{1} & \textbf{11.03} & 0.338 & 28.58 & 22.13 & \textbf{6.67} \\
\bottomrule
\end{tabular*}
\end{table}

\begin{table}[t]
\centering
\caption{\textbf{Evaluation of few-step SD3.5-M.} All results are reproduced under our
	protocol with official 4-step checkpoints; there are no 1-step methods yet, to the best of our knowledge.}
\label{tab:sd35}
\footnotesize
\setlength{\tabcolsep}{4pt}
\renewcommand{\arraystretch}{1.12}
\resizebox{\linewidth}{!}{%
\begin{tabular}{llcccccccc}
\toprule
Method & Tech. & NFE & FID $\downarrow$ & CLIP $\uparrow$ & HPSv2 $\uparrow$ & IR $\uparrow$ & Pick $\uparrow$ & Aes $\uparrow$ & GE $\uparrow$ \\
\midrule
SD3.5-M (cfg 4.5) & -- & 40 & 18.01 & 0.341 & 29.19 & 1.00 & 22.38 & 6.12 & 0.69 \\
\midrule
SD3.5-M-Turbo & -- & 4 & 23.22 & 0.330 & 26.80 & 0.47 & 21.59 & 5.86 & 0.55 \\
\mcite{SenseFlow}{ge2026senseflow} & \makecell[l]{ScoreD+GAN+Traj.} & 4 & 28.07 & 0.328 & \textbf{29.54} & 0.93 & 22.59 & 6.28 & 0.62 \\
\mcite{SwD}{swd2025} & \makecell[l]{ScoreD+GAN+MMD} & 4 & 26.56 & 0.338 & 29.48 & \textbf{1.15} & \textbf{22.91} & \textbf{6.39} & \textbf{0.71} \\
\mcite{SiD-DiT}{zhou2025scoreflow} & ScoreD & 4 & 22.02 & 0.338 & 29.36 & 1.01 & 22.59 & 6.24 & 0.64 \\
ROAD (ours) & \textbf{GAN} & 1 & \textbf{13.73} & \textbf{0.342} & 28.85 & 0.94 & 22.00 & 6.15 & 0.66 \\
\bottomrule
\end{tabular}}
\end{table}

	\textit{Does a single GAN loss reach state-of-the-art one-step quality?} \textbf{On SDXL}, we compare with strong baselines in Tab.~\ref{tab:sdxl}. Typically, most of these methods consist of multiple training losses and require trajectory (traj.) distillation, score distillation (ScoreD), GAN, and reward learning, along with multiple stages such as ODE regression training followed by ScoreD+GAN. Compared to them,  ROAD is much simpler, with a  pure GAN loss and  a single  training stage. Nevertheless, it achieves 11.03 FID, ahead of all the baselines, and is also competitive on CLIP score, aesthetic score, and preference scores, suggesting that ROAD does not trade preference or alignment for fidelity. The comparison \textbf{on SD3.5-M}, a flow-based MMDiT generator, is given in Tab.~\ref{tab:sd35}. To the best of our knowledge, no one-step generators exist for it yet, so we compare with few-step models. Again, one-step ROAD achieves strong results on FID, much better than the competing methods as well as the teacher. Its CLIP score is among the best, indicating promising text-image alignment, while its preference scores are slightly inferior to those of the compared methods. We note that all these SOTA methods use 4 steps against our single step and are therefore 4 times slower than ours, while most adopt three training losses against our single GAN loss.

\subsection{Ablation Study}
\label{sec:exp-abl}

Across this section, the one-step SDXL generator is used for ablation, in which the refresh interval $\tau$ is 5k and the regularization is JR1 with $\gamma_{\mathrm J}=7.5\times10^{-3}$ unless otherwise specified.

\begin{figure}[t]
\centering
\begingroup
\captionsetup{type=table}
\caption{\textbf{Analysis of key components of ROAD.}}
\label{tab:ablation}
\renewcommand{\arraystretch}{0.94}
\setlength{\tabcolsep}{3pt}
\newcommand{\subgap}{\hspace{13pt}}
{\scriptsize
\begin{tabular}[t]{@{}lccccc@{}}
\multicolumn{6}{@{}c@{}}{\small (a) Component decomposition}\\[3pt]
\toprule
Critic & Lip. Ctrl. & Refresh & Regularizer & FID $\downarrow$ & CLIP $\uparrow$ \\
\midrule
ViT & \xmark & \xmark & Pooled R1 & collap & --    \\
ViT & \cmark & \xmark & Pooled R1 & 14.54  & 0.331 \\
ViT & \cmark & 5k     & Pooled R1 & 12.08  & 0.335 \\
ViT & \cmark & \xmark & JR1 & 13.62  & 0.332 \\
ViT & \cmark & 5k     & JR1                 & \textbf{11.03} & \textbf{0.338} \\
\bottomrule
\end{tabular}\subgap
\color{black}%
\begin{tabular}[t]{@{}lcc@{}}
\multicolumn{3}{@{}c@{}}{\small (b) Frozen backbone}\\[3pt]
\toprule
Frozen backbone & FID $\downarrow$ & CLIP $\uparrow$ \\
\midrule
\mcite{PE-Spatial ViT-B}{bolya2025pe} & 13.59 & 0.332 \\
\mcite{SigLIP2 ViT-B}{tschannen2025siglip2} & 13.37 & 0.334 \\
\mcite{DINOv3 ViT-S}{simeoni2025dinov3} & 15.14 & 0.328 \\
DINOv3 ViT-B & 12.08 & 0.335 \\
DINOv3 ViT-L & 16.33 & 0.325 \\
\bottomrule
\end{tabular}\subgap
\begin{tabular}[t]{@{}lcc@{}}
\multicolumn{3}{@{}c@{}}{\small (c) Regularizer}\\[3pt]
\toprule
Regularizer & FID $\downarrow$ & CLIP $\uparrow$ \\
\midrule
\mcite{Pooled R1}{zheng2025d2o} & 12.08 & 0.335 \\
\mcite{Feature R1}{sauer2024add} & 13.65 & 0.327 \\
\mcite{Approx. R1}{lin2025apt} & 14.49 & 0.330 \\
JR1 (ours) & \textbf{11.03} & \textbf{0.338} \\
\bottomrule
\end{tabular}%
}

\endgroup
\vspace{10pt}
\begin{minipage}[t]{0.49\linewidth}
  \centering
  \includegraphics[width=0.88\linewidth]{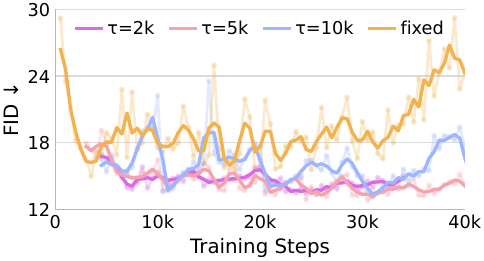}
  \captionof{figure}{\textbf{Effect of refresh interval without gradient regularization.}}
  \label{fig:reset-interval}
\end{minipage}\hfill
\begin{minipage}[t]{0.49\linewidth}
  \centering
  \includegraphics[width=0.88\linewidth]{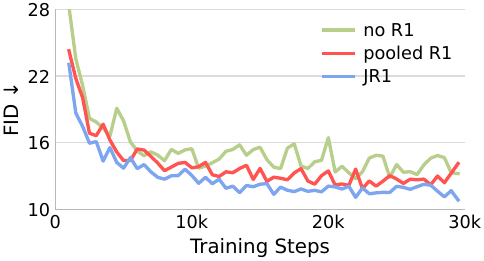}
  \captionof{figure}{\textbf{Pooled R1 against Jacobian R1 (JR1).}}
  \label{fig:regularizers}
\end{minipage}
\end{figure}

\noindent\textbf{Transformer head.}
Tab.~\ref{tab:head-ablation} (Sec.~\ref{sec:head}) isolates the critic head design and the text conditioning under fixed mixing and pooled R1. Increasing the capacity of the convolutional head does not improve performance. Our Lipschitz-controlled ViT head instead yields a substantial gain in both fidelity and alignment, and it also outperforms the ViTGAN alternative. For conditioning, token-level cross-attention provides a clearly stronger alignment gain than Miyato-style pooled-text projection.

\needspace{5\baselineskip}
\noindent\textbf{Component decomposition.}
Tab.~\ref{tab:ablation}(a) presents a breakdown of ROAD’s three  components. Lipschitz control first makes the ViT head trainable, indicating its importance for stabilizing GAN. Periodic refresh then improves FID from 14.54 to 12.08 and CLIP from 0.331 to 0.335, demonstrating its  benefit of refresh mixing over no refresh (i.e. fixed mixing). Finally, three components together reduces FID to 11.03 while improving CLIP to 0.338. Further, trajectory-level comparison (Fig.~\ref{fig:regularizers}) shows  JR1's benefit persists across training rather than appearing only at the selected checkpoint.

\noindent\textbf{Frozen backbone.}
Tab.~\ref{tab:ablation}(b) contrasts feature extractors from several different foundation models.
Among DINOv3 ViTs of different capacities, ViT-B is the strongest, while both the
smaller ViT-S and the larger ViT-L underperform it. This suggests that a stronger
representation does not necessarily translate into better GAN performance.
PE-Spatial ViT-B and SigLIP2 ViT-B are close behind DINOv3 ViT-B, so ROAD is not
tied to one feature extractor.

\noindent\textbf{Regularizer.}
\label{sec:exp-r1}
 Tab.~\ref{tab:ablation}(c) compares JR1 with pooled R1, feature R1 that moves
 R1 penalty from image space to the frozen feature taps
\citep{sauer2024add,chong2024p2d}, and Approx.\ R1 (APT) that approximates the
image-space penalty without a double backward pass \citep{lin2025apt}. JR1 is the
strongest of the four. It improves on pooled R1 from 12.08 to 11.03
FID, whereas both the feature-space and the approximate penalty fall behind pooled
R1. Sec.~\ref{app:projection} further shows that JR1 consistently
	outperforms pooled R1 across a broad range of $\gamma_{\mathrm J}$.

\noindent\textbf{Trajectory of  Refresh interval.}
Fig.~\ref{fig:reset-interval} analyzes the effect of  periodic 
refresh at $\tau\!\in\!\{2\mathrm{k},5\mathrm{k},10\mathrm{k}\}$, without gradient regularization to isolate the refresh effect. Fixed mixing oscillates strongly
and deteriorates late in training, whereas all refreshed variants remain
markedly more stable and achieve better FID. More frequent refresh yields smoother trajectories, consistent with periodic re-drawing interrupting co-adaptation to a single random view. The best FIDs nevertheless remain close across intervals, indicating low sensitivity to 
 $\tau$. We use $\tau=5\mathrm{k}$ by default.

\noindent\textbf{Trajectory of Jacobian R1.}
Fig.~\ref{fig:regularizers} contrasts  no gradient regularization, pooled R1, and JR1 with refresh interval $\tau\!=\!5\mathrm{k}$. Pooled R1 yields a smoother FID trajectory
than the unregularized run, and JR1 maintains a consistently lower FID with smaller
fluctuations across training.  This suggests that gradient regularization  is complementary to our refresh mechanism; besides, the optimization benefits more from JR1 which regularizes the  full logit sensitivities.

\FloatBarrier

\needspace{8\baselineskip}
\subsection{Further Analysis of Mixing, Regularization, and Augmentation}
\label{app:projection}

We further examine the mixing design, sensitivity to the JR1 coefficient, and
augmentation controls for the fixed-mixing degradation.

% Center content vertically in the first row; align captions in the second.
\noindent
\begin{tabular}{@{}c@{\hspace{0.04\linewidth}}c@{}}
\begin{minipage}[c]{0.48\linewidth}
\centering\footnotesize
\setlength{\tabcolsep}{3pt}
\renewcommand{\arraystretch}{1.40}
\resizebox{\linewidth}{!}{%
\begin{tabular}{@{}lcc@{}}
        \toprule
        Random mixing configuration & FID $\downarrow$ & CLIP $\uparrow$ \\
        \midrule
        Projected GAN & 14.80 & 0.328 \\
        $1{\times}1{+}3{\times}3$ & 14.48 & 0.328 \\
        $3{\times}3{+}5{\times}5$ & 13.35 & 0.333 \\
        \textbf{$1{\times}1{+}3{\times}3{+}5{\times}5$} & \scopechange{\textbf{11.03}} & 0.338 \\
        \bottomrule
    \end{tabular}}
\end{minipage} &
\begin{minipage}[c]{0.48\linewidth}
\centering
\includegraphics[width=0.98\linewidth]{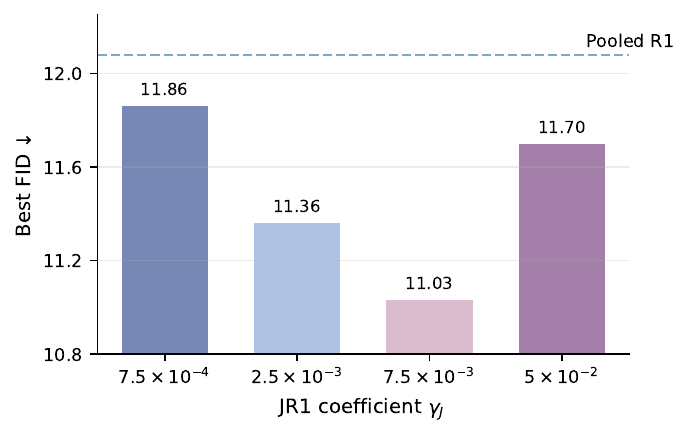}
\end{minipage} \\
\begin{minipage}[t]{0.48\linewidth}
\captionsetup[table]{position=bottom}
\captionsetup{hypcap=false}
\captionof{table}{\textbf{Multi-branch mixing design.}
SDXL ablations differing only in the composition of the random mixings $M$.}
\label{tab:projection}
\end{minipage} &
\begin{minipage}[t]{0.48\linewidth}
\captionsetup[figure]{position=bottom}
\captionsetup{hypcap=false}
\captionof{figure}{\scopechange{\textbf{JR1 coefficient sensitivity on SDXL.}
Best FID for each $\gamma_{\mathrm J}$; the dashed line is the pooled-R1 baseline.}}
\label{fig:jr1-coeff-sensitivity}
\end{minipage}
\end{tabular}
\par\medskip

\noindent\textbf{Analysis of multi-branch random mixing.}
The tested configurations compare different receptive-field supports. Both pairs
average two supports and differ only in which ones, yet $3{\times}3{+}5{\times}5$
reaches $13.35$ against $14.48$ for $1{\times}1{+}3{\times}3$, suggesting a benefit
from broader spatial mixing in this comparison. The triple then improves on
both pairs by at least \scopechange{$2.32$} FID, a margin larger than the $1.13$
separating the pairs. These results suggest complementary benefits from the
three kernels in the tested configurations: averaging kernels with different
supports combines random maps operating over different spatial extents.

\noindent\textbf{\scopechange{Sensitivity to the JR1 coefficient.}}
\scopechange{We scan the JR1 coefficient ($\gamma_{\mathrm J}$) on SDXL and
compare the best FID of each setting with the pooled-R1 baseline. Every
tested JR1 coefficient improves over pooled R1. The best result is obtained
at $\gamma_{\mathrm J}=7.5\times10^{-3}$, while the overall variation across
the scanned range remains modest, suggesting limited sensitivity within the
tested coefficient range under this SDXL configuration.}

\noindent\textbf{Discriminator augmentation under fixed mixing.}
\label{app:augmentation} 
Late-stage degradation can resemble discriminator overfitting, for which
data augmentation is a standard remedy. So we examine
DiffAugment~\citep{zhao2020diffaugment} and adaptive discriminator
augmentation (ADA)~\citep{karras2020ada} to test whether these remedies
alleviate the observed degradation.

We follow the SDXL protocol of Sec.~\ref{app:hparams} and compare
fixed mixing with no augmentation, DiffAugment, or ADA, alongside the
$5$k-refresh reference. All configurations share the Transformer critic
with pooled R1, training data, optimization settings, and a $40$k-step
budget. Performance is evaluated with the shared COCO-10K protocol
(Sec.~\ref{app:protocol}).

\begin{figure}[!htbp]
\centering
% Center content vertically in the first row; align captions in the second.
\begin{tabular}{@{}c@{\hspace{0.03\linewidth}}c@{}}
\begin{minipage}[c]{0.51\linewidth}
\centering
\includegraphics[width=\linewidth]{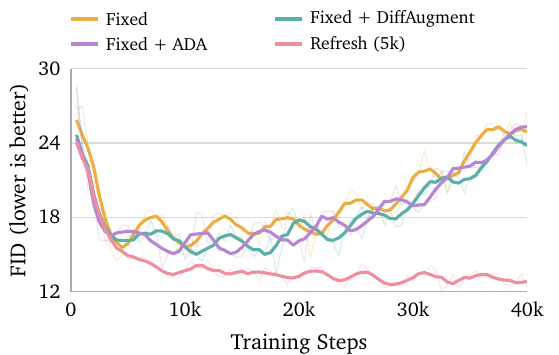}
\end{minipage} &
\begin{minipage}[c]{0.46\linewidth}
\centering\small
\setlength{\tabcolsep}{4pt}
\renewcommand{\arraystretch}{1.55}
\begin{tabular*}{\linewidth}{@{\extracolsep{\fill}}lr@{}}
    \toprule
    Mixing / augmentation & Best FID $\downarrow$ \\
    \midrule
    Fixed & 14.54 \\
    Fixed + DiffAugment & 14.46 \\
    Fixed + ADA & 14.38 \\
    Refresh ($5$k) & 12.08 \\
    \bottomrule
  \end{tabular*}
\end{minipage} \\
\begin{minipage}[t]{0.51\linewidth}
\captionsetup{position=bottom}
\caption{\textbf{Augmentation under fixed mixing.}
  Representative discriminator anti-overfitting interventions do not materially
  alleviate late-stage FID degradation.}
\label{fig:augmentation-controls}
\end{minipage} &
\begin{minipage}[t]{0.46\linewidth}
\captionsetup[table]{position=bottom}
\captionof{table}{\textbf{Best FID on COCO-10K.}
  Augmentation controls under fixed mixing, with $5$k refresh for
  reference. Unaugmented results are from Tab.~\ref{tab:ablation}(a).}
\label{tab:augmentation-controls}
\end{minipage}
\end{tabular}
\end{figure}

The FID trajectories (Fig.~\ref{fig:augmentation-controls}) show that
neither augmentation materially alleviates late-stage degradation under
fixed mixing; Tab.~\ref{tab:augmentation-controls} reports the best FID.
The persistence of this failure mode despite these standard remedies
supports the diagnosis in Sec.~\ref{sec:refresh}.
\FloatBarrier

\section{Conclusion and limitations}
\label{sec:conclusion}

\looseness=-1
{\color{hlblue}We modernize Projected GAN for one-step diffusion distillation with a
Lipschitz-controlled Transformer head, a periodic refresh of the random mixing,
and a Jacobian R1 penalty. The resulting method, ROAD, converges fast and attains the
best FID on both SDXL and SD3.5-M with a single adversarial objective and one
training stage, whereas competing methods combine several objectives and
stages. To our knowledge, ROAD is among the first to obtain a competitive
one-step SD3.5-M generator. Its preference scores, however, still trail those
of few-step competitors (Tab.~\ref{tab:sd35}). Text reaches the critic only
through cross-attention, so an explicit text--image alignment term is a natural
next step, though it would mean replacing DINOv3 with a contrastively trained
encoder such as
SigLIP2 (Tab.~\ref{tab:ablation}(b)). Scaling the frozen encoder does not help
by itself either, since within DINOv3 ViT-B beats ViT-S and ViT-L
(Tab.~\ref{tab:ablation}(b)). We attribute this to the two-player game's
balance, which our refresh mechanism improves but does not settle, and
addressing it directly is future work.}

\subsection*{AI use statement}
\phantomsection
\addcontentsline{toc}{subsection}{AI use statement}

In this work, we used generative AI tools to clean and reformat datasets, and provide
feedback on experiments.
We did not use generative AI tools to generate synthetic datasets; help
develop theoretical models or conceptual frameworks; formulate mathematical
claims; provide critical ingredients for proving mathematical claims; assist
in the writing of proofs; propose or refine hypotheses; implement methods; or
interpret results. Assistance with translation and support for qualitative
and thematic data analysis were not applicable to this work.

Additionally, we used generative AI tools to modify scientific figures,
create or edit software code, summarize or analyze existing literature,
edit the research paper to improve readability, and format references.

We carefully reviewed all AI-assisted work. AI-assisted code modifications
were manually inspected, debugged, and tested before being incorporated into
the final implementation. We also verified all edited portions of the
manuscript to ensure that our intended meaning was preserved. References
formatted with AI assistance were carefully checked for formatting accuracy
against the corresponding bibliographic information. We take responsibility
for the final content of this work, including text, claims, code, and other
artifacts produced with the aid of generative AI.

\subsection*{Reproducibility statement}
\phantomsection
\addcontentsline{toc}{subsection}{Reproducibility statement}
	\scopechange{Training configurations for both base models are listed in
	Sec.~\ref{app:hparams} as extracted from the launch scripts,
	including teacher, batch, learning rates, head and mixing configuration,
	and refresh interval; unavailable launch-script fields are marked in the
	table. The shared evaluation protocol for SDXL and SD3.5-M, including
	prompt set, resolution, FID reference, and CLIP encoder, is given in
	Sec.~\ref{app:protocol}, together with the checkpoint-selection rule
	(best FID on a 500-step evaluation grid, CLIP from the same checkpoint).}
	The JR1 estimator,
including probe distribution, count and lazy scaling, is
specified in Sec.~\ref{app:jr1}. The ROAD implementation and configuration files are publicly available at
\url{https://github.com/Lixiaoou729/ROAD}.

\bibliography{references}

\begin{thebibliography}{71}
\providecommand{\natexlab}[1]{#1}
\providecommand{\url}[1]{\texttt{#1}}
\expandafter\ifx\csname urlstyle\endcsname\relax
  \providecommand{\doi}[1]{doi: #1}\else
  \providecommand{\doi}{doi: \begingroup \urlstyle{rm}\Url}\fi

\bibitem[Yin et~al.(2024{\natexlab{a}})Yin, Gharbi, Park, Zhang, Shechtman,
  Durand, and Freeman]{yin2024dmd2}
Tianwei Yin, Micha{\"e}l Gharbi, Taesung Park, Richard Zhang, Eli Shechtman,
  Fr{\'e}do Durand, and William~T Freeman.
\newblock Improved distribution matching distillation for fast image synthesis.
\newblock In \emph{NeurIPS}, 2024{\natexlab{a}}.

\bibitem[Luo et~al.(2023)Luo, Hu, Zhang, Sun, Li, and
  Zhang]{luo2023diffinstruct}
Weijian Luo, Tianyang Hu, Shifeng Zhang, Jiacheng Sun, Zhenguo Li, and Zhihua
  Zhang.
\newblock {Diff-Instruct}: A universal approach for transferring knowledge from
  pre-trained diffusion models.
\newblock In \emph{NeurIPS}, 2023.

\bibitem[Yin et~al.(2024{\natexlab{b}})Yin, Gharbi, Zhang, Shechtman, Durand,
  Freeman, and Park]{yin2024dmd}
Tianwei Yin, Micha{\"e}l Gharbi, Richard Zhang, Eli Shechtman, Fr{\'e}do
  Durand, William~T Freeman, and Taesung Park.
\newblock One-step diffusion with distribution matching distillation.
\newblock In \emph{CVPR}, 2024{\natexlab{b}}.

\bibitem[Nguyen and Tran(2024)]{nguyen2024swiftbrush}
Thuan~Hoang Nguyen and Anh Tran.
\newblock {SwiftBrush}: One-step text-to-image diffusion model with variational
  score distillation.
\newblock In \emph{CVPR}, 2024.

\bibitem[Zhou et~al.(2024)Zhou, Zheng, Wang, Yin, and Huang]{zhou2024sid}
Mingyuan Zhou, Huangjie Zheng, Zhendong Wang, Mingzhang Yin, and Hai Huang.
\newblock Score identity distillation: Exponentially fast distillation of
  pretrained diffusion models for one-step generation.
\newblock In \emph{ICML}, 2024.

\bibitem[Sauer et~al.(2024{\natexlab{a}})Sauer, Lorenz, Blattmann, and
  Rombach]{sauer2024add}
Axel Sauer, Dominik Lorenz, Andreas Blattmann, and Robin Rombach.
\newblock Adversarial diffusion distillation.
\newblock In \emph{ECCV}, 2024{\natexlab{a}}.

\bibitem[Zhou et~al.(2025{\natexlab{a}})Zhou, Zheng, Gu, Wang, and
  Huang]{zhou2024sida}
Mingyuan Zhou, Huangjie Zheng, Yi~Gu, Zhendong Wang, and Hai Huang.
\newblock Adversarial score identity distillation: Rapidly surpassing the
  teacher in one step.
\newblock In \emph{ICLR}, 2025{\natexlab{a}}.

\bibitem[Sauer et~al.(2024{\natexlab{b}})Sauer, Boesel, Dockhorn, Blattmann,
  Esser, and Rombach]{sauer2024ladd}
Axel Sauer, Frederic Boesel, Tim Dockhorn, Andreas Blattmann, Patrick Esser,
  and Robin Rombach.
\newblock Fast high-resolution image synthesis with latent adversarial
  diffusion distillation.
\newblock In \emph{SIGGRAPH Asia 2024 Conference Papers}, 2024{\natexlab{b}}.

\bibitem[Chen et~al.(2025)Chen, Bandyopadhyay, Zou, and
  Song]{chen2025nitrofusion}
Dar-Yen Chen, Hmrishav Bandyopadhyay, Kai Zou, and Yi-Zhe Song.
\newblock {NitroFusion}: High-fidelity single-step diffusion through dynamic
  adversarial training.
\newblock In \emph{CVPR}, 2025.

\bibitem[Lin et~al.(2025)Lin, Yang, Li, Xia, et~al.]{lin2025apt}
Shanchuan Lin, Xin Yang, Yuxi Li, Xing Xia, et~al.
\newblock Diffusion adversarial post-training for one-step video generation.
\newblock In \emph{ICML}, 2025.

\bibitem[Sauer et~al.(2021)Sauer, Chitta, M{\"u}ller, and
  Geiger]{sauer2021projected}
Axel Sauer, Kashyap Chitta, Jonas M{\"u}ller, and Andreas Geiger.
\newblock Projected {GANs} converge faster.
\newblock In \emph{NeurIPS}, 2021.

\bibitem[Sauer et~al.(2023)Sauer, Karras, Laine, Geiger, and
  Aila]{sauer2023stylegant}
Axel Sauer, Tero Karras, Samuli Laine, Andreas Geiger, and Timo Aila.
\newblock {StyleGAN-T}: Unlocking the power of {GANs} for fast large-scale
  text-to-image synthesis.
\newblock In \emph{ICML}, 2023.

\bibitem[Sim{\'e}oni et~al.(2026)Sim{\'e}oni, Vo, Seitzer, Baldassarre, Oquab,
  Jose, Khalidov, Szafraniec, Yi, Ramamonjisoa, Massa, Haziza, Wehrstedt, Wang,
  Darcet, Moutakanni, Sentana, Roberts, Vedaldi, Tolan, Brandt, Couprie,
  Mairal, J{\'e}gou, Labatut, and Bojanowski]{simeoni2025dinov3}
Oriane Sim{\'e}oni, Huy~V. Vo, Maximilian Seitzer, Federico Baldassarre, Maxime
  Oquab, Cijo Jose, Vasil Khalidov, Marc Szafraniec, Seungeun Yi, Micha{\"e}l
  Ramamonjisoa, Francisco Massa, Daniel Haziza, Luca Wehrstedt, Jianyuan Wang,
  Timoth{\'e}e Darcet, Th{\'e}o Moutakanni, Leonel Sentana, Claire Roberts,
  Andrea Vedaldi, Jamie Tolan, John Brandt, Camille Couprie, Julien Mairal,
  Herv{\'e} J{\'e}gou, Patrick Labatut, and Piotr Bojanowski.
\newblock {DINOv3}.
\newblock \emph{TMLR}, 2026.

\bibitem[Bolya et~al.(2025)Bolya, Huang, Sun, Cho, Madotto, Wei, Ma, Zhi,
  Rajasegaran, Rasheed, Wang, Monteiro, Xu, Dong, Ravi, Li, Doll{\'a}r, and
  Feichtenhofer]{bolya2025pe}
Daniel Bolya, Po-Yao Huang, Peize Sun, Jang~Hyun Cho, Andrea Madotto, Chen Wei,
  Tengyu Ma, Jiale Zhi, Jathushan Rajasegaran, Hanoona Rasheed, Junke Wang,
  Marco Monteiro, Hu~Xu, Shiyu Dong, Nikhila Ravi, Daniel Li, Piotr Doll{\'a}r,
  and Christoph Feichtenhofer.
\newblock {Perception Encoder}: The best visual embeddings are not at the
  output of the network.
\newblock In \emph{NeurIPS}, 2025.

\bibitem[Mescheder et~al.(2018)Mescheder, Geiger, and Nowozin]{mescheder2018r1}
Lars Mescheder, Andreas Geiger, and Sebastian Nowozin.
\newblock Which training methods for {GANs} do actually converge?
\newblock In \emph{ICML}, 2018.

\bibitem[Karras et~al.(2020{\natexlab{a}})Karras, Laine, Aittala, Hellsten,
  Lehtinen, and Aila]{karras2020stylegan2}
Tero Karras, Samuli Laine, Miika Aittala, Janne Hellsten, Jaakko Lehtinen, and
  Timo Aila.
\newblock Analyzing and improving the image quality of {StyleGAN}.
\newblock In \emph{CVPR}, 2020{\natexlab{a}}.

\bibitem[Kang et~al.(2024)Kang, Zhang, Barnes, Paris, Kwak, Park, Shechtman,
  Zhu, and Park]{kang2024diffusion2gan}
Minguk Kang, Richard Zhang, Connelly Barnes, Sylvain Paris, Suha Kwak, Jaesik
  Park, Eli Shechtman, Jun-Yan Zhu, and Taesung Park.
\newblock Distilling diffusion models into conditional {GANs}.
\newblock In \emph{ECCV}, 2024.

\bibitem[Zheng and Yang(2025)]{zheng2025d2o}
Bowen Zheng and Tianming Yang.
\newblock Revisiting diffusion models: From generative pre-training to one-step
  generation.
\newblock In \emph{ICML}, 2025.

\bibitem[Suvorov et~al.(2022)Suvorov, Logacheva, Mashikhin, Remizova, Ashukha,
  Silvestrov, Kong, Goka, Park, and Lempitsky]{suvorov2022lama}
Roman Suvorov, Elizaveta Logacheva, Anton Mashikhin, Anastasia Remizova,
  Arsenii Ashukha, Aleksei Silvestrov, Naejin Kong, Harshith Goka, Kiwoong
  Park, and Victor Lempitsky.
\newblock Resolution-robust large mask inpainting with fourier convolutions.
\newblock In \emph{WACV}, 2022.

\bibitem[Kang et~al.(2023)Kang, Zhu, Zhang, Park, Shechtman, Paris, and
  Park]{kang2023gigagan}
Minguk Kang, Jun-Yan Zhu, Richard Zhang, Jaesik Park, Eli Shechtman, Sylvain
  Paris, and Taesung Park.
\newblock Scaling up {GANs} for text-to-image synthesis.
\newblock In \emph{CVPR}, 2023.

\bibitem[Goodfellow et~al.(2014)Goodfellow, Pouget-Abadie, Mirza, Xu,
  Warde-Farley, Ozair, Courville, and Bengio]{goodfellow2014gan}
Ian Goodfellow, Jean Pouget-Abadie, Mehdi Mirza, Bing Xu, David Warde-Farley,
  Sherjil Ozair, Aaron Courville, and Yoshua Bengio.
\newblock Generative adversarial nets.
\newblock In \emph{NeurIPS}, 2014.

\bibitem[Zhou et~al.(2019)Zhou, Liang, Song, Yu, Wang, Zhang, Yu, and
  Zhang]{zhou2019lipschitz}
Zhiming Zhou, Jiadong Liang, Yuxuan Song, Lantao Yu, Hongwei Wang, Weinan
  Zhang, Yong Yu, and Zhihua Zhang.
\newblock Lipschitz generative adversarial nets.
\newblock In \emph{ICML}, volume~97 of \emph{Proceedings of Machine Learning
  Research}, pages 7584--7593, 2019.

\bibitem[Miyato et~al.(2018)Miyato, Kataoka, Koyama, and Yoshida]{miyato2018sn}
Takeru Miyato, Toshiki Kataoka, Masanori Koyama, and Yuichi Yoshida.
\newblock Spectral normalization for generative adversarial networks.
\newblock In \emph{ICLR}, 2018.

\bibitem[Kim et~al.(2021)Kim, Papamakarios, and Mnih]{kim2021lipschitz}
Hyunjik Kim, George Papamakarios, and Andriy Mnih.
\newblock The {Lipschitz} constant of self-attention.
\newblock In \emph{ICML}, 2021.

\bibitem[Liu et~al.(2022)Liu, Hu, Lin, Yao, Xie, et~al.]{liu2022swinv2}
Ze~Liu, Han Hu, Yutong Lin, Zhuliang Yao, Zhenda Xie, et~al.
\newblock Swin transformer {V2}: Scaling up capacity and resolution.
\newblock In \emph{CVPR}, 2022.

\bibitem[Lee et~al.(2022)Lee, Chang, Jiang, Zhang, Tu, and Liu]{lee2022vitgan}
Kwonjoon Lee, Huiwen Chang, Lu~Jiang, Han Zhang, Zhuowen Tu, and Ce~Liu.
\newblock {ViTGAN}: Training {GANs} with vision transformers.
\newblock In \emph{ICLR}, 2022.

\bibitem[Zhao et~al.(2020)Zhao, Liu, Lin, Zhu, and Han]{zhao2020diffaugment}
Shengyu Zhao, Zhijian Liu, Ji~Lin, Jun-Yan Zhu, and Song Han.
\newblock Differentiable augmentation for data-efficient {GAN} training.
\newblock In \emph{NeurIPS}, 2020.

\bibitem[Karras et~al.(2020{\natexlab{b}})Karras, Aittala, Hellsten, Laine,
  Lehtinen, and Aila]{karras2020ada}
Tero Karras, Miika Aittala, Janne Hellsten, Samuli Laine, Jaakko Lehtinen, and
  Timo Aila.
\newblock Training generative adversarial networks with limited data.
\newblock In \emph{NeurIPS}, 2020{\natexlab{b}}.

\bibitem[Novak et~al.(2018)Novak, Bahri, Abolafia, Pennington, and
  Sohl-Dickstein]{novak2018sensitivity}
Roman Novak, Yasaman Bahri, Daniel~A. Abolafia, Jeffrey Pennington, and Jascha
  Sohl-Dickstein.
\newblock Sensitivity and generalization in neural networks: an empirical
  study.
\newblock In \emph{ICLR}, 2018.

\bibitem[Varga et~al.(2018)Varga, Csisz{\'a}rik, and
  Zombori]{varga2018gradient}
D{\'a}niel Varga, Adri{\'a}n Csisz{\'a}rik, and Zsolt Zombori.
\newblock Gradient regularization improves accuracy of discriminative models.
\newblock \emph{Schedae Informaticae}, 27:\penalty0 31--45, 2018.

\bibitem[Hutchinson(1989)]{hutchinson1989}
Michael~F Hutchinson.
\newblock A stochastic estimator of the trace of the influence matrix for
  {Laplacian} smoothing splines.
\newblock \emph{Communications in Statistics---Simulation and Computation},
  18\penalty0 (3):\penalty0 1059--1076, 1989.

\bibitem[Hoffman et~al.(2019)Hoffman, Roberts, and Yaida]{hoffman2019jacobian}
Judy Hoffman, Daniel~A Roberts, and Sho Yaida.
\newblock Robust learning with {Jacobian} regularization.
\newblock \emph{arXiv:1908.02729}, 2019.

\bibitem[Podell et~al.(2024)Podell, English, Lacey, Blattmann, Dockhorn,
  M{\"u}ller, Penna, and Rombach]{podell2024sdxl}
Dustin Podell, Zion English, Kyle Lacey, Andreas Blattmann, Tim Dockhorn, Jonas
  M{\"u}ller, Joe Penna, and Robin Rombach.
\newblock {SDXL}: Improving latent diffusion models for high-resolution image
  synthesis.
\newblock In \emph{ICLR}, 2024.

\bibitem[Schuhmann et~al.(2022)]{schuhmann2022laion5b}
Christoph Schuhmann et~al.
\newblock {LAION-5B}: An open large-scale dataset for training next generation
  image-text models.
\newblock In \emph{NeurIPS}, 2022.

\bibitem[Zhou et~al.(2025{\natexlab{b}})Zhou, Wang, Zheng, and
  Huang]{zhou2024sidlsg}
Mingyuan Zhou, Zhendong Wang, Huangjie Zheng, and Hai Huang.
\newblock Long and short guidance in score identity distillation for one-step
  text-to-image generation.
\newblock In \emph{ICLR}, 2025{\natexlab{b}}.

\bibitem[Kirstain et~al.(2023)Kirstain, Polyak, Singer, Matiana, Penna, and
  Levy]{kirstain2023pickscore}
Yuval Kirstain, Adam Polyak, Uriel Singer, Shahbuland Matiana, Joe Penna, and
  Omer Levy.
\newblock {Pick-a-Pic}: An open dataset of user preferences for text-to-image
  generation.
\newblock In \emph{NeurIPS}, 2023.

\bibitem[Wu et~al.(2023)Wu, Hao, Sun, Chen, Zhu, Zhao, and Li]{wu2023hpsv2}
Xiaoshi Wu, Yiming Hao, Keqiang Sun, Yixiong Chen, Feng Zhu, Rui Zhao, and
  Hongsheng Li.
\newblock Human preference score v2: A solid benchmark for evaluating human
  preferences of text-to-image synthesis.
\newblock \emph{arXiv preprint arXiv:2306.09341}, 2023.

\bibitem[Lu et~al.(2025)Lu, Ren, Lin, Xia, Wang, Xiao, Ma, Xie, and
  Lai]{lu2025adm}
Yanzuo Lu, Yuxi Ren, Shanchuan Lin, Xin Xia, Xing Wang, Xuefeng Xiao, Andy~J
  Ma, Xiaohua Xie, and Jian-Huang Lai.
\newblock Adversarial distribution matching for diffusion distillation towards
  efficient image and video synthesis.
\newblock In \emph{ICCV}, 2025.

\bibitem[Ren et~al.(2024)Ren, Xia, Lu, Zhang, Wu, Xie, Wang, and
  Xiao]{ren2024hypersd}
Yuxi Ren, Xin Xia, Yanzuo Lu, Jiacheng Zhang, Jie Wu, Pan Xie, Xing Wang, and
  Xuefeng Xiao.
\newblock {Hyper-SD}: Trajectory segmented consistency model for efficient
  image synthesis.
\newblock In \emph{NeurIPS}, 2024.

\bibitem[Xu et~al.(2023)Xu, Liu, Wu, Tong, Li, Ding, Tang, and
  Dong]{xu2023imagereward}
Jiazheng Xu, Xiao Liu, Yuchen Wu, Yuxuan Tong, Qinkai Li, Ming Ding, Jie Tang,
  and Yuxiao Dong.
\newblock {ImageReward}: Learning and evaluating human preferences for
  text-to-image generation.
\newblock In \emph{NeurIPS}, 2023.

\bibitem[Ghosh et~al.(2023)Ghosh, Hajishirzi, and Schmidt]{ghosh2023geneval}
Dhruba Ghosh, Hannaneh Hajishirzi, and Ludwig Schmidt.
\newblock {GenEval}: An object-focused framework for evaluating text-to-image
  alignment.
\newblock In \emph{NeurIPS}, 2023.

\bibitem[Starodubcev et~al.(2026)Starodubcev, Drobyshevskiy, Kuznedelev,
  Babenko, and Baranchuk]{swd2025}
Nikita Starodubcev, Ilya Drobyshevskiy, Denis Kuznedelev, Artem Babenko, and
  Dmitry Baranchuk.
\newblock Scale-wise distillation of diffusion models.
\newblock In \emph{ICLR}, 2026.

\bibitem[Lin et~al.(2024)Lin, Wang, and Yang]{lin2024sdxllightning}
Shanchuan Lin, Anran Wang, and Xiao Yang.
\newblock {SDXL-Lightning}: Progressive adversarial diffusion distillation.
\newblock \emph{arXiv:2402.13929}, 2024.

\bibitem[Ge et~al.(2026)]{ge2026senseflow}
Xingtong Ge et~al.
\newblock {SenseFlow}: Scaling distribution matching for flow-based
  text-to-image distillation.
\newblock In \emph{ICLR}, 2026.

\bibitem[Zhou et~al.(2025{\natexlab{c}})Zhou, Gu, Zheng, Song, He, Zhang, Hu,
  and Yang]{zhou2025scoreflow}
Mingyuan Zhou, Yi~Gu, Huangjie Zheng, Liangchen Song, Guande He, Yizhe Zhang,
  Wenze Hu, and Yinfei Yang.
\newblock Score distillation of flow matching models.
\newblock \emph{arXiv:2509.25127}, 2025{\natexlab{c}}.

\bibitem[Tschannen et~al.(2025)Tschannen, Gritsenko, Wang, Naeem,
  Alabdulmohsin, Parthasarathy, Evans, Beyer, Xia, Mustafa, H{\'e}naff,
  Harmsen, Steiner, and Zhai]{tschannen2025siglip2}
Michael Tschannen, Alexey Gritsenko, Xiao Wang, Muhammad~Ferjad Naeem, Ibrahim
  Alabdulmohsin, Nikhil Parthasarathy, Talfan Evans, Lucas Beyer, Ye~Xia, Basil
  Mustafa, Olivier H{\'e}naff, Jeremiah Harmsen, Andreas Steiner, and Xiaohua
  Zhai.
\newblock {SigLIP 2}: Multilingual vision-language encoders with improved
  semantic understanding, localization, and dense features.
\newblock \emph{arXiv:2502.14786}, 2025.

\bibitem[Chong et~al.(2024)Chong, Singh, Li, Lu, and Forsyth]{chong2024p2d}
Min~Jin Chong, Krishna~Kumar Singh, Yijun Li, Jingwan Lu, and David Forsyth.
\newblock {P2D}: Plug and play discriminator for accelerating {GAN} frameworks.
\newblock In \emph{WACV}, 2024.

\bibitem[He et~al.(2015)He, Zhang, Ren, and Sun]{he2015delving}
Kaiming He, Xiangyu Zhang, Shaoqing Ren, and Jian Sun.
\newblock Delving deep into rectifiers: Surpassing human-level performance on
  {ImageNet} classification.
\newblock In \emph{ICCV}, pages 1026--1034, 2015.

\bibitem[Araujo et~al.(2021)Araujo, Negrevergne, Chevaleyre, and
  Atif]{araujo2021lipschitz}
Alexandre Araujo, Benjamin Negrevergne, Yann Chevaleyre, and Jamal Atif.
\newblock On {Lipschitz} regularization of convolutional layers using
  {Toeplitz} matrix theory.
\newblock In \emph{AAAI}, volume~35, pages 6661--6669, 2021.

\bibitem[Sedghi et~al.(2019)Sedghi, Gupta, and Long]{sedghi2019singular}
Hanie Sedghi, Vineet Gupta, and Philip~M. Long.
\newblock The singular values of convolutional layers.
\newblock In \emph{ICLR}, 2019.

\bibitem[Ding et~al.(2021{\natexlab{a}})Ding, Zhang, Han, and
  Ding]{ding2021dbb}
Xiaohan Ding, Xiangyu Zhang, Jungong Han, and Guiguang Ding.
\newblock Diverse branch block: Building a convolution as an inception-like
  unit.
\newblock In \emph{CVPR}, pages 10886--10895, 2021{\natexlab{a}}.

\bibitem[Ding et~al.(2021{\natexlab{b}})Ding, Zhang, Ma, Han, Ding, and
  Sun]{ding2021repvgg}
Xiaohan Ding, Xiangyu Zhang, Ningning Ma, Jungong Han, Guiguang Ding, and Jian
  Sun.
\newblock {RepVGG}: Making {VGG}-style convnets great again.
\newblock In \emph{CVPR}, pages 13733--13742, 2021{\natexlab{b}}.

\bibitem[Marchenko and Pastur(1967)]{marchenko1967}
Vladimir~A Marchenko and Leonid~A Pastur.
\newblock Distribution of eigenvalues for some sets of random matrices.
\newblock \emph{Mathematics of the USSR-Sbornik}, 1\penalty0 (4):\penalty0
  457--483, 1967.

\bibitem[Simonyan et~al.(2014)Simonyan, Vedaldi, and
  Zisserman]{simonyan2014saliency}
Karen Simonyan, Andrea Vedaldi, and Andrew Zisserman.
\newblock Deep inside convolutional networks: Visualising image classification
  models and saliency maps.
\newblock In \emph{ICLR Workshop}, 2014.

\bibitem[Sauer et~al.(2022)Sauer, Schwarz, and Geiger]{sauer2022styleganxl}
Axel Sauer, Katja Schwarz, and Andreas Geiger.
\newblock {StyleGAN-XL}: Scaling {StyleGAN} to large diverse datasets.
\newblock In \emph{SIGGRAPH}, 2022.

\bibitem[Jiang et~al.(2021)Jiang, Chang, and Wang]{jiang2021transgan}
Yifan Jiang, Shiyu Chang, and Zhangyang Wang.
\newblock {TransGAN}: Two pure transformers can make one strong {GAN}, and that
  can scale up.
\newblock In \emph{NeurIPS}, 2021.

\bibitem[Zhang et~al.(2022)Zhang, Gu, Zhang, Bao, Chen, Wen, Wang, and
  Guo]{zhang2022styleswin}
Bowen Zhang, Shuyang Gu, Bo~Zhang, Jianmin Bao, Dong Chen, Fang Wen, Yong Wang,
  and Baining Guo.
\newblock {StyleSwin}: Transformer-based {GAN} for high-resolution image
  generation.
\newblock In \emph{CVPR}, 2022.

\bibitem[Zhao et~al.(2021)Zhao, Zhang, Chen, Metaxas, and Zhang]{zhao2021hit}
Long Zhao, Zhixing Zhang, Ting Chen, Dimitris~N Metaxas, and Han Zhang.
\newblock Improved transformer for high-resolution {GANs}.
\newblock In \emph{NeurIPS}, 2021.

\bibitem[Park and Kim(2022)]{park2022styleformer}
Jeeseung Park and Younggeun Kim.
\newblock {Styleformer}: Transformer based generative adversarial networks with
  style vector.
\newblock In \emph{CVPR}, 2022.

\bibitem[Hudson and Zitnick(2021)]{hudson2021ganformer}
Drew~A Hudson and C~Lawrence Zitnick.
\newblock Generative adversarial transformers.
\newblock In \emph{ICML}, volume 139 of \emph{Proceedings of Machine Learning
  Research}, pages 4487--4499, 2021.

\bibitem[Qi et~al.(2023)Qi, Wang, Chen, Shi, and Zhang]{qi2023lipsformer}
Xianbiao Qi, Jianan Wang, Yihao Chen, Yukai Shi, and Lei Zhang.
\newblock {LipsFormer}: Introducing {Lipschitz} continuity to vision
  transformers.
\newblock In \emph{ICLR}, 2023.

\bibitem[Neyshabur et~al.(2017)Neyshabur, Bhojanapalli, and
  Chakrabarti]{neyshabur2017mrp}
Behnam Neyshabur, Srinadh Bhojanapalli, and Ayan Chakrabarti.
\newblock Stabilizing {GAN} training with multiple random projections.
\newblock \emph{arXiv:1705.07831}, 2017.

\bibitem[Cannings and Samworth(2017)]{cannings2017random}
Timothy~I Cannings and Richard~J Samworth.
\newblock Random-projection ensemble classification.
\newblock \emph{Journal of the Royal Statistical Society Series B}, 2017.

\bibitem[Bandyopadhyay et~al.(2025)]{bandyopadhyay2025sd35flash}
Hmrishav Bandyopadhyay et~al.
\newblock {SD3.5-Flash}: Distribution-guided distillation of generative flows.
\newblock \emph{arXiv:2509.21318}, 2025.

\bibitem[Nikishin et~al.(2022)Nikishin, Schwarzer, D'Oro, Bacon, and
  Courville]{nikishin2022primacy}
Evgenii Nikishin, Max Schwarzer, Pierluca D'Oro, Pierre-Luc Bacon, and Aaron
  Courville.
\newblock The primacy bias in deep reinforcement learning.
\newblock In \emph{ICML}, 2022.

\bibitem[Sokar et~al.(2023)Sokar, Agarwal, Castro, and Evci]{sokar2023dormant}
Ghada Sokar, Rishabh Agarwal, Pablo~Samuel Castro, and Utku Evci.
\newblock The dormant neuron phenomenon in deep reinforcement learning.
\newblock In \emph{ICML}, 2023.

\bibitem[Ash and Adams(2020)]{ash2020warm}
Jordan~T Ash and Ryan~P Adams.
\newblock On warm-starting neural network training.
\newblock In \emph{NeurIPS}, 2020.

\bibitem[Dohare et~al.(2024)Dohare, Hernandez-Garcia, Lan, Rahman, Mahmood, and
  Sutton]{dohare2024plasticity}
Shibhansh Dohare, J~Fernando Hernandez-Garcia, Qingfeng Lan, Parash Rahman,
  A~Rupam Mahmood, and Richard~S Sutton.
\newblock Loss of plasticity in deep continual learning.
\newblock \emph{Nature}, 632:\penalty0 768--774, 2024.

\bibitem[Roth et~al.(2017)Roth, Lucchi, Nowozin, and
  Hofmann]{roth2017stabilizing}
Kevin Roth, Aurelien Lucchi, Sebastian Nowozin, and Thomas Hofmann.
\newblock Stabilizing training of generative adversarial networks through
  regularization.
\newblock In \emph{NeurIPS}, 2017.

\bibitem[Huang et~al.(2024)Huang, Gokaslan, Kuleshov, and
  Tompkin]{huang2024r3gan}
Yiwen Huang, Aaron Gokaslan, Volodymyr Kuleshov, and James Tompkin.
\newblock The {GAN} is dead; long live the {GAN}! {A} modern {GAN} baseline.
\newblock In \emph{NeurIPS}, 2024.

\bibitem[Jakubovitz and Giryes(2018)]{jakubovitz2018jacobian}
Daniel Jakubovitz and Raja Giryes.
\newblock Improving {DNN} robustness to adversarial attacks using {Jacobian}
  regularization.
\newblock In \emph{ECCV}, pages 514--529, 2018.

\end{thebibliography}
\bibliographystyle{unsrtnat}

\FloatBarrier
\appendix
\section{Lipschitz Analysis of the Transformer Head}
\label{app:lipschitz}

{\color{revtext}
This appendix supports Sec.~\ref{sec:head}. We bound the sensitivity of one
Transformer head, viewed as the map from the mixed tokens of its feature level
to the dense logits of that level at a fixed prompt.

\noindent\textbf{Setting.}
Fix a feature level $\ell$ and a prompt $c$. The head reads the mixed tokens
$X=\hat f_\ell\in\mathbb R^{N_\ell\times d}$ of
Eq.~(\plaineqref{eq:mixing}), where $d$ is the token width, and returns the
$N_\ell$ logits $D^\ell(X,c)\in\mathbb R^{N_\ell}$ of level $\ell$. The head
has $B_\ell$ pre-norm blocks (Sec.~\ref{app:road-head}). Each block
contains a cosine self-attention branch followed by an FFN branch. The first
block also contains a cosine cross-attention branch between the two. Its
queries are the patch tokens, and its keys and values are computed from the
$N_c$ text tokens of $c$. Every branch applies RMSNorm to its input, passes the
result through its sublayer, multiplies the output channel-wise by a learnable
scale in $\mathbb R^d$, and adds the product to the input. We write
$\alpha^{(b)}_{\mathrm{sa}}$ and $\alpha^{(b)}_{\mathrm{ffn}}$ for the residual
scales of block $b$ and $\alpha_{\mathrm{ca}}$ for the cross-attention gate, and
we set $a^{(b)}_{\mathrm{sa}}=\lVert\alpha^{(b)}_{\mathrm{sa}}\rVert_\infty$,
$a^{(b)}_{\mathrm{ffn}}=\lVert\alpha^{(b)}_{\mathrm{ffn}}\rVert_\infty$, and
$a_{\mathrm{ca}}=\lVert\alpha_{\mathrm{ca}}\rVert_\infty$. A final RMSNorm and
the linear logit head of Sec.~\ref{app:road-head} produce the logits.

We measure token matrices in the Frobenius norm, logit vectors in the
Euclidean norm, and Jacobians in the operator norm induced by these two norms.
We write $\mathrm{Lip}(f;\mathcal S)$ for the Lipschitz constant of $f$ on a
set $\mathcal S$. The RMSNorm layers use $\varepsilon>0$ and learnable weights $w$
with $\lVert w\rVert_\infty\le\bar w$, and we set
$c_{\mathrm n}=\bar w/\sqrt{\varepsilon}$ and $\nu=\sqrt d\,\bar w$. The FFN
activation is GELU, and
$c_{\mathrm{ffn}}=\sup_{t\in\mathbb R}\lvert\mathrm{GELU}'(t)\rvert\le1.13$.
Attention head $h$ has temperature $s_h$, and its clamped parametrization
(Sec.~\ref{app:road-head}) guarantees $0<s_h\le s_{\max}$. In cross-attention head $h$, let $V_{c,h}$ be
the value matrix computed from the text tokens of $c$, and let
$\kappa_c=\max_h\lVert V_{c,h}\rVert_2$.

\begin{lemma}[Conditional Lipschitz bound for a Transformer head]
\label{lem:lip}
Assume that every spectrally normalized linear map $W$ of the head satisfies
$\lVert W\rVert_2\le1$. Let $\mathcal S\subset\mathbb R^{N_\ell\times d}$ be a
convex set such that, for every $X\in\mathcal S$, every query and key vector
has norm at least $\rho>0$ immediately before $\ell_2$-normalization. Set
\[
c_{\mathrm{sa}}=\sqrt{N_\ell}\,\Bigl(1+\frac{s_{\max}\sqrt{N_\ell}\,\nu}{\rho}\Bigr),
\qquad
c_{\mathrm{ca}}=\frac{s_{\max}\sqrt{N_c}\,\kappa_c}{2\rho}.
\]
Then the Jacobian of $D^\ell(\cdot,c)$ has norm at most $\Lambda_\ell$ at every
$X\in\mathcal S$, and $\mathrm{Lip}\bigl(D^\ell(\cdot,c);\mathcal S\bigr)\le\Lambda_\ell$,
where
\begin{equation}
\Lambda_\ell
=c_{\mathrm n}\bigl(1+a_{\mathrm{ca}}c_{\mathrm n}c_{\mathrm{ca}}\bigr)
\prod_{b=1}^{B_\ell}\bigl(1+a_{\mathrm{sa}}^{(b)}c_{\mathrm n}c_{\mathrm{sa}}\bigr)
\bigl(1+a_{\mathrm{ffn}}^{(b)}c_{\mathrm n}c_{\mathrm{ffn}}\bigr).
\label{eq:lip}
\end{equation}
\end{lemma}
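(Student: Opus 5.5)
The proof follows the chain rule through the head's composition. Write $D^\ell(\cdot,c)=H\circ N_{\mathrm{fin}}\circ T_{B_\ell}\circ\cdots\circ T_1$. Here $H$ is the spectrally normalized logit readout with $\lVert H\rVert_2\le1$ and $N_{\mathrm{fin}}$ is the final RMSNorm. Each block map $T_b$ is a composition of residual branches $X\mapsto X+\alpha\odot\Phi(\mathrm{RMSNorm}(X))$; block $1$ has three such branches (self-attention, cross-attention, FFN) and every other block has two. At any $X\in\mathcal S$ the Jacobian of the whole map is the product of the branch Jacobians, so its operator norm is at most the product of their norms. A branch Jacobian has the form $I+\mathrm{diag}(\alpha)\,J_\Phi\,J_{\mathrm{norm}}$, whose norm is at most $1+a\,\lVert J_\Phi\rVert\,\lVert J_{\mathrm{norm}}\rVert$. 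Hence it suffices to show three things:
\begin{itemize}
\item $\lVert J_{\mathrm{norm}}\rVert\le c_{\mathrm n}$ for every RMSNorm, including the final one;
\item $\lVert J_\Phi\rVert\le c_{\mathrm{sa}},\,c_{\mathrm{ca}},\,c_{\mathrm{ffn}}$ for the three sublayer types;
\item the resulting Jacobian bound at every point of the convex set $\mathcal S$ implies $\mathrm{Lip}\le\Lambda_\ell$.
\end{itemize}
The third point follows from the mean value inequality along segments, which stay inside $\mathcal S$ by convexity.

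\textbf{RMSNorm.} Per token, $x\mapsto w\odot x/\sqrt{\lVert x\rVert^2/d+\varepsilon}$ has Jacobian $\mathrm{diag}(w)\,r^{-1}\bigl(I-xx^\top/(d\,r^2)\bigr)$ with $r=\sqrt{\lVert x\rVert^2/d+\varepsilon}$. The bracketed matrix has eigenvalues in $[0,1]$ and $r\ge\sqrt\varepsilon$, so the norm is at most $\bar w/\sqrt\varepsilon=c_{\mathrm n}$. The map acts tokenwise, so the same bound holds for the Frobenius-induced norm on token matrices. Each output token also has norm at most $\sqrt d\,\bar w=\nu$; this is where $\nu$ enters the attention bound.

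\textbf{FFN.} The FFN is $W_2\,\mathrm{GELU}(W_1\cdot)$ with $\lVert W_i\rVert_2\le1$, so its Jacobian norm is at most $c_{\mathrm{ffn}}$.

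\textbf{Cross-attention.} The input $Y$ affects only the queries $q_i=W_QY_i$ (row $i$ of $Y$). The output row is $\sum_j \mathrm{softmax}_j\bigl(s_h\,\hat q_i^\top\hat k_j\bigr)V_{c,h,j}$, where $\hat q=q/\lVert q\rVert$ and $\hat k=k/\lVert k\rVert$. The derivative is a product of four factors:
\begin{itemize}
\item $q\mapsto\hat q$ has norm at most $1/\rho$ by the norm floor;
\item multiplying by $s_h\hat K^\top$ has norm at most $s_{\max}\sqrt{N_c}$, since the rows of $\hat K$ are unit vectors;
\item the softmax Jacobian $\mathrm{diag}(p)-pp^\top$ has norm at most $1/2$;
\item multiplying by $V^\top$ has norm at most $\kappa_c$.
\end{itemize}
Their product is $s_{\max}\sqrt{N_c}\,\kappa_c/(2\rho)$. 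This is computed tokenwise and per head. Combining heads through the spectrally normalized output projection needs a concatenation or sum argument; I would take the maximum over heads as the paper's constants suggest, with any slack absorbed through $\lVert W_O\rVert_2\le1$.

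\textbf{Self-attention.} I expect this to be the main obstacle, because $Y$ now enters the queries, the keys, and the values. I would split the Jacobian into two parts.
\begin{itemize}
\item \emph{Value path, with the attention matrix $P$ held fixed.} $P$ is row-stochastic, so its norm is at most $\sqrt{N_\ell}$, and $W_V$ contributes at most $1$. This gives the first term $\sqrt{N_\ell}$.
\item \emph{Logit path.} A perturbation changes the logits $s_h\,\hat q_i^\top\hat k_j$ through $\hat q$ and $\hat k$, each with gain at most $1/\rho$, scaled by $s_{\max}$. The softmax Jacobian then multiplies the value matrix. Its rows have norm at most $\nu$, since they are spectrally normalized maps of RMSNormed tokens, so $\lVert V\rVert_F\le\sqrt{N_\ell}\,\nu$. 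Collecting the token-count factors from summing over $i$ and $j$ should give $\sqrt{N_\ell}\cdot s_{\max}\sqrt{N_\ell}\,\nu/\rho$.
\end{itemize}
Tracking these $\sqrt{N_\ell}$ factors in the Frobenius norm, and checking whether the softmax factor of $1/2$ offsets the fact that both $\hat q$ and $\hat k$ vary, is the delicate bookkeeping. My first attempt is to bound the query- and key-variation contributions separately, each by $1/(2\rho)$, and add them to get the $1/\rho$ in $c_{\mathrm{sa}}$.

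Multiplying the per-branch factors over all blocks, together with the leading $c_{\mathrm n}$ for the final norm and readout, gives $\Lambda_\ell$.
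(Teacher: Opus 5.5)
Your overall route is the paper's: the chain rule through the residual branches with factors $1+a\,c_{\mathrm n}c$, the RMSNorm eigenvalue argument, a value-path/weight-path split for self-attention, queries-only variation for cross-attention, and the mean-value inequality along segments of the convex set $\mathcal S$. Your four-factor product for cross-attention, and your plan to bound the query and key variations separately, are equivalent to the paper's entrywise bound $\lvert\delta A_{h,ij}\rvert\le s_{\max}(\lVert\delta\widehat q_{h,i}\rVert_2+\lVert\delta\widehat k_{h,j}\rVert_2)$ and give the same constants. Using $\lVert\widehat Q_h\rVert_2,\lVert\widehat K_h\rVert_2\le\sqrt{N_\ell}$, you get $\lVert\delta P_h\rVert_F\le\frac{s_{\max}\sqrt{N_\ell}}{2\rho}\bigl(\lVert\delta Q_h\rVert_F+\lVert\delta K_h\rVert_F\bigr)$. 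So the softmax factor $\tfrac12$ does exactly offset the two varying sides.

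The genuine gap is the combination over attention heads, and the mechanism you propose for it would fail. Suppose you bound each head on its own via $\lVert W_{Q,h}\rVert_2\le1$, that is, $\lVert\delta Q_h\rVert_F\le\lVert\delta Z\rVert_F$ for every $h$. Then your bound on the concatenated output $[\delta O_1,\dots,\delta O_H]$ carries an extra factor $\sqrt H$. Taking a maximum over heads does not remove it. Nor can $\lVert W_O\rVert_2\le1$ absorb it, since $W_O$ acts after the concatenation and is only a contraction.

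The missing idea, stated explicitly in the paper, is that spectral normalization acts on the full projections $W_Q,W_K,W_V$, and the per-head maps are their column blocks. Hence $\sum_h\lVert\delta Q_h\rVert_F^2=\lVert\delta Z\,W_Q\rVert_F^2\le\lVert\delta Z\rVert_F^2$, and likewise for $K$ and $V$. Combine this with $\lVert[\delta O_1,\dots,\delta O_H]\rVert_F^2=\sum_h\lVert\delta O_h\rVert_F^2$ and a head-uniform per-head constant (which is why $\kappa_c$ is a maximum over heads). This yields $c_{\mathrm{ca}}$ with no dependence on $H$.

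You need the same identity in both self-attention paths. For the value path it gives $\sum_h\lVert P_h\,\delta V_h\rVert_F^2\le N_\ell\lVert\delta Z\rVert_F^2$. For the weight path it gives $\sum_h(\lVert\delta Q_h\rVert_F+\lVert\delta K_h\rVert_F)^2\le2\sum_h(\lVert\delta Q_h\rVert_F^2+\lVert\delta K_h\rVert_F^2)\le4\lVert\delta Z\rVert_F^2$. Together with $\lVert V_h\rVert_2\le\lVert Z\rVert_F\le\sqrt{N_\ell}\,\nu$, this gives the $s_{\max}N_\ell\nu/\rho$ term of $c_{\mathrm{sa}}$.

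A smaller omission: cross-attention uses only the patch tokens as queries. You should note that selecting them and writing the output back into the full token matrix are norm-one maps.
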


The cross-attention factor is absent in a head without cross-attention. At
initialization, the gate satisfies $a_{\mathrm{ca}}=0$, and every residual
scale equals $1/B_\ell$ (Sec.~\ref{app:road-head}). Since
$(1+t/B_\ell)^{B_\ell}\le e^t$ for $t\ge0$, the bound at initialization
satisfies
$\Lambda_\ell\le c_{\mathrm n}\exp\bigl(c_{\mathrm n}(c_{\mathrm{sa}}+c_{\mathrm{ffn}})\bigr)$
for every depth $B_\ell$. The residual scaling therefore keeps the initial
bound below a constant that does not depend on the depth of the head.

\begin{proof}
All bounds below concern Jacobians at a point of $\mathcal S$, and
$\delta(\cdot)$ denotes the first-order change caused by a perturbation
$\delta X$. By the chain rule, the Jacobian norm of the head is at most the
product of the Jacobian norms of its layers. Because $\mathcal S$ is convex,
integrating this bound along segments in $\mathcal S$ gives the Lipschitz
bound.

\emph{Step 1: RMSNorm and FFN.}
For one token $x\in\mathbb R^d$, let
$r(x)=\sqrt{\lVert x\rVert_2^2/d+\varepsilon}$, so that
$\mathrm{RMSNorm}(x)=w\odot x/r(x)$. Its Jacobian is
\[
\frac{\mathrm{diag}(w)}{r(x)}\Bigl(I_d-\frac{xx^\top}{d\,r(x)^2}\Bigr).
\]
The matrix in parentheses has eigenvalue $1$ on $x^\perp$ and eigenvalue
$\varepsilon/r(x)^2\le1$ along $x$. Since $r(x)\ge\sqrt\varepsilon$, the
Jacobian has norm at most $c_{\mathrm n}$. Since
$\lVert x\rVert_2<\sqrt d\,r(x)$, every output token has norm at most $\nu$.
RMSNorm acts on each token separately. On token matrices, its Jacobian
 also has norm at most $c_{\mathrm n}$, and
$\lVert\mathrm{RMSNorm}(X)\rVert_F\le\sqrt{N_\ell}\,\nu$. The FFN composes two
linear maps of norm at most one with GELU, so its Jacobian has norm at most
$c_{\mathrm{ffn}}$.

\emph{Step 2: Cosine self-attention.}
Two primitives are shared with cross-attention. The map $u\mapsto u/\lVert
u\rVert_2$ has Jacobian $(I-\widehat u\,\widehat u^\top)/\lVert u\rVert_2$
with $\widehat u=u/\lVert u\rVert_2$, whose norm is at most $1/\rho$ when
$\lVert u\rVert_2\ge\rho$. At a probability vector $p$, softmax has Jacobian
$\mathrm{diag}(p)-pp^\top$, whose largest eigenvalue is at most $1/2$. Row-wise
softmax therefore has Jacobian norm at most $1/2$.

Let $Z=\mathrm{RMSNorm}(X)$ be the input of the sublayer. For head
$h=1,\dots,H$, let
\[
Q_h=ZW_{Q,h},\qquad K_h=ZW_{K,h},\qquad V_h=ZW_{V,h},\qquad
P_h=\mathrm{softmax}(A_h),
\]
where $A_h=s_h\widehat Q_h\widehat K_h^\top$, the matrices $\widehat Q_h$ and
$\widehat K_h$ normalize each row, with rows $\widehat q_{h,i}$ and
$\widehat k_{h,j}$, and softmax acts on each row. The sublayer output is $[P_1V_1,\dots,P_HV_H]W_O$. Rotary position
embeddings rotate each query and key by an orthogonal matrix, so they change
none of the norms below. Each per-head projection is a column block of its full
projection, so that, for example,
$\sum_h\lVert\delta V_h\rVert_F^2=\lVert\delta Z\,W_V\rVert_F^2\le\lVert\delta
Z\rVert_F^2$. The perturbation of each head output splits into a value path
and a weight path, $\delta(P_hV_h)=P_h\,\delta V_h+\delta P_h\,V_h$.

For the value path, each row of $P_h$ is a probability vector, so
$\lVert P_h\rVert_2\le\lVert P_h\rVert_F\le\sqrt{N_\ell}$. The value path
therefore contributes at most $\sqrt{N_\ell}\,\lVert\delta Z\rVert_F$.
For the weight path, the normalized rows have unit norm, so
$\lvert\delta A_{h,ij}\rvert\le s_{\max}\bigl(\lVert\delta\widehat
q_{h,i}\rVert_2+\lVert\delta\widehat k_{h,j}\rVert_2\bigr)$. Squaring with
$(a+b)^2\le2a^2+2b^2$, summing over $i$, $j$ and $h$, and applying the $1/\rho$
bound to each row gives
\[
\sum_h\lVert\delta A_h\rVert_F^2\le\frac{4s_{\max}^2N_\ell}{\rho^2}\lVert\delta Z\rVert_F^2,
\qquad
\sum_h\lVert\delta P_h\rVert_F^2\le\frac{s_{\max}^2N_\ell}{\rho^2}\lVert\delta Z\rVert_F^2,
\]
where the second inequality uses the softmax bound. Since
$\lVert V_h\rVert_2\le\lVert Z\rVert_F\le\sqrt{N_\ell}\,\nu$, the weight path
contributes at most $(s_{\max}N_\ell\nu/\rho)\lVert\delta Z\rVert_F$. The output
projection $W_O$ has norm at most one. Adding the two paths shows that the
sublayer has Jacobian norm at most $c_{\mathrm{sa}}$.

\emph{Step 3: Cosine cross-attention.}
The keys and values of cross-attention come from the fixed text tokens, so only
the queries vary with $X$. The argument of Step~2 then gives
$\lvert\delta A_{h,ij}\rvert\le s_{\max}\lVert\delta\widehat q_{h,i}\rVert_2$
for each of the $N_c$ keys $j$. Summing over keys, queries, and heads, and
applying the $1/\rho$ and softmax bounds, gives
$\sum_h\lVert\delta P_h\rVert_F^2\le\bigl(s_{\max}^2N_c/(4\rho^2)\bigr)\lVert\delta
Z\rVert_F^2$. The value matrices contribute the factor $\kappa_c$, and $W_O$
has norm at most one, so the sublayer has Jacobian norm at most
$c_{\mathrm{ca}}$. Selecting the patch-token queries and writing the output back
into the full token matrix are maps of norm one, so they leave this bound
unchanged.

\emph{Step 4: Composition.}
The Jacobian of a branch is the identity plus the product of three factors: the
channel-wise scale, the sublayer Jacobian, and the RMSNorm Jacobian. The first
factor has norm $\lVert\alpha\rVert_\infty$, and the last has norm at most
$c_{\mathrm n}$. By Steps~1--3, the self-attention and FFN branches of block $b$
therefore have Jacobian norms at most
$1+a^{(b)}_{\mathrm{sa}}c_{\mathrm n}c_{\mathrm{sa}}$ and
$1+a^{(b)}_{\mathrm{ffn}}c_{\mathrm n}c_{\mathrm{ffn}}$, and the
cross-attention branch has Jacobian norm at most
$1+a_{\mathrm{ca}}c_{\mathrm n}c_{\mathrm{ca}}$. The final RMSNorm contributes
$c_{\mathrm n}$. The linear logit head maps each token to one logit with a
vector of norm at most one, so it contributes at most one. The product of these
factors is $\Lambda_\ell$ in Eq.~(\plaineqref{eq:lip}).
\end{proof}

\noindent\textbf{Scope of the bound.}
Lemma~\ref{lem:lip} is conditional. The floor $\rho$ is an assumption about the
forward pass, and the architecture does not enforce it. The bound diverges as a
query or key approaches zero before normalization. The RMSNorm weights and the
residual scales are trainable, so $\Lambda_\ell$ changes during training
through $\bar w$, $a_{\mathrm{ca}}$, $a^{(b)}_{\mathrm{sa}}$, and
$a^{(b)}_{\mathrm{ffn}}$. The lemma treats $\lVert W\rVert_2\le1$ as
exact. Spectral normalization with finitely many power iterations satisfies
this condition only approximately, and any realized norm above one must be
included in the bound of the corresponding sublayer. The bound concerns the
head alone at a fixed prompt. Extending it to images requires, in addition,
Jacobian bounds for the frozen encoder and the current random mixing. The
constants are worst-case. For example, the RMSNorm Jacobian attains
$c_{\mathrm n}$ only near the zero token, and the factors $\sqrt{N_\ell}$ come
from Frobenius-norm accounting over all tokens. The lemma  shows how
spectral normalization, clamped cosine attention, RMSNorm, and residual scaling
each enter a finite sensitivity budget. 
Tab.~\ref{tab:head-ablation} evaluates the three controls of
Sec.~\ref{sec:head} jointly.
}

\section{Persistent Random Mixing: Analysis and Diagnostics}
\label{app:mixing}

This appendix supports Sec.~\ref{sec:refresh}. We describe the
implemented operator and its gain spectrum, make precise the sense in
which a direction is weakly measured under a draw, and isolate the
long-horizon consequence in a minimal measured-gap model. We report the empirical diagnostics in
Sec.~\ref{app:diagnostics}--\ref{app:probes} and state the limits of the account
in Sec.~\ref{app:mixing-scope}.

\subsection{The implemented random mixing and anisotropy}
\label{app:mixing-op}

\noindent\textbf{Random mixing in matrix form.}
	Eq.~(\plaineqref{eq:mixing}) composes CCM and CSM along a top-down path,
	but it leaves the two operators unspecified. We specify them here, and,
	because the mixing path carries no nonlinearity, we follow each one through to
	the matrix that Sec.~\ref{sec:refresh} reasons about. The construction proceeds in
	three steps: the matrix of one convolutional branch, then the matrix of one
	mixing, then the matrix through which the head of a given level reads.
	
	\emph{Setting.} For both $M^{\mathrm{cc}}_\ell$ and $M^{\mathrm{cs}}_\ell$, the
	patch tokens are reshaped to their two-dimensional grid. On a ViT backbone all
	$L$ taps share that grid, so every level carries the same number
	$N_{\mathrm p}$ of patch positions, each of token width $d$. The summary token
	is carried by its own random linear map, and it enters neither the matrices
	below nor the analysis that follows, so $f_\ell$ and $\hat f_\ell$ denote the
	patch part of the level-$\ell$ features throughout this appendix. We vectorize
	a level channel-major, so the features of one level form a vector of dimension
	$N_{\mathrm p}d$.
	
	\emph{Branch matrices.} Each operator passes the grid through three
	independent, size-preserving $d\to d$ convolutions, whose $1{\times}1$,
	$3{\times}3$, and $5{\times}5$ kernels are drawn by fan-in-scaled
	Kaiming-uniform initialization \citep{he2015delving}. Write $\mathcal M(W)$ for
	the matrix of one such convolution with kernel $W$. A $1{\times}1$ kernel
	$W^{(1)}\in\mathbb R^{d\times d}$ recombines channels and leaves positions
	untouched, so $\mathcal M(W^{(1)})=W^{(1)}\otimes I_{N_{\mathrm p}}$. A
	$k{\times}k$ kernel $W^{(k)}\in\mathbb R^{d\times d\times k\times k}$ gives the
	standard matrix of a multi-channel two-dimensional convolution
	\citep{araujo2021lipschitz,sedghi2019singular}. That matrix is a $d\times d$
	array of blocks, and its $(c',c)$ block is the matrix
	$\mathcal T\bigl(W^{(k)}_{c'c}\bigr)$ of order $N_{\mathrm p}$ generated by the
	$k{\times}k$ slice $W^{(k)}_{c'c}$ on the patch grid,
	\[
	\mathcal M\bigl(W^{(k)}\bigr)
	=\Bigl[\,\mathcal T\bigl(W^{(k)}_{c'c}\bigr)\,\Bigr]_{c',c=1}^{d}
	\in\mathbb R^{N_{\mathrm p}d\times N_{\mathrm p}d}.
	\]
	The convolutions zero-pad, so each block is banded doubly block-Toeplitz
	\citep{araujo2021lipschitz}. Circular padding would instead make each block
	doubly block-circulant and would block-diagonalize the operator by the
	two-dimensional DFT, which is the standard idealization of a zero-padded layer
	\citep{sedghi2019singular}. We invoke it only where the gain spectrum is
	needed, because decomposing the implemented operator directly is intractable at
	this width. Setting $k=1$ recovers
	$\mathcal T\bigl(W^{(1)}_{c'c}\bigr)=W^{(1)}_{c'c}I_{N_{\mathrm p}}$, for which
	the two paddings agree exactly.
	
	\emph{Mixing matrices.} Each mixing averages its three branches, so
	\[
	M^{\mathrm{cc}}_\ell
	=\tfrac13\!\!\sum_{k\in\{1,3,5\}}\!\!\mathcal M\bigl(W^{(k)}\bigr),
	\]
	and $M^{\mathrm{cs}}_\ell$ takes the same form from its own independent draw.
	All of these parameters are frozen after sampling, and they are reinitialized
	together at every refresh. Both matrices are square of order $N_{\mathrm p}d$,
	so each mixing preserves the backbone's channel width. The three branches share their stride and their
	sliding-window alignment, so this average is itself a single $5{\times}5$
	convolution. Zero-padding $W^{(1)}$ and $W^{(3)}$ to a $5{\times}5$ support and
	averaging the three kernels gives one kernel $\bar W$ with
	$\tfrac13\sum_k\mathcal M\bigl(W^{(k)}\bigr)=\mathcal M(\bar W)$
	\citep{ding2021dbb,ding2021repvgg}. The branches therefore do not enlarge the
	operator class, and what they change is the law from which the $5{\times}5$
	kernel is drawn. Fan-in scaling gives branch $k$ an entry variance proportional
	to $1/(dk^2)$, so the taps of $\bar W$ are center-heavy: the central tap
	receives all three branches, the remaining $3{\times}3$ taps receive two, and
	the outer $5{\times}5$ ring receives one.
	Mixing therefore stays predominantly a channel mixer, and it couples positions
	through a decaying spatial tail.
	
	\emph{The mixing matrix.} Stacking the levels, $f=[f_1^\top,\dots,f_L^\top]^\top$
	and $\hat f=[\hat f_1^\top,\dots,\hat f_L^\top]^\top$, the recursion of
	Eq.~(\plaineqref{eq:mixing}) unrolls into the affine map $\hat f=Mf+b$ used
	throughout the analysis. Its matrix is block upper triangular in the level
	index,
	\[
	M=\bigl[M_{\ell m}\bigr]_{\ell,m=1}^{L}\in\mathbb R^{d_f\times d_f},
	\quad
	M_{\ell m}=\begin{cases}
	\bigl(M^{\mathrm{cs}}_\ell M^{\mathrm{cs}}_{\ell+1}\cdots
	M^{\mathrm{cs}}_m\bigr)M^{\mathrm{cc}}_m, & m\ge\ell,\\[2pt]
	0, & m<\ell,
	\end{cases}
	\;\;
	d_f=LN_{\mathrm p}d .
	\]
	Every block is square of order $N_{\mathrm p}d$, so each level keeps the
	backbone's channel width. Two readings of this matrix matter later. Row
	$\ell$ vanishes to the left of the diagonal, which says that the head of
	level $\ell$ reads the backbone features of levels $\ell$ through $L$ and of
	those alone. The diagonal block
	$M_{\ell\ell}=M^{\mathrm{cs}}_\ell M^{\mathrm{cc}}_\ell$ is the CCM--CSM
	composition at that level, and it dominates column $\ell$, because every
	off-diagonal block reaching that column carries at least one further CSM
	factor. Since the diagonal blocks are almost surely invertible, so is $M$,
	and no feature direction is left unmeasured; what varies is how strongly each
	one is measured. 

	\noindent\textbf{Anisotropy.}
	
	 For a unit direction $\eta$, we define its gain as
	$\lambda_M(\eta)=\lVert M\eta\rVert$. In the circular-padding idealization
	of the token-grid convolutions above~\citep{sedghi2019singular}, these gains follow the
	singular values of square random matrices \citep{marchenko1967}. With the
	implemented initialization, a single CCM or CSM has a median gain of $0.27$,
	and ${\sim}5\%$ of directions lie below one tenth of that median. The
	diagonal block $M_{\ell\ell}$, which composes CCM with CSM, broadens the
	spectrum further, to a median of $0.063$ with ${\sim}12\%$ of directions
	below one tenth of the median; these blocks set the scale of the spectrum of
	$M$, whose off-diagonal blocks are smaller by a further CSM factor each. Zero padding
	truncates the kernels at the grid border, so the implemented operator has
	medians about $4\%$ and $9\%$ below these two values, and its tail fractions
	are unchanged. The exact weak directions depend on the draw. 
	The overall scale of the
	head gradient may change during training, but a fixed draw preserves the
	relative gain profile throughout the refresh interval.

\subsection{Gain-weighted correction under fixed and refreshed draws}
\label{app:mixing-model}

{\color{revtext}
		We isolate the effect of the mixing draw with a deliberately minimal
	surrogate. Let $\delta\in\mathbb{R}^{d_f}$ be a feature-space
	discrepancy between generated and real data, and let the generator
	reduce the measured gap $\frac12\lVert M\delta\rVert^2$ by the gradient
	flow $\dot\delta=-M^\top M\delta$. Write
	$M=\sum_i\lambda_i a_i\psi_i^\top$ for its singular value decomposition.
	Because $M$ is almost surely invertible (Sec.~\ref{app:mixing-op}), the
	$\psi_i$ form an orthonormal basis of $\mathbb{R}^{d_f}$ and every
	$\lambda_i$ is positive, so the expansions below are over a full basis.
	
	\begin{proposition}[Correction under a fixed draw and refreshed draws]
		\label{prop:gain-correction}
		\begin{enumerate}[label=\textup{(\roman*)},leftmargin=2.2em,
			itemsep=0.4em,topsep=0.3em]
			\item \textup{\textbf{Fixed draw.}}
			For a fixed $M$,
			\[
			\delta(t)=\sum_i e^{-\lambda_i^2t}
			\langle\psi_i,\delta(0)\rangle\psi_i.
			\]
			Hence, the component along $\psi_i$ decays at rate $\lambda_i^2$.
			
			\item \textup{\textbf{Independent refreshes.}}
			Suppose that independent draws $M^{(1)},\dots,M^{(K)}$ act for consecutive
			intervals of length $\tau$.
			Let $A^{(k)}=\exp\bigl(-\tau (M^{(k)})^\top M^{(k)}\bigr)$, and assume that their law is
			right-isotropic in the sense that
			$\mathbb E[(A^{(k)})^\top A^{(k)}]=c(\tau)I$. Then
			\[
			\mathbb E\lVert\delta(K\tau)\rVert^2
			=c(\tau)^K\lVert\delta(0)\rVert^2.
			\]
			Thus, in expectation, no fixed direction retains a privileged slow
			rate across refresh periods.
		\end{enumerate}
	\end{proposition}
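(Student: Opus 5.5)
For part (i), we solve the linear ODE $\dot\delta=-M^\top M\delta$ in the eigenbasis of $M^\top M$. Substituting the SVD $M=\sum_i\lambda_i a_i\psi_i^\top$ and using the orthonormality of the $a_i$ gives $M^\top M=\sum_i\lambda_i^2\psi_i\psi_i^\top$. This is symmetric positive definite, because $M$ is almost surely invertible (Sec.~\ref{app:mixing-op}), so the $\psi_i$ form a complete orthonormal basis. Take inner products of the flow with each $\psi_i$ and write $c_i(t)=\langle\psi_i,\delta(t)\rangle$. The system then decouples into the scalar equations $\dot c_i=-\lambda_i^2c_i$, whose solutions are $c_i(t)=e^{-\lambda_i^2t}c_i(0)$. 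Reassembling $\delta(t)=\sum_i c_i(t)\psi_i$ gives the stated formula. Equivalently, $\delta(t)=\exp(-tM^\top M)\delta(0)$, and uniqueness of solutions to linear ODEs makes this identification rigorous. The decay-rate claim can be read off directly.

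For part (ii), the dynamics over the $k$-th interval are the fixed-draw flow of part (i) with $M^{(k)}$. Hence $\delta(k\tau)=A^{(k)}\delta((k-1)\tau)$ and $\delta(K\tau)=A^{(K)}\cdots A^{(1)}\delta(0)$. We argue by induction on $K$, conditioning on $\delta((K-1)\tau)$, which depends only on $M^{(1)},\dots,M^{(K-1)}$ and is therefore independent of $M^{(K)}$. Write $u=\delta((K-1)\tau)$. Then
\[
\mathbb E\bigl[\lVert A^{(K)}u\rVert^2\,\big|\,u\bigr]=u^\top\,\mathbb E\bigl[(A^{(K)})^\top A^{(K)}\bigr]\,u=c(\tau)\lVert u\rVert^2,
\]
using the right-isotropy hypothesis. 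Taking total expectation and applying the induction hypothesis $\mathbb E\lVert u\rVert^2=c(\tau)^{K-1}\lVert\delta(0)\rVert^2$ yields the claim. The base case $K=0$ is trivial.

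The interpretive sentence concerns direction dependence. The same computation applied to any fixed unit $\delta(0)=\eta$ gives an expected squared residual of $c(\tau)^K$, which does not depend on $\eta$. This contrasts with the fixed-draw case, where the residual is $e^{-2\lambda_i^2K\tau}$ for $\eta=\psi_i$ and so depends on the direction.

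No step is a real obstacle. The only points needing care are the measurability and independence bookkeeping in the conditioning step: $u$ must be a function of the earlier draws only, and $A^{(K)}$ must be independent of them. A further point is that $\mathbb E[(A^{(k)})^\top A^{(k)}]$ is finite. This holds because $0\preceq A^{(k)}\preceq I$, which follows since $(M^{(k)})^\top M^{(k)}\succeq0$, and this also gives $c(\tau)\le1$.
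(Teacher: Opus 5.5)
Your proposal is correct and follows the paper's proof essentially step for step. In part (i) you decouple the flow along the right-singular basis $\psi_i$ of $M$. In part (ii) you write $\delta(K\tau)=A^{(K)}\cdots A^{(1)}\delta(0)$, condition on the state after $K-1$ periods, and use independence plus right-isotropy to extract one factor $c(\tau)$ per period; your added remarks on integrability ($0\preceq A^{(k)}\preceq I$, hence $c(\tau)\le 1$) and on the direction-independence of $c(\tau)^K$ are correct details the paper leaves implicit.
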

	
	\begin{proof}
		\emph{Part (i).}
		Because $M^\top M=\sum_i\lambda_i^2\psi_i\psi_i^\top$, the
		coefficient of $\delta(t)$ along $\psi_i$ satisfies
		$\dot\delta_i(t)=-\lambda_i^2\delta_i(t)$. Solving each scalar
		equation gives the stated expansion.
		
		\emph{Part (ii).}
		By construction, $\delta(K\tau)=A^{(K)}\cdots A^{(1)}\delta(0)$. Condition on
		$y=A^{(K-1)}\cdots A^{(1)}\delta(0)$. Independence and the
		right-isotropy assumption give
		$\mathbb E[\lVert A^{(K)}y\rVert^2\mid y]
		=y^\top\mathbb E[(A^{(K)})^\top A^{(K)}]y
		=c(\tau)\lVert y\rVert^2$. Iterating this identity over the $K$
		periods proves the result.
	\end{proof}
	
	Part (i) formalizes the fixed-draw selection described in
	Sec.~\ref{sec:refresh}. High-gain components decay first, so the
	surviving residual becomes enriched in weakly measured directions.
	The measured gap can therefore settle while a feature-space residual
	remains. Weak measurement here is a matter of degree rather than of kind: no
	direction is annihilated, because $M$ is almost surely invertible, but a
	direction with a small $\lambda_i$ is corrected at the far smaller rate
	$\lambda_i^2$ and can therefore survive an entire fixed-draw run.
	
		Part (ii) isolates the effect of refresh. Each independent draw replaces
	the gain landscape, so in expectation no fixed direction is
	systematically attenuated over the whole run. The implemented
	convolutional law has exact permutation and sign symmetries over channel
	coordinates, and every level is read chiefly through the diagonal block that
	dominates its column, so the expected gain is comparable across levels. Right
	isotropy over the full feature space is the corresponding idealization, and we
	use it only to make this temporal distinction precise.

	The surrogate predicts differential correction and stall. 
	The nonlinear claim that realization-specific co-adaptation leaves new
	discrepancies uncorrected in weak directions and causes FID oscillation
	or degradation therefore remains a hypothesis. The refresh intervention
	in Sec.~\ref{sec:refresh} validates this hypothesis.
}

\subsection{Diagnostics}
\label{app:diagnostics}

\paragraph{Definitions.}
{\color{revtext}
For an image $x$, let $f_\ell(x)\in\mathbb{R}^{N_\ell\times d}$ denote the
level-$\ell$ backbone features after register-token handling and before mixing,
and let $\bar D(x)=\frac1n\sum_iD_i(x)$ be the mean critic logit. With the
encoder detached, the gradient $\nabla_{f_\ell}\bar D(x)$ is the critic
sensitivity in the shared backbone coordinates. We define the
\emph{Pre-projection Feature Utilization Coverage} (PFUC) profile by
\[
u_c(x)=\sigma_c\sqrt{\tfrac{1}{N_\ell}\textstyle\sum_{j=1}^{N_\ell}
\bigl(\partial\bar D(x)/\partial f_{\ell,j,c}(x)\bigr)^2},\qquad
U_c=\mathbb{E}_x[u_c(x)],\qquad
\pi_c=U_c\Big/\textstyle\sum_{c'}U_{c'},
\]
where $j$ indexes tokens, $\sigma_c$ is the feature scale of channel $c$,
computed once on a reference set and shared across runs, and the expectation is over a
fixed diagnostic set of real and generated images. We quantify the
change between two PFUC profiles by their total-variation distance,
\[
\mathrm{TV}(\pi,\pi')=\tfrac12\textstyle\sum_c
\lvert\pi_c-\pi'_c\rvert,
\]
which is the fraction of normalized sensitivity mass reassigned across
channels. 
PFUC characterizes the breadth and stability of the critic's input-gradient
sensitivity \citep{simonyan2014saliency}. 

The \emph{logit margin} is $m_t=r_t-q_t$, where $r_t$ and $q_t$ are the
batch-mean values of $\bar D$ on the real and generated halves of the step-$t$
batch, respectively. We apply $\pm150$-step smoothing only for display. Unlike
accuracy, $m_t$ retains the magnitude of the real--generated separation, and the
separate traces of $r_t$ and $q_t$ reveal how that separation changes.
}

\paragraph{Measured values (Figs.~\ref{fig:fixed-diagnosis}
and~\ref{fig:refresh-mechanism}).}
{\color{revtext}
The two diagnostic runs use SDXL with an effective batch size of $32$ and
share the seeds, data, diagnostic set, and channel feature scales. They differ
only in whether $M$ is fixed or re-drawn every $5$k steps.
The PFUC heat maps display the first $64$ of $768$ channels at the
deepest tap on a shared logarithmic color scale, with one profile every
$500$ steps; all TV values use the complete $768$-channel profiles. In
the refresh run, an additional profile is recorded immediately after
each re-draw. 
}

\subsection{Longitudinal probes on a fixed-mixing run}
\label{app:probes}

{\color{revtext}
		The fixed-run measurements in Sec.~\ref{app:diagnostics} establish a
	persistent critic state, but they do not determine whether its separation
	is specific to the training realization $M^{(0)}$. We therefore apply two
	post-hoc diagnostics to frozen checkpoints of the same run.
	
	\emph{Fresh-draw transfer.}
	At every $5$k-step checkpoint from $5$k to $30$k, we hold the checkpoint,
	critic head, and matched real and generated batches fixed, and replace
	only $M^{(0)}$ by eight independently drawn mixings. The margin under $M^{(0)}$
	remains $5.1$--$5.3$ through $25$k ($4.5$ at $30$k), whereas the margins
		under fresh draws are approximately zero at every checkpoint. Because
	only the mixing changes, this collapse shows that the fixed head's
	real--generated separation does not transfer to a fresh measurement
	basis. This result directly supports its coupling to $M^{(0)}$.
	
	\emph{Linear probe.}
	We also fit a separate linear classifier to the frozen mixed features at
	each level to distinguish real from generated samples. This classifier
		is used only as a diagnostic and does not update the generator, backbone,
	mixing, or critic. Beginning at approximately $2.5$k steps, real and
	generated samples are harder for a linear classifier to distinguish under
	$M^{(0)}$ than under fresh draws, particularly for the shallow-level
	features.

	Over the same fixed-mixing run, FID oscillates and later worsens, whereas
	the margin under the fixed training mixing $M^{(0)}$ remains large and nearly
	stationary. The fresh-draw and linear-probe diagnostics show that this
	separation is specific to the pairing $D_\phi\circ M^{(0)}$.
}

\subsection{Scope of the account}
\label{app:mixing-scope}

The account makes two claims at different strengths. First,
Prop.~\ref{prop:gain-correction} shows in a linear surrogate why
weak-direction residuals persist under a fixed draw and why independent draws
remove a permanently privileged direction in expectation. The chain-rule bounds
of Sec.~\ref{sec:refresh} tie the visibility and correction of a direction to its
gain. Second, the realization-specific co-adaptation of Sec.~\ref{sec:refresh}, in
which prolonged optimization under the fixed gain landscape hardens both
players around it until quality degrades, is a mechanistic reading of the
ongoing nonlinear game. It is supported jointly by PFUC, the margin, the
fresh-draw probes of Sec.~\ref{app:probes}, and the fixed-versus-refreshed
intervention, but none of these alone establishes it.

\section{\texorpdfstring{Jacobian R1: Analysis and Estimation}{Jacobian R1: Analysis and Estimation}}
\label{app:jr1}

This appendix supports Sec.~\ref{sec:jr1}. We compare pooled R1 and JR1 in the
output space of the critic in Sec.~\ref{app:jr1-pooled}, analyze the Rademacher
estimator of JR1 in Sec.~\ref{app:jr1-estimator}, and specify its minibatch
implementation in Sec.~\ref{app:jr1-batch}.

All quantities are evaluated at a fixed prompt $c$, critic parameters $\phi$,
and mixing $M$. We use the notation of Sec.~\ref{sec:jr1}, where
$J(x)\in\mathbb R^{n\times d_x}$ is the input--logit Jacobian with rows
$g_i(x)^\top$, and $d_x$ is the dimension of the image $x$. Pooled R1 and JR1
in Eqs.~(\plaineqref{eq:jr1-pool}) and~(\plaineqref{eq:jr1}) are the
expectations of the per-image energies $q_{\mathrm{pool}}(x)$ and
$q_{\mathrm{Jac}}(x)$ over real images, scaled by $\gamma/2$ and
$\gamma_{\mathrm J}/2$. The Jacobian is taken through the entire critic,
including the frozen encoder $F$ and the mixing $M$. Freezing their parameters
excludes them from optimization, but it does not remove them from the path of
the input gradient. Because both energies depend on $x$ only through $J(x)$, we
also evaluate them at an arbitrary matrix $J\in\mathbb R^{n\times d_x}$ and
write $q_{\mathrm{pool}}(J)$ and $q_{\mathrm{Jac}}(J)$. Both energies are built
from averages over the $n$ logits. We call this $1/n$ normalization the
averaging convention, and the estimator and the minibatch penalty in
Sec.~\ref{app:jr1-estimator}--\ref{app:jr1-batch} retain it.

\subsection{\texorpdfstring{Jacobian R1 versus pooled R1}{Jacobian R1 versus pooled R1}}
\label{app:jr1-pooled}

Sec.~\ref{sec:jr1} makes three claims about pooled R1. First, pooled
R1 penalizes only the direction spanned by $\mathbf1$ in the
$n$-dimensional output space, and it leaves the $n-1$ orthogonal directions
unpenalized. Second, a nonzero Jacobian can therefore incur zero pooled
penalty. Third, the dense loss of Eq.~(\plaineqref{eq:dense}) still depends on
the sensitivities that pooling neglects. Parts~(i)--(iii) of
Prop.~\ref{prop:pooling} establish these claims in turn, and each part
also states the corresponding property of JR1. 

Fix an image $x$, and abbreviate $J=J(x)$ and $g_i=g_i(x)$. We use the
orthogonal projectors onto $\operatorname{span}\{\mathbf1\}$ and onto its
$(n-1)$-dimensional orthogonal complement $\mathbf1^{\perp}$, together with the
mean gradient:
\[
\Pi_{\parallel}=\tfrac1n\mathbf1\mathbf1^\top,
\qquad
\Pi_{\perp}=I_n-\Pi_{\parallel},
\qquad
\bar g=\tfrac1n\textstyle\sum_{i=1}^n g_i=\tfrac1nJ^\top\mathbf1 .
\]
The $r$-th column $j_r=\partial D(x)/\partial x_r$ of $J$ is the response of
the logit vector to the $r$-th input coordinate. We call
\[
q_{\perp}(x)=\tfrac1n\lVert\Pi_{\perp}J\rVert_F^2
=\tfrac1n\textstyle\sum_{r=1}^{d_x}\lVert\Pi_{\perp}j_r\rVert_2^2
\]
the \emph{neglected energy}. It measures how strongly input perturbations move
the logit vector within the $n-1$ directions orthogonal to $\mathbf1$.

\begin{proposition}[Pooled R1 and JR1 in output space]
\label{prop:pooling}
\begin{enumerate}[label=\textup{(\roman*)},leftmargin=2.2em,itemsep=0.4em,topsep=0.3em]
\item \textup{\textbf{Decomposition.}}
The pooled energy satisfies
$q_{\mathrm{pool}}(x)=\tfrac1n\lVert\Pi_{\parallel}J\rVert_F^2$, and
\begin{equation}
q_{\mathrm{Jac}}(x)=q_{\mathrm{pool}}(x)+q_{\perp}(x),
\qquad
q_{\perp}(x)=\tfrac1n\textstyle\sum_{i=1}^n\lVert g_i-\bar g\rVert_2^2 .
\label{eq:dispersion}
\end{equation}
\item \textup{\textbf{Zero set.}}
The pooled energy $q_{\mathrm{pool}}(J)$ vanishes exactly on the
$(n-1)d_x$-dimensional subspace
$\mathcal Z=\{J\in\mathbb R^{n\times d_x}:J^\top\mathbf1=0\}$. In contrast,
$q_{\mathrm{Jac}}(J)$ vanishes only at $J=0$.
\item \textup{\textbf{Dense-loss gradient.}}
Let $\ell_{\mathrm{real}}(x)=\tfrac1n\sum_{i=1}^n\zeta(-D_i(x))$ be the
per-image real-image term of $\mathcal L_D$ in
Eq.~(\plaineqref{eq:dense}). Then
\begin{equation}
\lVert\nabla_x\ell_{\mathrm{real}}(x)\rVert_2^2\le q_{\mathrm{Jac}}(x).
\label{eq:jr1-loss-bound}
\end{equation}
In contrast, $q_{\mathrm{pool}}$ admits no such bound. If the logits $D_i(x)$
are not all equal, then $\lVert\nabla_x\ell_{\mathrm{real}}(x)\rVert_2$ is
unbounded as $J(x)$ ranges over $\mathcal Z$ with $D(x)$ held fixed.
\end{enumerate}
\end{proposition}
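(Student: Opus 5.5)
The plan is to treat all three parts as elementary linear algebra in the $n$-dimensional output space, working column by column in $J$. For part~(i), I would first rewrite the pooled energy. Since $\Pi_\parallel$ is a symmetric idempotent, $\lVert\Pi_\parallel J\rVert_F^2=\mathrm{tr}(J^\top\Pi_\parallel J)=\tfrac1n\lVert J^\top\mathbf1\rVert_2^2$. Dividing by $n$ recovers $q_{\mathrm{pool}}=\tfrac1{n^2}\lVert J^\top\mathbf1\rVert_2^2$.

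Because $\Pi_\parallel+\Pi_\perp=I_n$ and the two projectors are orthogonal, Pythagoras applied to each column $j_r$ gives $\lVert J\rVert_F^2=\lVert\Pi_\parallel J\rVert_F^2+\lVert\Pi_\perp J\rVert_F^2$. This yields $q_{\mathrm{Jac}}=q_{\mathrm{pool}}+q_\perp$.

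For the dispersion form of $q_\perp$, I would note that $\Pi_\perp J$ has rows $g_i^\top-\bar g^\top$. The row-wise Frobenius norm is then $\sum_i\lVert g_i-\bar g\rVert^2$. Equivalently, this is the variance identity $\tfrac1n\sum_i\lVert g_i\rVert^2=\lVert\bar g\rVert^2+\tfrac1n\sum_i\lVert g_i-\bar g\rVert^2$.

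Part~(ii) then follows directly. The pooled energy vanishes iff $J^\top\mathbf1=0$, i.e.\ iff every column $j_r$ lies in $\mathbf1^\perp$. That is a product of $d_x$ copies of an $(n-1)$-dimensional space, so the zero set has dimension $(n-1)d_x$. The Frobenius norm vanishes only at $J=0$.

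For part~(iii), I would compute the gradient by the chain rule: $\nabla_x\ell_{\mathrm{real}}=-\tfrac1n\sum_i\sigma(D_i)g_i=-\tfrac1nJ^\top s$, where $s_i=\sigma(D_i)\in(0,1)$ because $\zeta'=\sigma$. Cauchy--Schwarz gives $\lVert\tfrac1n\sum_i s_ig_i\rVert^2\le\tfrac1n\sum_i s_i^2\cdot\tfrac1n\sum_i\lVert g_i\rVert^2$. Since $s_i^2<1$, the first factor is below one, and the bound $\le q_{\mathrm{Jac}}$ follows.

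For the unboundedness claim, the idea is that if the $D_i$ are not all equal, then $s$ is not parallel to $\mathbf1$, since $\sigma$ is strictly monotone. So $w=\Pi_\perp s\neq0$. I would then take $J_t=t\,w\,e^\top$ for a fixed unit $e\in\mathbb R^{d_x}$. This satisfies $J_t^\top\mathbf1=t\,e\,(w^\top\mathbf1)=0$, so $J_t\in\mathcal Z$, while $\lVert J_t^\top s\rVert=t\,\lvert w^\top s\rvert=t\lVert w\rVert^2\to\infty$.

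The main obstacle I anticipate is interpretive rather than computational. The claim lets $J(x)$ range over $\mathcal Z$ with $D(x)$ fixed, and one must justify treating the Jacobian and the logit values at a point as independently prescribable. I would argue this pointwise by exhibiting local affine critics $D(x')=D(x)+J_t(x'-x)$, which realize any pair $(D(x),J)$. With that reading, the statement is a purely algebraic one about the values and first derivatives at $x$.
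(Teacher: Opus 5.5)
Your proposal is correct and follows the paper's proof essentially step for step: the same projector computation for (i) (your column-wise Pythagoras is the paper's row-wise cross-term cancellation in another guise), the same dimension count for (ii), and for (iii) the same bound $\lVert J^\top s\rVert_2\le\lVert J\rVert_F\lVert s\rVert_2$ together with the identical rank-one witness $J_t=t\,(\Pi_\perp s)\,e^\top\in\mathcal Z$. One small slip: since $\ell_{\mathrm{real}}=\tfrac1n\sum_i\zeta(-D_i)$, the chain rule gives coefficients $s_i=\sigma(-D_i)$ rather than $\sigma(D_i)$; this is harmless, because you only use $s_i\in(0,1)$ and strict monotonicity.
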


\begin{proof}
\emph{Part (i).}
Since $\Pi_{\parallel}J=\mathbf1\bar g^\top$, every row of $\Pi_{\parallel}J$
equals $\bar g^\top$, and the $i$-th row of $\Pi_{\perp}J=J-\Pi_{\parallel}J$
equals $(g_i-\bar g)^\top$. Hence
$\tfrac1n\lVert\Pi_{\parallel}J\rVert_F^2=\lVert\bar g\rVert_2^2=q_{\mathrm{pool}}(x)$,
and $q_{\perp}(x)=\tfrac1n\sum_i\lVert g_i-\bar g\rVert_2^2$. The deviations
$g_i-\bar g$ sum to zero. Averaging
$\lVert g_i\rVert_2^2=\lVert\bar g+(g_i-\bar g)\rVert_2^2$ over $i$ therefore
cancels the cross terms and gives
$q_{\mathrm{Jac}}(x)=\lVert\bar g\rVert_2^2+q_{\perp}(x)$.

\emph{Part (ii).}
By definition, $q_{\mathrm{pool}}(J)=\tfrac1{n^2}\lVert J^\top\mathbf1\rVert_2^2$
vanishes if and only if $J^\top\mathbf1=0$. This condition places each of the
$d_x$ columns of $J$ in the $(n-1)$-dimensional subspace $\mathbf1^{\perp}$, so
$\dim\mathcal Z=(n-1)d_x$. The energy
$q_{\mathrm{Jac}}(J)=\tfrac1n\lVert J\rVert_F^2$ is a scaled squared norm, so it
vanishes only at $J=0$.

\emph{Part (iii).}
Let $\eta\in\mathbb R^n$ collect the logit derivatives
$\eta_i=\partial\ell_{\mathrm{real}}/\partial D_i=-\tfrac1n\zeta'(-D_i(x))$. The
chain rule gives $\nabla_x\ell_{\mathrm{real}}(x)=J^\top\eta$. The derivative
$\zeta'$ is the logistic function, which takes values in $(0,1)$, so
$\lVert\eta\rVert_2^2<1/n$. The spectral norm of $J$ is at most its Frobenius
norm. Therefore
$\lVert J^\top\eta\rVert_2^2\le\lVert J\rVert_F^2\lVert\eta\rVert_2^2
\le\tfrac1n\lVert J\rVert_F^2=q_{\mathrm{Jac}}(x)$, which is
Eq.~(\plaineqref{eq:jr1-loss-bound}).
For the second statement, note that $\eta$ depends only on $D(x)$, which is
held fixed. The logistic function is strictly increasing. If the logits are not
all equal, the entries of $\eta$ are therefore not all equal, and
$\eta_{\perp}=\Pi_{\perp}\eta\ne0$. Fix a unit vector $a\in\mathbb R^{d_x}$, and
let $J_t=t\,\eta_{\perp}a^\top$ for $t>0$. Because
$\mathbf1^\top\eta_{\perp}=0$, we have $J_t^\top\mathbf1=0$, and hence
$J_t\in\mathcal Z$. Because $\eta_{\perp}^\top\eta=\lVert\eta_{\perp}\rVert_2^2$,
we also have $J_t^\top\eta=t\lVert\eta_{\perp}\rVert_2^2\,a$, whose norm grows
without bound as $t\to\infty$.
\end{proof}

\subsection{Rademacher estimator: unbiasedness and variance}
\label{app:jr1-estimator}

Let $v_1,\dots,v_P\in\{-1,+1\}^n$ be independent probes, and let each probe
have independent, equiprobable signs, as in Sec.~\ref{sec:jr1}. The per-image
estimator replaces $\mathbb E_v$ in Eq.~(\plaineqref{eq:jr1-hutchinson}) by an
average over the $P$ probes:
\[
\widehat q_P(x)=\frac1{nP}\sum_{p=1}^P\lVert J(x)^\top v_p\rVert_2^2 .
\]
Its factor $1/n$ follows the averaging convention, so $\widehat q_P(x)$
estimates $q_{\mathrm{Jac}}(x)$. This is the trace estimator of
\citet{hutchinson1989} for $\operatorname{tr}\bigl(J(x)J(x)^\top\bigr)
=\lVert J(x)\rVert_F^2$, which earlier work also used for Jacobian
regularization \citep{varga2018gradient,hoffman2019jacobian}.
Prop.~\ref{prop:jr1} restates its standard properties under the
averaging convention.

\begin{proposition}[Unbiasedness and variance]
\label{prop:jr1}
Fix an image $x$, and write $g_i=g_i(x)$.
\begin{enumerate}[label=\textup{(\roman*)},leftmargin=2.2em,itemsep=0.4em,topsep=0.3em]
\item \textup{\textbf{Unbiasedness.}}
$\mathbb E_v[\widehat q_P(x)]=q_{\mathrm{Jac}}(x)$. If $J(x)$ is
differentiable in $\phi$, then also
$\mathbb E_v[\nabla_\phi\widehat q_P(x)]=\nabla_\phi q_{\mathrm{Jac}}(x)$.
\item \textup{\textbf{Variance.}}
The variance over the probes is
\[
\operatorname{Var}_v[\widehat q_P(x)]
=\frac{2}{n^2P}\sum_{i\ne j}\bigl(g_i^\top g_j\bigr)^2 .
\]
It vanishes if and only if the per-logit gradients are pairwise orthogonal.
\item \textup{\textbf{Relative variance.}}
If $J(x)\ne0$, then
$\operatorname{Var}_v[\widehat q_P(x)]\le\frac2P\bigl(1-\frac1n\bigr)\,q_{\mathrm{Jac}}(x)^2$.
Equality holds when every $g_i$ equals $\pm g$ for a common vector $g\ne0$.
\end{enumerate}
\end{proposition}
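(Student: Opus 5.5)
The plan is to reduce everything to a single Rademacher quadratic form. Set $u=J(x)^\top v$, so that $\lVert u\rVert_2^2=v^\top G v$ with Gram matrix $G=JJ^\top\in\mathbb R^{n\times n}$, $G_{ij}=g_i^\top g_j$. Then $\widehat q_P=\frac1{nP}\sum_p v_p^\top Gv_p$ is an average of $P$ i.i.d.\ copies of $\frac1n v^\top Gv$. This reduces (i) and (ii) to the first two moments of one Rademacher quadratic form, and the $P$-dependence follows from independence of the probes: the mean is unchanged, and the variance is divided by $P$.

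For part (i), I would use $\mathbb E[vv^\top]=I_n$, which gives $\mathbb E[v^\top Gv]=\operatorname{tr}G=\lVert J\rVert_F^2$. Hence $\mathbb E_v\widehat q_P=q_{\mathrm{Jac}}$. For the gradient statement, $v$ is independent of $\phi$ and takes finitely many values, so the expectation is a finite weighted sum. Differentiation and $\mathbb E_v$ therefore commute with no integrability issue.

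For part (ii), I would split $v^\top Gv=\sum_i G_{ii}+\sum_{i\ne j}v_iv_jG_{ij}$. Since $v_i^2=1$, the diagonal part is deterministic, so the variance comes only from $S=\sum_{i\ne j}v_iv_jG_{ij}=2\sum_{i<j}v_iv_jG_{ij}$. The products $v_iv_j$ for distinct unordered pairs are pairwise uncorrelated with unit variance. One checks this via $\mathbb E[v_iv_jv_kv_l]=0$ unless $\{i,j\}=\{k,l\}$. This gives $\operatorname{Var}S=4\sum_{i<j}G_{ij}^2=2\sum_{i\ne j}G_{ij}^2$. Dividing by $n^2P$ yields the stated formula. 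The variance vanishes iff every off-diagonal $G_{ij}=0$, that is, iff the gradients are pairwise orthogonal.

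For part (iii), which I expect to be the main obstacle, I need $\sum_{i\ne j}G_{ij}^2\le(1-\frac1n)(\operatorname{tr}G)^2$. Because $\operatorname{tr}G=nq_{\mathrm{Jac}}$, this converts to the claim after multiplying by $\frac2{n^2P}$. I would write $\sum_{i\ne j}G_{ij}^2=\lVert G\rVert_F^2-\sum_iG_{ii}^2$. I would then bound $\lVert G\rVert_F^2\le(\operatorname{tr}G)^2$, which holds because $G\succeq0$: its eigenvalues are nonnegative, so $\sum\mu_k^2\le(\sum\mu_k)^2$. I would also use $\sum_iG_{ii}^2\ge\frac1n(\operatorname{tr}G)^2$, which is Cauchy--Schwarz on the diagonal. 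Combining the two gives $(1-\frac1n)(\operatorname{tr}G)^2$.

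Equality requires both steps to be tight. The first requires $G$ to have rank one. The second requires all $G_{ii}$ to be equal. Both hold when $g_i=\pm g$, since then $G=\lVert g\rVert^2ss^\top$ with $s\in\{\pm1\}^n$. A direct check confirms it: $\sum_{i\ne j}G_{ij}^2=n(n-1)\lVert g\rVert^4$, while $(1-\frac1n)(n\lVert g\rVert^2)^2$ takes the same value. The hypothesis $J\ne0$ is needed only so that the relative form is meaningful.
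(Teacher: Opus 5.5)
Your proposal is correct and follows the paper's proof essentially step for step. It uses the same expansion of $\lVert J^\top v\rVert_2^2$ into a deterministic diagonal plus uncorrelated unit-variance cross terms $v_iv_j$, the same finite-average argument for exchanging $\nabla_\phi$ and $\mathbb E_v$, and the same final bound $\sum_iG_{ii}^2\ge\frac1n(\operatorname{tr}G)^2$. The only variation is in (iii): you obtain the intermediate bound $\sum_{i\ne j}G_{ij}^2\le(\operatorname{tr}G)^2-\sum_iG_{ii}^2$ from the eigenvalues of $G\succeq0$, whereas the paper uses the pairwise Cauchy--Schwarz inequality $G_{ij}^2\le G_{ii}G_{jj}$. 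The two arguments are interchangeable here and lead to the same rank-one equality condition.
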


\begin{proof}
For a single probe $v$, the identity $v_i^2=1$ gives
\[
\lVert J(x)^\top v\rVert_2^2
=\sum_{i,j}v_iv_j\,g_i^\top g_j
=\sum_i\lVert g_i\rVert_2^2+2\sum_{i<j}v_iv_j\,g_i^\top g_j .
\]
Each product $v_iv_j$ with $i<j$ has zero mean and unit variance, and distinct
products are uncorrelated.

\emph{Part (i).}
The cross terms have zero mean, so
$\mathbb E_v\lVert J(x)^\top v\rVert_2^2=\sum_i\lVert g_i\rVert_2^2=n\,q_{\mathrm{Jac}}(x)$.
Averaging over the $P$ probes and dividing by $n$ gives the first identity.
The probe law is uniform on the finite set $\{-1,+1\}^n$ and does not depend
on $\phi$. The expectation $\mathbb E_v$ is therefore a finite average, which
commutes with $\nabla_\phi$.

\emph{Part (ii).}
The cross terms are uncorrelated and have unit variance. A single probe
therefore gives
$\operatorname{Var}_v\lVert J(x)^\top v\rVert_2^2=4\sum_{i<j}(g_i^\top g_j)^2
=2\sum_{i\ne j}(g_i^\top g_j)^2$. The $P$ probes are independent, so averaging
over them divides this variance by $P$. The factor $1/n$ divides it by $n^2$.

\emph{Part (iii).}
The Cauchy--Schwarz inequality gives
$(g_i^\top g_j)^2\le\lVert g_i\rVert_2^2\lVert g_j\rVert_2^2$ for every pair,
and it also gives
$\sum_i\lVert g_i\rVert_2^4\ge\frac1n\bigl(\sum_i\lVert g_i\rVert_2^2\bigr)^2$.
Hence
\[
\sum_{i\ne j}(g_i^\top g_j)^2
\le\Bigl(\sum_i\lVert g_i\rVert_2^2\Bigr)^2-\sum_i\lVert g_i\rVert_2^4
\le\Bigl(1-\frac1n\Bigr)\Bigl(\sum_i\lVert g_i\rVert_2^2\Bigr)^2
=\Bigl(1-\frac1n\Bigr)n^2q_{\mathrm{Jac}}(x)^2 .
\]
Substituting this bound into part~(ii) gives the claim. If every $g_i$ equals
$\pm g$, both inequalities hold with equality.
\end{proof}

\noindent\textbf{Choice of Rademacher probes.}
Gaussian probes $v\sim\mathcal N(0,I_n)$ also satisfy $\mathbb E[vv^\top]=I_n$,
so they also give an unbiased estimator. For Gaussian probes, however, the
diagonal terms $v_i^2\lVert g_i\rVert_2^2$ fluctuate. At equal probe count, they
add $\frac{2}{n^2P}\sum_i\lVert g_i\rVert_2^4$ to the variance of part~(ii).
Rademacher signs remove this term because $v_i^2=1$ holds exactly.

\subsection{Minibatch implementation and normalization}
\label{app:jr1-batch}

Consider a minibatch $\mathcal B$ of real image--prompt pairs $(x,c)$. For each
pair, we draw $P$ probes $v_{x,1},\dots,v_{x,P}$ with the law of
Sec.~\ref{app:jr1-estimator}. The probes are independent across pairs, and they
are drawn afresh at every evaluation of the penalty. The minibatch penalty is
\begin{equation}
\widehat R_1^{\mathrm J}
=\frac{\gamma_{\mathrm J}}{2|\mathcal B|}\sum_{(x,c)\in\mathcal B}\widehat q_P(x)
=\frac{\gamma_{\mathrm J}}{2n|\mathcal B|P}\sum_{(x,c)\in\mathcal B}\sum_{p=1}^{P}
\bigl\lVert\nabla_x\bigl(v_{x,p}^\top D(x)\bigr)\bigr\rVert_2^2 ,
\label{eq:jr1-batch}
\end{equation}
where $D(x)$ is evaluated at the prompt $c$ paired with $x$. The second form
uses the identity $J(x)^\top v=\nabla_x\bigl(v^\top D(x)\bigr)$. By
Prop.~\ref{prop:jr1}(i), the expectation of $\widehat R_1^{\mathrm J}$
over the probes and over minibatches drawn from $p_r$ is $R_1^{\mathrm J}$ of
Eq.~(\plaineqref{eq:jr1}). The same holds for its gradient with respect to
$\phi$. The probes are independent across
images, so the probe noise in Eq.~(\plaineqref{eq:jr1-batch}) is averaged over
the $|\mathcal B|$ images as well as over the $P$ probes.

\noindent\textbf{Cost and order of operations.}
Each term of Eq.~(\plaineqref{eq:jr1-batch}) is one vector--Jacobian product
(VJP). A single backward pass of the scalar $v_{x,p}^\top D(x)$ through the
critic returns $J(x)^\top v_{x,p}$. The estimator therefore costs $P$ VJPs per
image, whereas the exact penalty costs $n$ VJPs per image (Sec.~\ref{sec:jr1}).
Pooled R1 in Eq.~(\plaineqref{eq:r1}) backpropagates the scalar
$\frac1n\mathbf1^\top D(x)$ in the same way. With $P=1$, the two penalties
therefore have the same cost, and they differ only in the weights of the logit
sum. Each probe gradient must be squared before the average over probes is taken.
Averaging the $P$ probe gradients first would give an estimate with expectation
$q_{\mathrm{Jac}}(x)/P$.

\noindent\textbf{Lazy scaling and reported coefficients.}
We apply the lazy regularization of Sec.~\ref{sec:gp}
\citep{karras2020stylegan2}. On every $k_{\mathrm{lazy}}$-th critic update, the
critic objective $\mathcal L_D+R_D$ uses $R_D=k_{\mathrm{lazy}}\widehat
R_1^{\mathrm J}$. On all other critic updates, $R_D=0$. Both configurations in Tab.~\ref{tab:hparams} use $P=1$ and
$k_{\mathrm{lazy}}=16$. The table reports $\gamma_{\mathrm J}$ as the
coefficient of the averaged penalty in Eq.~(\plaineqref{eq:jr1}), before the
factor $k_{\mathrm{lazy}}$ is applied. The reported values,
\scopechange{$\gamma_{\mathrm J}=7.5\times10^{-3}$ for SDXL and
$\gamma_{\mathrm J}=7.5\times10^{-4}$ for SD3.5-M}, therefore require no
conversion.

\section{Related Work}
\label{app:related}

\providecommand{\rwsub}[1]{\textbf{\emph{#1}}}

\noindent\textbf{Few-step distillation of text-to-image diffusion models.}	
\rwsub{Multi-objective or multi-stage training.} The existing methods mostly use multiple training objectives and often consist of multiple training stages. Hyper-SD employs a six-phase hybrid training pipeline: four stages of trajectory-segmented consistency distillation, followed by human-feedback optimization and DMD-based one-step score distillation \citep{ren2024hypersd}. For the one-step SDXL model, DMD2 uses a two-stage pipeline consisting of brief regression pretraining followed by joint distribution-matching and adversarial training stabilized by a two-timescale update rule \citep[Sec. F.4]{yin2024dmd2}; DMDX is also a two-stage pipeline, consisting of adversarial distillation pretraining on teacher-generated ODE pairs with hybrid latent- and pixel-space discriminators, along with adversarial distribution-matching fine-tuning \citep{lu2025adm}. SiDA jointly trains a one-step generator with score-identity and adversarial losses, reusing the fake score network’s encoder as a parameter-free discriminator \citep{zhou2024sida}. ADD (SDXL-Turbo) uses single-stage joint training with teacher-based score distillation and a DINOv2-feature adversarial loss~\citep{sauer2024add}. 
SDXL-Lightning employs a five-stage progressive distillation pipeline: one MSE-based trajectory-distillation stage followed by four adversarial stages \citep{lin2024sdxllightning}. NitroFusion uses dynamic adversarial refinement with a pool of conditional and unconditional global–local discriminators, \emph{relying on a DMD2 one-step student and four-step teacher \citep{chen2025nitrofusion}, so separate score-based pretraining is a prerequisite}. SenseFlow combines score-based distribution matching, trajectory-based intra-segment guidance, and timestep-aware VFM adversarial supervision \citep{ge2026senseflow}. SwD produces a progressive few-step generator that increases latent resolution across denoising steps, combining patch-level MMD with score-based distribution matching and adversarial loss \citep{swd2025}.
\rwsub{Adversarial objective as a single loss.} Both APT \citep{lin2025apt} and LADD \citep{sauer2024ladd} perform latent-space adversarial training and derive \textit{multilevel discriminator features from pretrained diffusion models}. 
LADD freezes the diffusion teacher, attaches discriminator heads to its intermediate features, and uses teacher-generated samples as real targets. APT initializes a separate discriminator from the pretrained diffusion model, adds multilevel output heads, and fine-tunes it against real data. D2O~\citep{zheng2025d2o} converts a pretrained diffusion U-Net into a one-step generator through pure adversarial fine-tuning; it is evaluated \textit{only on low-resolution unconditional and class-conditional benchmarks and does not address text-to-image synthesis}. Unlike APT and LADD, both of which use diffusion-based critics—trainable in APT and frozen in LADD—ROAD uses a frozen foundation model (e.g., DINOv3) with lightweight, prompt-conditioned Transformer heads. Compared with D2O, we propose Transformer heads, refreshed mixing, and JR1, thereby scaling pure adversarial training to large-scale text-to-image generation.

\noindent\textbf{Projected GAN and adjacent critic architectures.}
\rwsub{Convolutional heads over frozen encoders.} Projected GAN trains shallow
convolutional heads over a frozen pretrained encoder, whose features are scattered by fixed random
cross-channel and cross-scale mixings \citep{sauer2021projected}; StyleGAN-XL scales
it to ImageNet \citep{sauer2022styleganxl}, inheriting Projected GAN's heads, and StyleGAN-T to text-to-image with
1-D convolutional heads over frozen ViT tokens \citep{sauer2023stylegant}.
Every head in this line is convolutional, the only reported head comparison
being 1-D versus 2-D convolution \citep{sauer2023stylegant}, and positions are
scored independently, so the critic is vector-valued, a property ROAD keeps and
regularizes. As far as we know, no work in this line
attaches a Transformer head to frozen features.
\rwsub{Transformer discriminators from scratch.} ViTGAN scores a ViT on
raw pixels from the class token, stabilized by $L_2$ attention with tied query
and key projections, and applies spectral normalization only to initialize the
linear weights \citep{lee2022vitgan}. It reports that the vanilla ViT discriminator trains
markedly worse than the stabilized one. TransGAN's convolution-free discriminator ingests
patches of several sizes \citep{jiang2021transgan}. 
Other work pairs Transformer generators with convolutional discriminators or
embeds attention inside a CNN
\citep{zhang2022styleswin,zhao2021hit,park2022styleformer,hudson2021ganformer}.
Our Lipschitz controls build on analyses of
attention \citep{kim2021lipschitz,qi2023lipsformer}, scaled cosine attention
\citep{liu2022swinv2}, and spectral normalization \citep{miyato2018sn}; their
combination into the projected critic's ViT head with an explicit bound (Lemma~\ref{lem:lip}) and its conditioning on individual prompt
tokens are new.

\noindent\textbf{Random feature mixing and time-varying discriminators.}
\rwsub{Fixed random mixing.} Projected GAN samples the mixing once at
initialization and holds it fixed for the whole run
\citep{sauer2021projected}, and every method in that line inherits the choice;
the consequence of one draw persisting through long training, a critic that
specializes to it, is, to our knowledge, unexamined. Previously, multiple-random-projection GANs train
simultaneous discriminators on a fixed ensemble of input projections
\citep{neyshabur2017mrp}, as do random-projection ensembles for classification
\citep{cannings2017random}; in both, the set of views is fixed in advance.
\rwsub{Renewing trainable parameters.} The closest precedents do alter the
discriminator over time. NitroFusion keeps a pool of trainable heads, samples a
subset per iteration, and re-initializes a small fraction of it, and SD3.5-Flash
follows with stochastic head renewal
\citep{chen2025nitrofusion,bandyopadhyay2025sd35flash}; SDXL-Lightning resets
its discriminator between progressive-distillation stages rather than within
one \citep{lin2024sdxllightning}.
\emph{Outside GANs,} re-initializing trainable parameters is
the standard remedy for lost plasticity, through last-layer resets, recycling
of dormant neurons, shrink-and-perturb, and continual backpropagation
\citep{nikishin2022primacy,sokar2023dormant,ash2020warm,dohare2024plasticity}.
Our ROAD instead re-draws the frozen measurement basis, the random mixings, at
deterministic intervals while every trainable parameter
persists, so it targets not lost plasticity in the head but the head's
specialization to a single persistent realization. 

\noindent\textbf{Discriminator gradient regularization.}
\rwsub{Zero-centered penalties in GANs.} Zero-centered discriminator-gradient
penalties stabilize GAN training and admit local convergence analyses under
suitable assumptions \citep{roth2017stabilizing,mescheder2018r1}, and R3GAN
combines R1 and R2 \citep{huang2024r3gan}. D2O applies
pooled R1 to the averaged logit \citep{zheng2025d2o}, ADD and P2D regularize in
feature space \citep{sauer2024add,chong2024p2d}, and APT uses a
finite-difference approximation \citep{lin2025apt}. Each constrains a scalar
critic output, or an aggregate of a vector-valued one, which is the assumption our 
ROAD removes.
\rwsub{Jacobian regularization in classification.}
\citet{novak2018sensitivity} study the empirical relationship between
input--output Jacobian sensitivity and generalization, whereas
\citet{jakubovitz2018jacobian} explicitly regularize the logit Jacobian to
improve adversarial robustness, and
\citet{varga2018gradient,hoffman2019jacobian} develop efficient randomized
Jacobian regularization. We apply a squared Jacobian-norm
penalty to real images in a dense adversarial game, motivated specifically by the limitation of pooled R1, which aggregates the vector-valued logit
(Sec.~\ref{sec:jr1}). Through Rademacher Hutchinson estimation \citep{hutchinson1989} we 
recover the Jacobian energy in expectation with one input
VJP per probe.

\needspace{12\baselineskip}
\section{Implementation Details}
\label{app:more-exp}

\subsection{Critic design}
\label{app:road-head}

{\color{revtext}%
This subsection specifies the critic heads of Sec.~\ref{sec:head}. The critic has
one head per feature level, so it has $L=4$ heads (Sec.~\ref{sec:prelim}). Head
$\ell$ reads the mixed tokens $\hat f_\ell$ of Eq.~(\plaineqref{eq:mixing}),
which the random mixing computes from the features of the frozen encoder $F$
(Sec.~\ref{app:mixing-op}).

\noindent\textbf{Level and depth schedules.}
Let $L_F$ be the number of blocks of the frozen encoder, indexed from $0$ to
$L_F-1$. The features $f_\ell$ of Eq.~(\plaineqref{eq:mixing}) are the tokens
output by block $t_\ell$ of $F$, and the head of level $\ell$ has $B_\ell$
Transformer blocks, where
\[
t_\ell=\Bigl\lfloor\frac{\ell L_F}{L}\Bigr\rfloor-1,
\qquad
B_\ell=\min\bigl(L-\ell+1,\,4\bigr),
\qquad
\ell=1,\dots,L.
\]
The first rule divides $F$ into $L$ contiguous spans of equal length and taps
the last block of each span, so the deepest level always taps the final block,
$t_L=L_F-1$. The second rule gives the deepest level a single block, because
its tokens are already highly semantic and need only light adaptation. The
depth then increases by one per level toward the shallowest level, which
receives the most reorganization. The cap of four blocks keeps the heads
shallow and limits the number of factors in the Lipschitz bound of
Eq.~(\plaineqref{eq:lip}). For the configuration used throughout the paper,
DINOv3 ViT-B with $L_F=12$, the taps are $(2,5,8,11)$, and the depths are
$(B_1,\dots,B_4)=(4,3,2,1)$, which gives ten trainable blocks in total. Applied
to a $24$-block ViT-L, the same rule gives the taps $(5,11,17,23)$. The depths
depend only on $L$, so they are unchanged.

\noindent\textbf{Transformer block.}
A block follows the pre-norm Transformer design. It contains a self-attention
branch followed by an FFN branch, and the first block of every head contains a
cross-attention branch between the two. The block departs from the standard
design in four ways. First, RMSNorm replaces LayerNorm. Second, queries and keys
have width $d/2$, where $d$ is the token width. Third, the FFN uses GELU with an
expansion ratio of $2$ instead of $4$. Fourth, self- and cross-attention use
cosine attention with a clamped temperature. All learned linear maps of the head
are spectrally normalized (Sec.~\ref{sec:head}), and token positions are encoded
with rotary position embeddings (Tab.~\ref{tab:hparams}).
Sec.~\ref{app:lipschitz} analyzes the resulting head.

\noindent\textbf{Cross-attention.}
The cross-attention branch has the same cosine form. Its queries are the patch
tokens of the level, and its keys and values are computed from the prompt. The
summary token is excluded from the queries, so conditioning acts on the patch
tokens only. The token sequences of all text encoders are concatenated along the
channel axis into a single sequence and normalized without affine parameters.
Two spectrally normalized linear maps then project this sequence to the widths
of the visual queries and values. Each level has its own pair of text
projections, so different levels can read the same prompt differently.

\noindent\textbf{Attention temperature.}
Each attention head $h$ has a learnable temperature $s_h$. It is stored in the
log domain, initialized at $\log s_{\mathrm{init}}$, and clamped at
$\log s_{\max}$ before exponentiation, with $s_{\mathrm{init}}=10$ and
$s_{\max}=100$. This parametrization guarantees $0<s_h\le s_{\max}$. Queries and
keys are $\ell_2$-normalized, the normalized queries are multiplied by $s_h$,
and the result is passed to scaled dot-product attention. Normalization,
scaling, and attention are computed in single precision and cast back to the
training precision. This keeps the clamp exact under mixed-precision training.

\noindent\textbf{Residual scales and gate.}
Each self-attention and FFN branch of block $b$ has its own learnable
per-channel scale, $\alpha^{(b)}_{\mathrm{sa}}\in\mathbb R^d$ or
$\alpha^{(b)}_{\mathrm{ffn}}\in\mathbb R^d$. For example, the self-attention
branch maps its input $X$ to
$X+\alpha^{(b)}_{\mathrm{sa}}\odot\mathrm{SA}\bigl(\mathrm{RMSNorm}(X)\bigr)$,
where $\mathrm{SA}$ is the cosine self-attention sublayer and the product acts
on the channels of every token. Both scales are initialized to the constant
$1/B_\ell$. A head of $B_\ell$ blocks therefore starts as a small perturbation
of the identity whose size is set by its depth, and
Sec.~\ref{app:lipschitz} shows that this initialization keeps the bound of
Eq.~(\plaineqref{eq:lip}) below a constant that does not depend on the depth.
The cross-attention branch has the same form with a gate $\alpha_{\mathrm{ca}}$
in place of the residual scale. The gate is initialized to zero, so conditioning
contributes nothing at initialization. The output projection of the
cross-attention branch is initialized normally (Sec.~\ref{app:hparams}), so
the gate receives a nonzero gradient from the first update.

\noindent\textbf{Logit head.}
Every head ends in a final RMSNorm and a linear logit head, which applies one
spectrally normalized linear map to each token independently. Each patch token
and the summary token therefore yields one logit. Level $\ell$ thus emits
$N_\ell$ logits, and concatenating the $L$ levels gives the
$n=\sum_\ell N_\ell$ logits of Eq.~(\plaineqref{eq:dense}). 
We also
implemented a convolutional logit head. It reshapes the patch tokens to their
spatial grid and applies consecutive spectrally normalized $4\times4$ grouped
convolutions, which reduce the grid to a $5\times5$ logit map. In this head,
the summary token can contribute one further logit through its own linear map.
In our experiments, the two logit heads performed comparably. We therefore
report only the linear head, which is also the head analyzed in
Sec.~\ref{app:lipschitz}.
}

\subsection{Dataset and evaluation protocols}
\label{app:protocol}

{\color{revtext}%
\noindent\textbf{Real-image training set.}
\scopechange{We use the fixed
\href{https://huggingface.co/tianweiy/DMD2/tree/main/data/laion_vae_latents}{dataset of 500K real image--caption pairs} constructed by
DMD2~\citep{yin2024dmd2} from LAION-Aesthetics
5.5+~\citep{schuhmann2022laion5b} for both SDXL and SD3.5-M. The dataset
discards unsafe images and images whose width or height is below 1024 pixels.
Model-specific decoding, resizing, and feature extraction are applied only
after sampling from this common pool.}

\noindent\textbf{COCO evaluation sets.}
\scopechange{For both models, we follow the DMD2 SDXL protocol without
resampling its prompts or reference images. We use the fixed draw of 10K
distinct images from the \href{https://cocodataset.org/}{COCO-2014 validation
set} and one associated caption per image, as stored in the
\href{https://huggingface.co/tianweiy/DMD2/blob/main/data/coco/coco10k.zip}{\texttt{coco10k/all\_prompts.pkl} file and paired
\texttt{coco10k/subset} directory}. We generate one $1024\times1024$ image per
prompt. The prompt order and sampling seeds are fixed and shared by all
methods.}

\noindent\textbf{FID and CLIP score.}
\scopechange{We compute FID with the DMD2 implementation
of \texttt{clean-fid} in its clean mode. Before Inception feature extraction,
the longer spatial dimension is center-cropped to form a square and the image
is resized to $512\times512$ with Lanczos interpolation. The resulting FID
compares 10K generated images with their corresponding 10K real images. We
compute CLIP score with the DMD2 evaluator for both
models. It center-crops each generated image, resizes it to $256\times256$, and
applies the OpenCLIP ViT-g/14 model pretrained on LAION-2B
(\texttt{laion2b\_s12b\_b42k}). The text encoder receives the original caption
prefixed by ``A photo depicts''. We report the mean cosine similarity over all
generated image--caption pairs.}

\noindent\textbf{Auxiliary metrics.}
All auxiliary COCO metrics use the same generated images and the unmodified
captions used by the corresponding FID evaluation. We report the mean raw
score of LAION-Aesthetics Predictor V2, the
official HPSv2 score~\citep{wu2023hpsv2}, and the score produced by
\texttt{yuvalkirstain/PickScore\_v1}~\citep{kirstain2023pickscore}. HPSv2 and
PickScore are shown on the $\times100$ scale used in the tables. ImageReward is
the mean raw score from \texttt{ImageReward-v1.0}
~\citep{xu2023imagereward}. GenEval~\citep{ghosh2023geneval}, reported only for
SD3.5-M, is evaluated separately because it defines its own prompt suite. We
use the released 553 prompts, generate four images per prompt, run the official
Mask2Former/OpenCLIP evaluator with its default thresholds, and report the
unweighted mean accuracy over its six compositional tasks.

\noindent\textbf{Checkpoint selection.}
We evaluate ROAD checkpoints every 500 training steps and select the checkpoint
with the lowest FID. Every other metric in a row is computed from that same
checkpoint; we never select a separate checkpoint for alignment, aesthetics,
human preference, or GenEval. Public baselines are evaluated with the same
prompt files, reference images, preprocessing, metric implementations, and
reporting scales as the corresponding ROAD model.
}

\subsection{Training hyperparameters}
\label{app:hparams}

\scopechange{Tab.~\ref{tab:hparams} lists the training configurations of SDXL
and SD3.5-M.} 
On SDXL the critic scores the summary (cls) token
alongside the patch tokens, the register tokens of DINOv3 are
dropped, the cross-attention gate is zero-initialized while its
output projection is normally initialized, and the critic and student
are updated at a 1:1 ratio. Each of the four projected feature levels
receives its own head, with depths decreasing from shallow to deep,
$B_\ell{=}(4,3,2,1)$: shallower backbone representations receive more
trainable reorganization steps, while already highly semantic deep
representations need only light adaptation; every head ends in a
token-wise linear logit map (block structure in
Sec.~\ref{app:lipschitz}). SD3.5-M uses the same head design as
SDXL.

\begin{table}[h]
\centering
\caption{\scopechange{\textbf{Training configurations of SDXL and SD3.5-M.}}}
\label{tab:hparams}
\begin{minipage}{0.88\linewidth}
\centering
\footnotesize
\setlength{\tabcolsep}{4pt}
\renewcommand{\arraystretch}{1.1}
\begin{tabular}{@{}>{\raggedright\arraybackslash}p{0.30\linewidth}
                    >{\raggedright\arraybackslash}p{0.32\linewidth}
                    >{\raggedright\arraybackslash}p{0.32\linewidth}@{}}
\toprule
\scopechange{Setting} & \scopechange{SDXL} & \scopechange{SD3.5-M} \\
\midrule
Teacher model & SDXL-Base-1.0 & SD3.5-Medium \\
Teacher arch. & UNet & MMDiT \\
\mbox{Conditioning timestep / $\sigma$} & $t{=}999$ & $\sigma{=}1.0$ \\
GPUs $\times$ batch & $8{\times}8 = 64$ & \scopechange{$8{\times}4 = 32$} \\
G learning rate & $5{\times}10^{-7}$ & $5{\times}10^{-7}$ \\
D learning rate & $5{\times}10^{-6}$ & $5{\times}10^{-6}$ \\
Optimizer & AdamW, $\beta{=}(0,0.99)$, no weight decay & AdamW, $\beta{=}(0,0.99)$, no weight decay \\
\mbox{Gradient clipping (G / D)} & 10 / 10 & \scopechange{1.0 / 10.0, global $L_2$ norm} \\
Position / linear norm & RoPE / SN & \scopechange{RoPE / SN} \\
Refresh interval $\tau$ & 5k & 5k \\
Gradient penalty & \scopechange{$\gamma_{\mathrm{J}}{=}7.5{\times}10^{-3}$ with one Rademacher probe; lazy interval $k_{\mathrm{lazy}}{=}16$}. & \scopechange{$\gamma_{\mathrm{J}}{=}7.5{\times}10^{-4}$ with one Rademacher probe; lazy interval $k_{\mathrm{lazy}}{=}16$}. \\
\bottomrule
\end{tabular}
\end{minipage}
\end{table}

\needspace{6\baselineskip}
\section{Qualitative Results}
\label{app:qualitative}

\subsection{Prompts for Qualitative Figures}
\label{app:qualitative-prompts}
\begingroup\small
We use the following prompts for Fig.~\ref{fig:teaser}, ordered by the top-left corner of each image, from top to bottom and then left to right:
\begin{itemize}
  \item A steampunk woman wearing brass goggles stands by a sunlit window in an ornate Victorian room, beside a large vintage mechanical instrument.
  \item bedroom melting into the galaxy, penthouse apartment bedroom with big windows, unsplash, photo of the day
  \item mystical fantasy cat of many colors
  \item Ship in the form of a fish tank with fish and corals
  \item crowd of people on top of a skyscraper at night standing near a spotlight cityscape around them, from high angle looking down on rooftop, bladerunner
  \item Photo English country palace of aristocrats on a background of nature painted in watercolor on paper
  \item a unicorn standing in a meadow, morning sun in the background.
  \item concept art of a handsome and regal looking evil sorcerer with a circlet of thorns on his head, magic and lightning
  \item A glass teapot filled with blooming jasmine tea on a wooden table by a rainy window.
  \item A teddy bear reading a newspaper on a park bench.
\end{itemize}
\endgroup

The prompts for the SDXL comparison in Fig.~\ref{fig:road-sdxl-comparison}, from top to bottom, are:
\begin{itemize}
  \item isometric beach scenery
  \item Russian-inspired building with towers and spires, smokestacks spewing smoke in the background against a red sky.
  \item A cute mouse with a dress in a kitchen, fantasy, cinematic, door
  \item An old man playing chess against a robot.
  \item steampunk chameleon automaton bot, close up, realistic, highly detail
  \item An oil painting of a young woman dressed in a flowing, white dress sitting on a chair with a red drape. An easel rests on her knees and she is evidently drawing. She is gazing directly at the observer.
\end{itemize}

The prompts for the SD3.5-M comparison in Fig.~\ref{fig:road-sd35m-comparison}, from top to bottom, are:
\begin{itemize}
  \item an anthropomorphic owl courier, monochromatic with subtle red accents
  \item A futuristic city with flying cars, tall glass towers covered in plants, and a large park in the center, bright daylight.
  \item A bookshelf filled with colorful books, a potted plant and a small globe.
  \item A photo of an cosplay girl dressed as a fantasy warrior, with a full suit of armor and a sword at her side, standing on a hilltop with a dramatic sunset behind her
  \item bedroom melting into the galaxy, penthouse apartment bedroom with big windows, unsplash, photo of the day
  \item A teddy bear reading a newspaper on a park bench.
\end{itemize}

\clearpage
\subsection{Qualitative Samples from ROAD}
\label{app:qualitative-samples}
Figures~\ref{fig:road-sdxl-samples} and~\ref{fig:road-sd35m-samples} show
qualitative generations from ROAD distilled on SDXL and SD3.5-M, respectively.

\begin{center}
\begin{minipage}{\linewidth}
\captionsetup{hypcap=false}
  \centering
  \includegraphics[width=\textwidth,height=0.79\textheight,keepaspectratio]{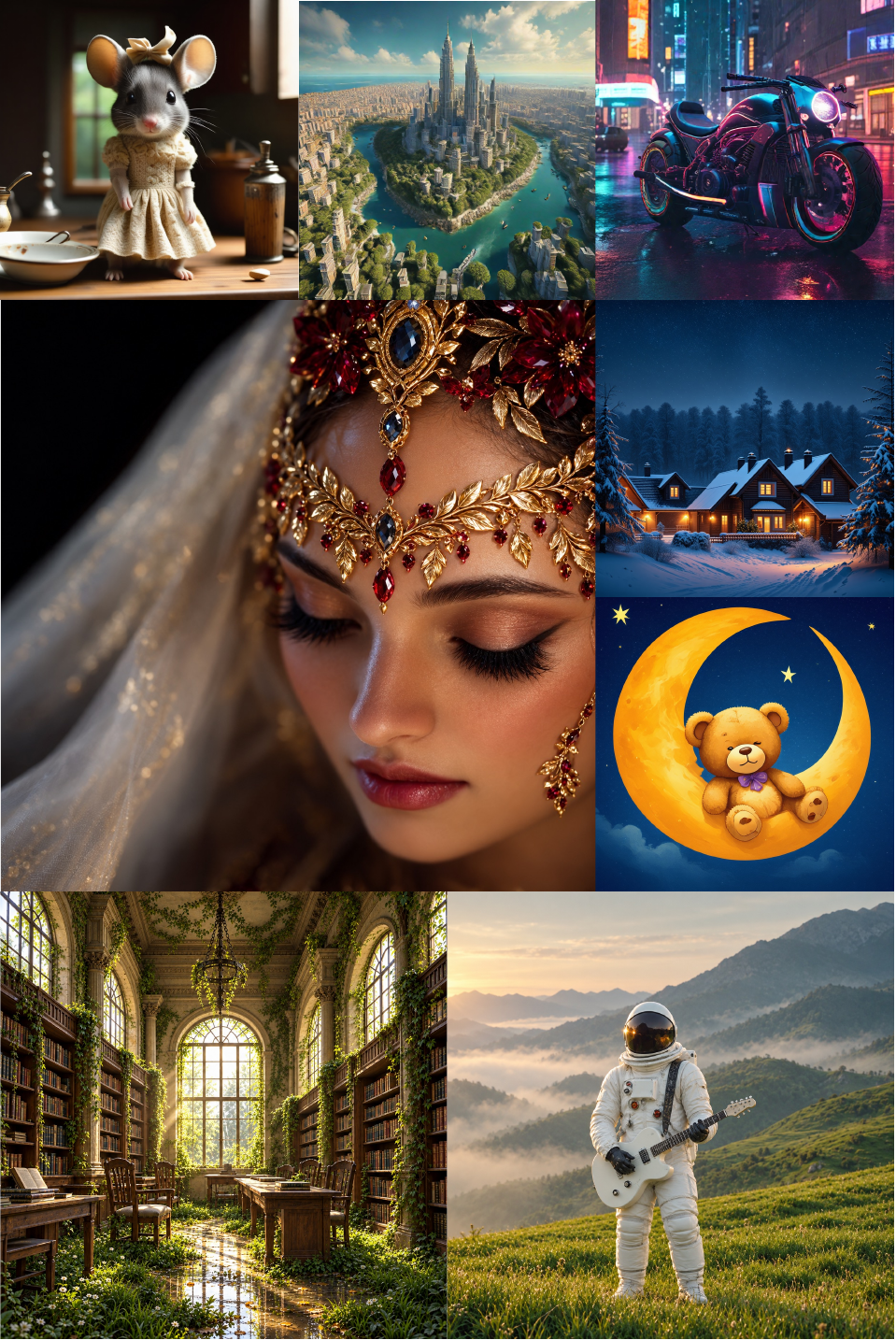}
  \captionof{figure}{\textbf{ROAD samples on SDXL.} Qualitative generations from the distilled model.}
  \label{fig:road-sdxl-samples}
\end{minipage}
\end{center}
\clearpage
\begin{figure}[!p]
  \centering
  \includegraphics[width=\textwidth,height=0.9\textheight,keepaspectratio]{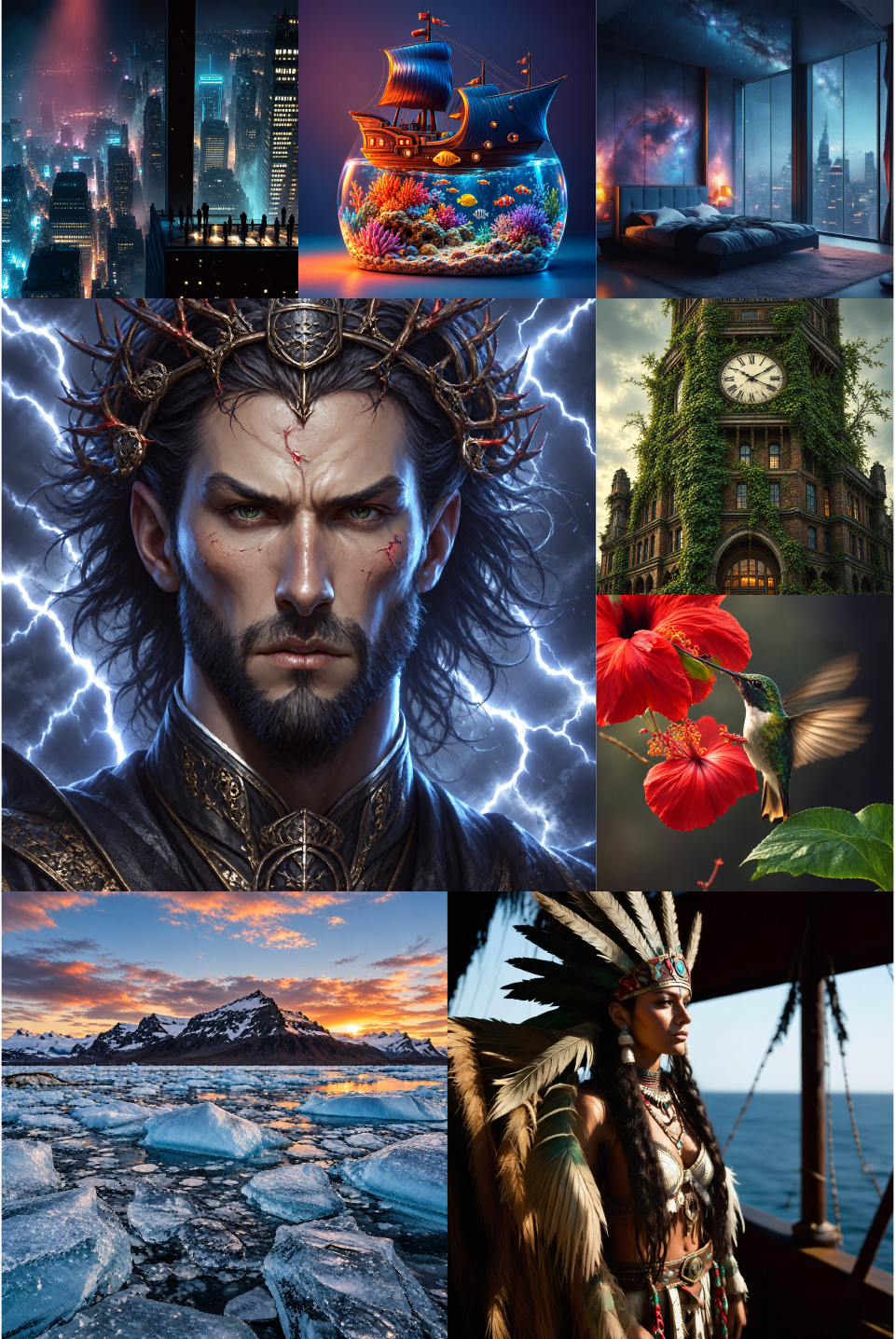}
  \caption{\textbf{ROAD samples on SD3.5-M.} Qualitative generations from the distilled model.}
  \label{fig:road-sd35m-samples}
\end{figure}
\FloatBarrier

\newlength{\RoadSlotWidth}
\newcommand{\RoadComparisonRow}[2]{%
  \vbox{%
    \vskip 2pt
    \hbox to \linewidth{\parbox[c]{\linewidth}{\centering\fontsize{8.5}{10.5}\selectfont #1}}%
    \vskip 2pt
    \nointerlineskip
    \hbox to \linewidth{\includegraphics[width=\linewidth]{#2}}%
  }\relax
}
\newcommand{\RoadMethodLabel}[1]{%
  \parbox[t]{\RoadSlotWidth}{\centering\footnotesize #1}%
}
\newcommand{\RoadComparisonGrid}[6]{%
  \par
  \begingroup
  \setlength{\RoadSlotWidth}{\dimexpr\linewidth/5-8pt/5\relax}%
  #1%
  \hbox to \linewidth{%
    \RoadMethodLabel{#2}\hfil
    \RoadMethodLabel{#3}\hfil
    \RoadMethodLabel{#4}\hfil
    \RoadMethodLabel{#5}\hfil
    \RoadMethodLabel{#6}}%
  \par
  \endgroup
}

\subsection{Qualitative Comparison with Other Distillation Methods}
\label{app:qualitative-comparison}

\begin{center}
\begin{minipage}{0.90\textwidth}
\captionsetup{hypcap=false,skip=5pt}
\RoadComparisonGrid{%
  \RoadComparisonRow{isometric beach scenery}{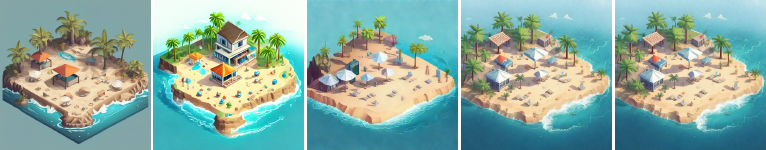}
  \RoadComparisonRow{Russian-inspired building with towers and spires, smokestacks spewing smoke in the background against a red sky.}{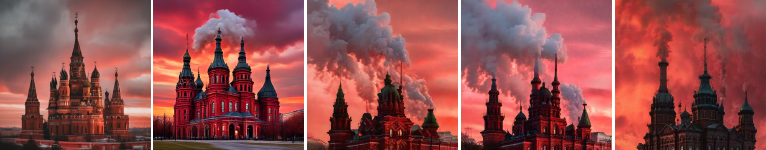}
  \RoadComparisonRow{A cute mouse with a dress in a kitchen, fantasy, cinematic, door}{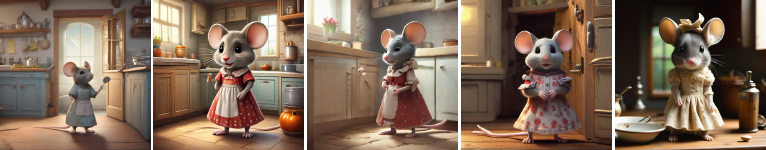}
  \RoadComparisonRow{An old man playing chess against a robot.}{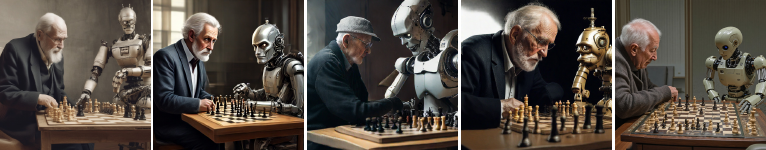}
  \RoadComparisonRow{steampunk chameleon automaton bot, close up, realistic, highly detail}{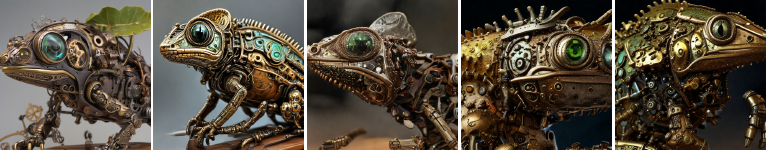}
  \RoadComparisonRow{An oil painting of a young woman dressed in a flowing, white dress sitting on a chair with a red drape. An easel rests on her knees and she is evidently drawing. She is gazing directly at the observer.}{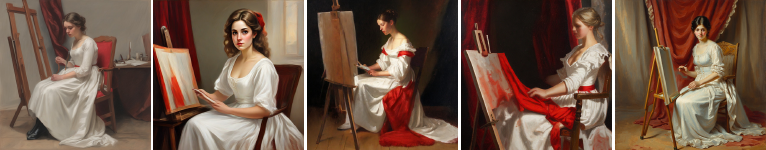}
}{SDXL}{Hyper-SDXL}{DMD2}{NitroSD-Realism}{ROAD}
\captionof{figure}{\textbf{Qualitative SDXL comparison under shared prompts.} The SDXL teacher uses 40 inference steps; all four distilled methods use one step.}
\label{fig:road-sdxl-comparison}
\end{minipage}
\end{center}

\clearpage
\begin{center}
\begin{minipage}{0.90\textwidth}
\captionsetup{hypcap=false,skip=5pt}
\RoadComparisonGrid{%
  \RoadComparisonRow{an anthropomorphic owl courier, monochromatic with subtle red accents}{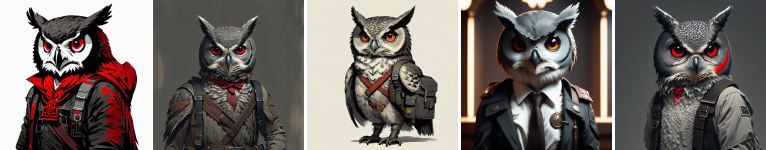}
  \RoadComparisonRow{A futuristic city with flying cars, tall glass towers covered in plants, and a large park in the center, bright daylight.}{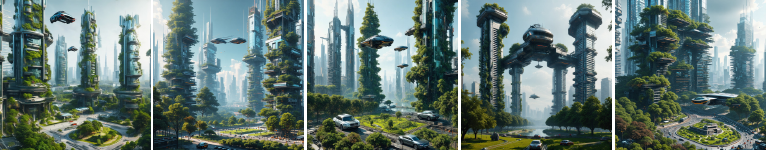}
  \RoadComparisonRow{A bookshelf filled with colorful books, a potted plant and a small globe.}{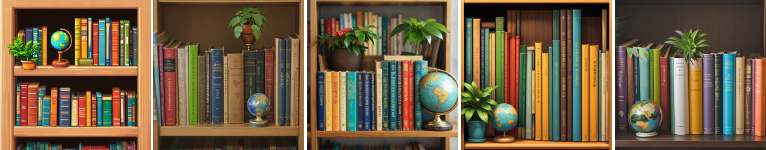}
  \RoadComparisonRow{A photo of an cosplay girl dressed as a fantasy warrior, with a full suit of armor and a sword at her side, standing on a hilltop with a dramatic sunset behind her}{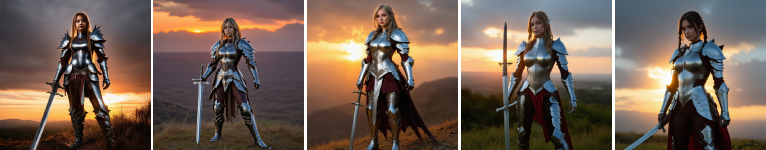}
  \RoadComparisonRow{bedroom melting into the galaxy, penthouse apartment bedroom with big windows, unsplash, photo of the day}{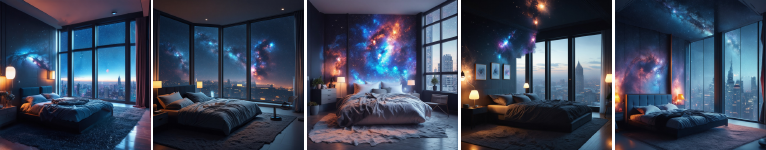}
  \RoadComparisonRow{A teddy bear reading a newspaper on a park bench.}{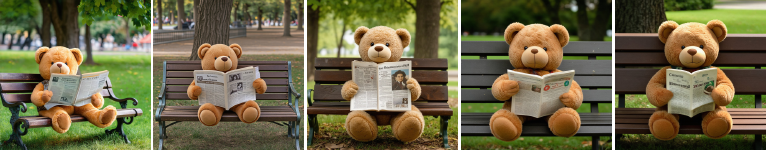}
}{SD3.5-M\\{\scriptsize 40 steps / 80 NFE}}{SiD-DiT\\{\scriptsize 4 steps / 4 NFE}}{SwD\\{\scriptsize 4 steps / 4 NFE}}{SenseFlow\\{\scriptsize 4 steps / 4 NFE}}{ROAD\\{\scriptsize 1 step / 1 NFE}}
\captionof{figure}{\textbf{Qualitative SD3.5-M comparison under shared prompts.} Inference steps and network evaluations (NFE) are given below the columns.}
\label{fig:road-sd35m-comparison}
\end{minipage}
\end{center}
\clearpage

\end{document}